\documentclass[]{custom}

%%%%% NEW MATH DEFINITIONS %%%%%

\usepackage{amsmath,amsfonts,bm}

% Mark sections of captions for referring to divisions of figures

% Highlight a newly defined term

% Figure reference, lower-case.

% Figure reference, capital. For start of sentence

% Section reference, lower-case.

% Section reference, capital.

% Reference to two sections.

% Reference to three sections.

% Reference to an equation, lower-case.
\def\eqref#1{equation~\ref{#1}}
% Reference to an equation, upper case

% A raw reference to an equation---avoid using if possible

% Reference to a chapter, lower-case.

% Reference to an equation, upper case.

% Reference to a range of chapters

% Reference to an algorithm, lower-case.

% Reference to an algorithm, upper case.

% Reference to a part, lower case

% Reference to a part, upper case

\def\1{\bm{1}}

% Random variables

% rm is already a command, just don't name any random variables m

% Random vectors

% Elements of random vectors

% Random matrices

% Elements of random matrices

% Vectors

% Elements of vectors

% Matrix

% Tensor
\DeclareMathAlphabet{\mathsfit}{\encodingdefault}{\sfdefault}{m}{sl}
\SetMathAlphabet{\mathsfit}{bold}{\encodingdefault}{\sfdefault}{bx}{n}

% Graph

% Sets

% Don't use a set called E, because this would be the same as our symbol
% for expectation.

% Entries of a matrix

% entries of a tensor
% Same font as tensor, without \bm wrapper

% The true underlying data generating distribution

% The empirical distribution defined by the training set

% The model distribution

% Stochastic autoencoder distributions

 % Laplace distribution

\newcommand{\E}{\mathbb{E}}

\newcommand{\R}{\mathbb{R}}

% Wolfram Mathworld says $L^2$ is for function spaces and $\ell^2$ is for vectors
% But then they seem to use $L^2$ for vectors throughout the site, and so does
% wikipedia.

 % See usage in notation.tex. Chosen to match Daphne's book.

\DeclareMathOperator*{\argmin}{arg\,min}

\usepackage{wrapfig}
\usepackage[font=small,labelfont=bf]{caption}
\usepackage{overpic}
\PassOptionsToPackage{numbers, sort&compress}{natbib}
\usepackage{amsmath}
\usepackage{amsthm}
\usepackage{amssymb}
\usepackage{colonequals}
\usepackage[ruled,vlined]{algorithm2e}
\usepackage{thmtools}
\usepackage{xcolor}
\usepackage{url}
% Define the format for equation references
\crefformat{equation}{(#2#1#3)}
\crefrangeformat{equation}{(#3#1#4) to (#5#2#6)}
\crefmultiformat{equation}{(#2#1#3)}{ and (#2#1#3)}{, (#2#1#3)}{ and (#2#1#3)}

% This makes the Proposition box look very intentional and bold
\newtcolorbox{propbox}{
    enhanced,
    colback=accentgreen!4!bonebg!60!white,      % Changed from juniperlight to white
    colframe=accentgreen, 
    boxrule=0pt,
    leftrule=5pt,      
    arc=0pt,            % Sharp corners look better without background fill
    left=8pt, right=8pt, top=8pt, bottom=8pt,
    fonttitle=\bfseries\sffamily,
    coltitle=juniper
}

\declaretheorem[name=Proposition]{proposition}
\theoremstyle{remark}
\newtheorem{remark}{Remark}

% \usepackage{fontspec}
% \setmainfont{Rubik}[
%     Path = ./,
%     Extension = .ttf,
%     UprightFont = *-Regular,
%     BoldFont = *-Bold,
%     ItalicFont = *-Italic,
%     BoldItalicFont = *-BoldItalic
% ]
% \usepackage[sfdefault]{lato}    % Lato - similar warmth to Rubik

\title{Tilt Matching for Scalable Sampling and Fine-Tuning}

\author[1,2]{Peter Potaptchik}
\author[1,3]{Cheuk-Kit Lee}
\author[1,3,4]{Michael S. Albergo}
\contribution[1]{Harvard University}
\contribution[2]{University of Oxford}
\contribution[3]{Kempner Institute}
\contribution[4]{IAIFI}

\abstract{We propose a simple, scalable algorithm for using stochastic interpolants to sample from unnormalized densities and for fine-tuning generative models. The approach, Tilt Matching, arises from a dynamical equation relating the flow matching velocity to one targeting the same distribution tilted by a reward, implicitly solving a stochastic optimal control problem. The new velocity inherits the regularity of stochastic interpolant transports while also being the minimizer of an objective with strictly lower variance than flow matching itself. The update to the velocity field can be interpreted as the sum of all joint cumulants of the stochastic interpolant and copies of the reward, and to first order is their covariance. The algorithms do not require any access to gradients of the reward or backpropagating through trajectories of the flow or diffusion. We empirically verify that the approach is efficient and highly scalable, providing state-of-the-art results on sampling under Lennard-Jones potentials and is competitive on fine-tuning Stable Diffusion, without requiring reward multipliers. It can also be straightforwardly applied to tilting few-step flow map models.}

\begin{document}

\maketitle
\vspace{-0.5cm}
\begin{figure*}[b!]
  \centering
  % --- Tuning knobs (adjust to taste) ---
  \setlength{\tabcolsep}{2.2pt}        % horizontal padding between columns
  \renewcommand{\arraystretch}{0}      % remove extra vertical padding in rows
  \captionsetup{font=small}

  % --- spacing controls ---
  \newcommand{\pairgap}{\hspace{1.2pt}}   % tiny gap within (Base,Ours)
  \newcommand{\groupgap}{\hspace{6pt}}    % larger gap between pairs
  \newcommand{\imgw}{0.155}
  
  % --- Helper for Prompt Rows (New) ---
  % Usage: \promptrow{Your prompt text here}
  \newcommand{\promptrow}[1]{%
    \multicolumn{6}{c}{%
      % parbox allows text to wrap if it gets too long
      \parbox{0.96\textwidth}{\centering \vspace{3pt} \small \textit{Prompt: #1}}%
    } \\ \addlinespace[3pt]%
  }
  
  % --- Grid of images + Headers integrated ---
  \begin{tabular}{@{}c@{\pairgap}c@{\groupgap}c@{\pairgap}c@{\groupgap}c@{\pairgap}c@{}}
  
    % ---------------- Header Row ----------------
    {\small \textbf{Base}} & {\small \textbf{Tilt Matching}} &
    {\small \textbf{Base}} & {\small \textbf{Tilt Matching}} &
    {\small \textbf{Base}} & {\small \textbf{Tilt Matching}} \\
    \addlinespace[3pt]

    % ---------------- Row 1 ----------------
    \includegraphics[width=\imgw\textwidth]{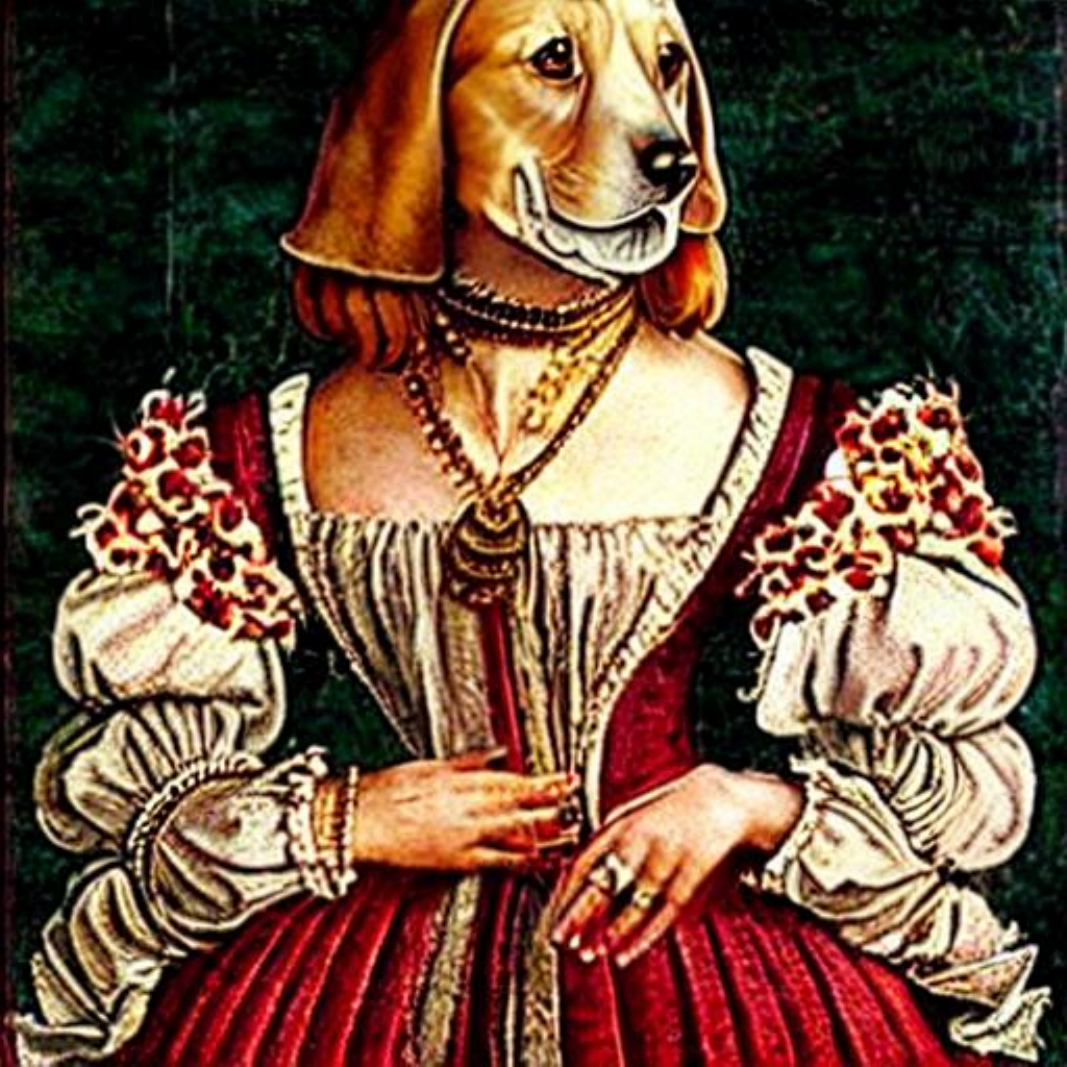} &
    \includegraphics[width=\imgw\textwidth]{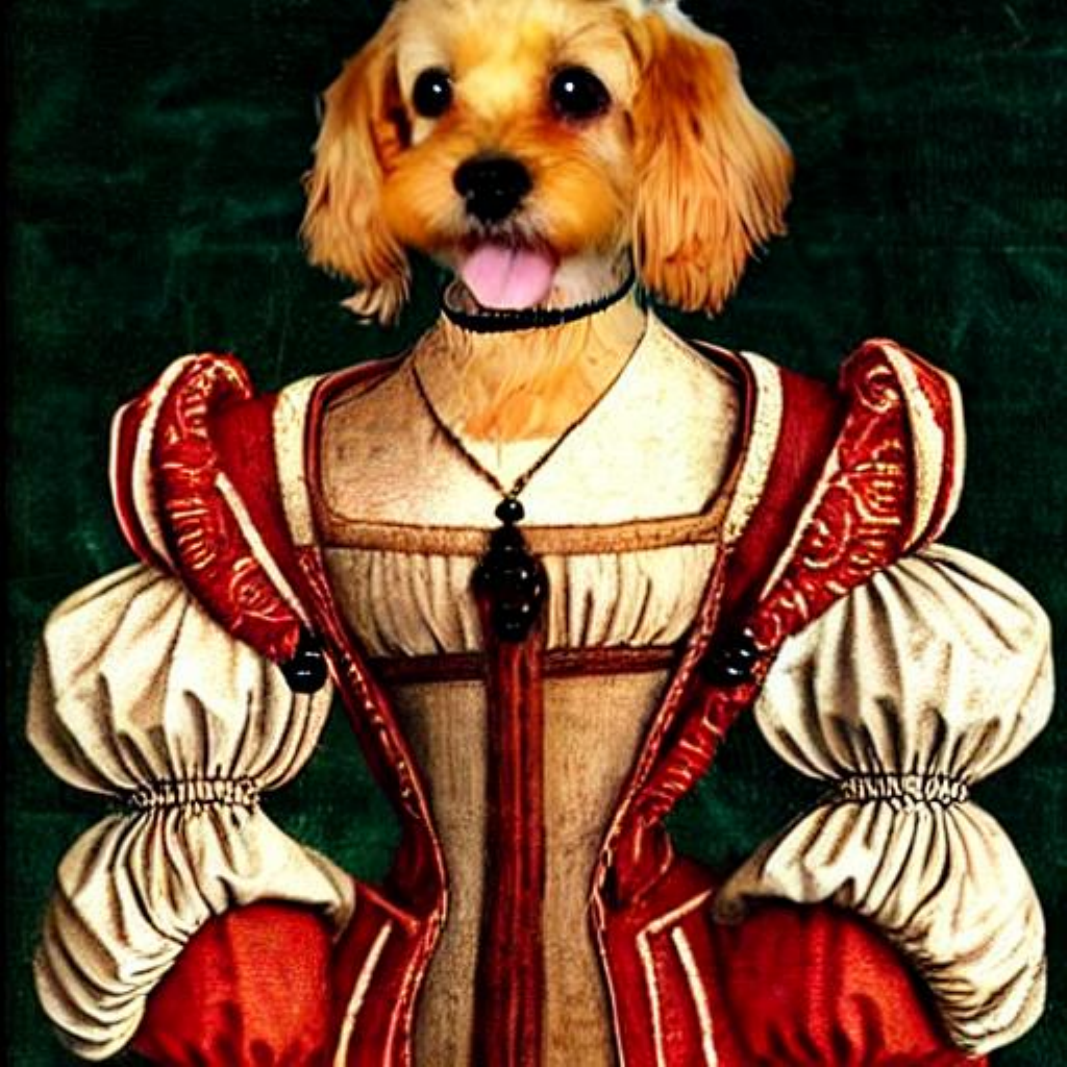} &
    \includegraphics[width=\imgw\textwidth]{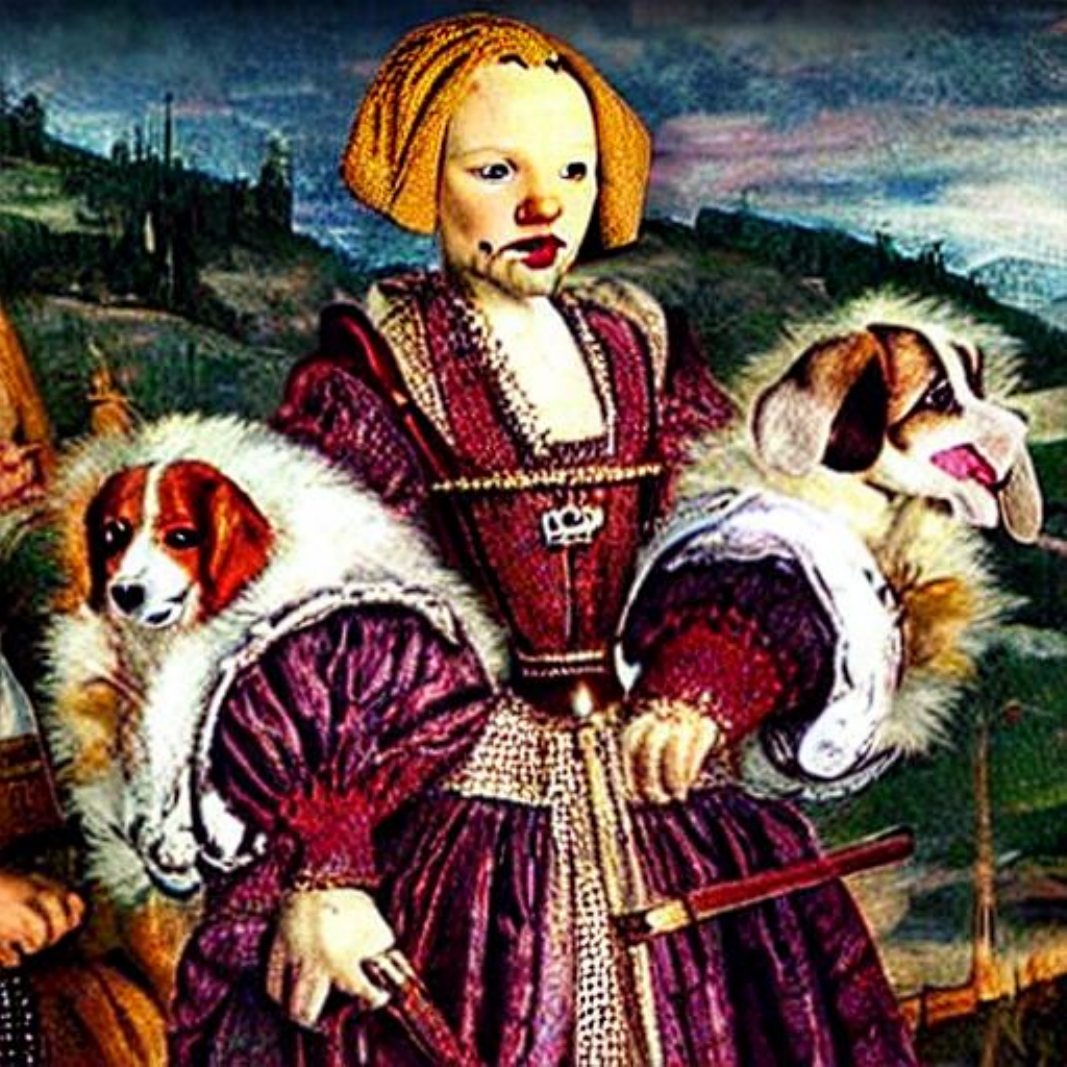} &
    \includegraphics[width=\imgw\textwidth]{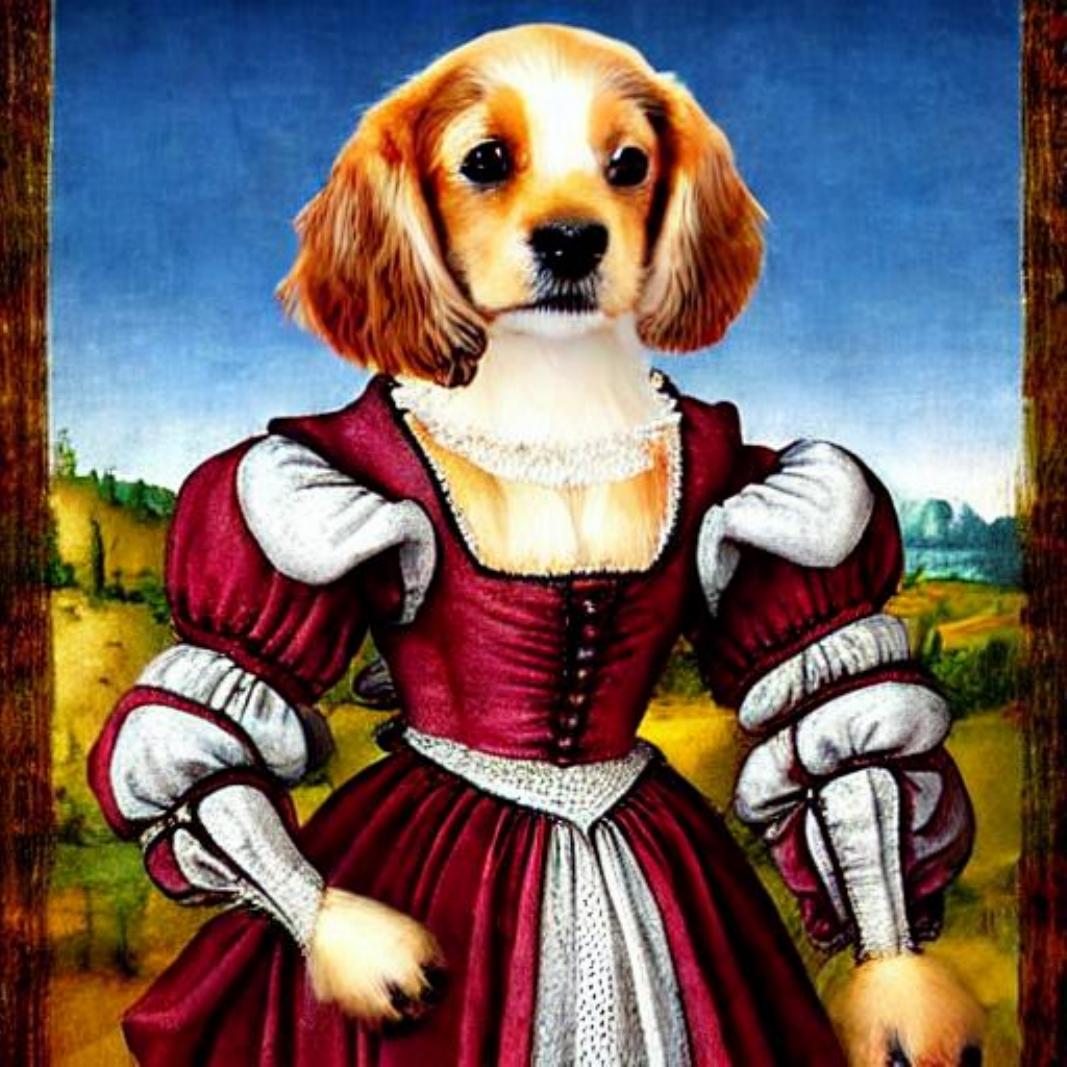} &
    \includegraphics[width=\imgw\textwidth]{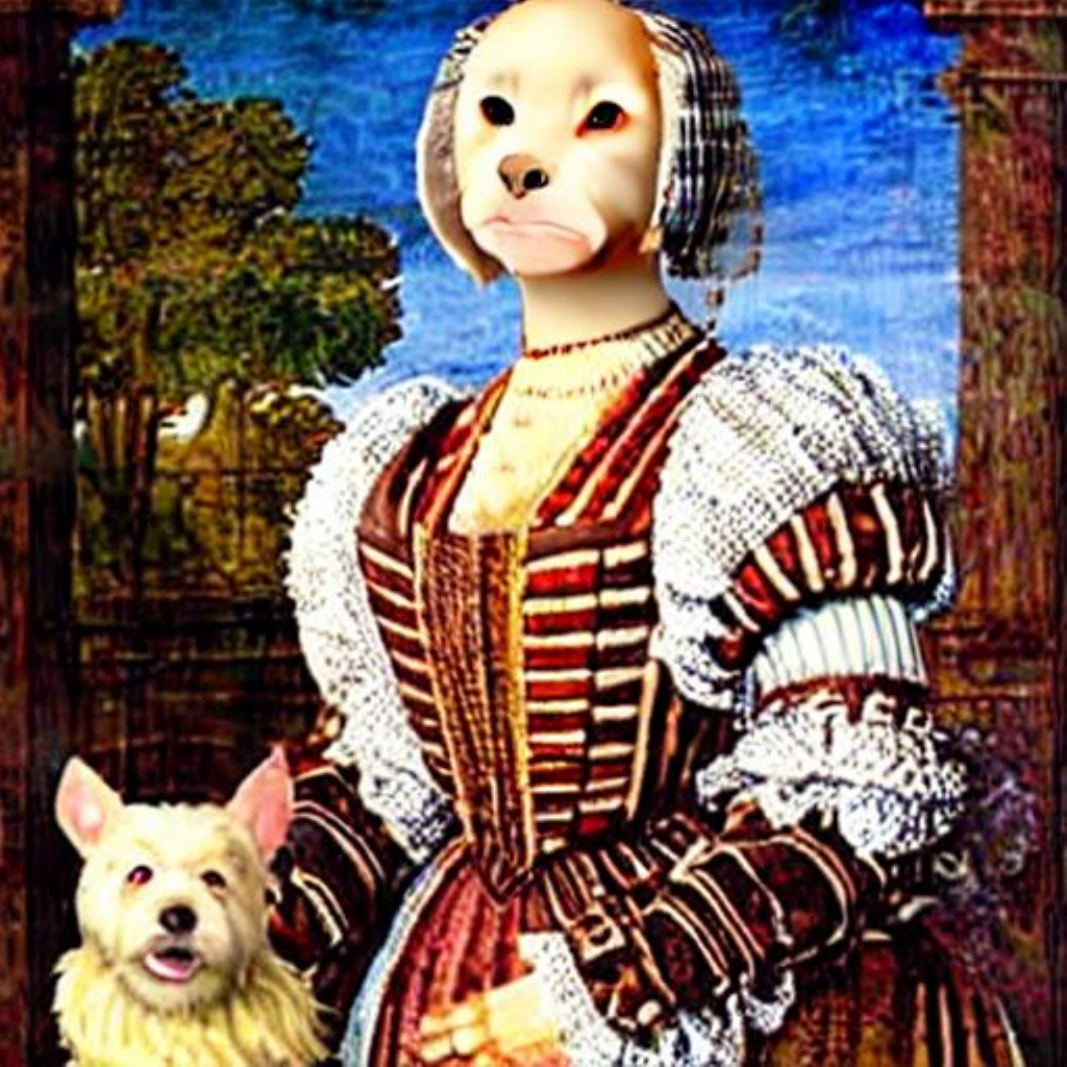} &
    \includegraphics[width=\imgw\textwidth]{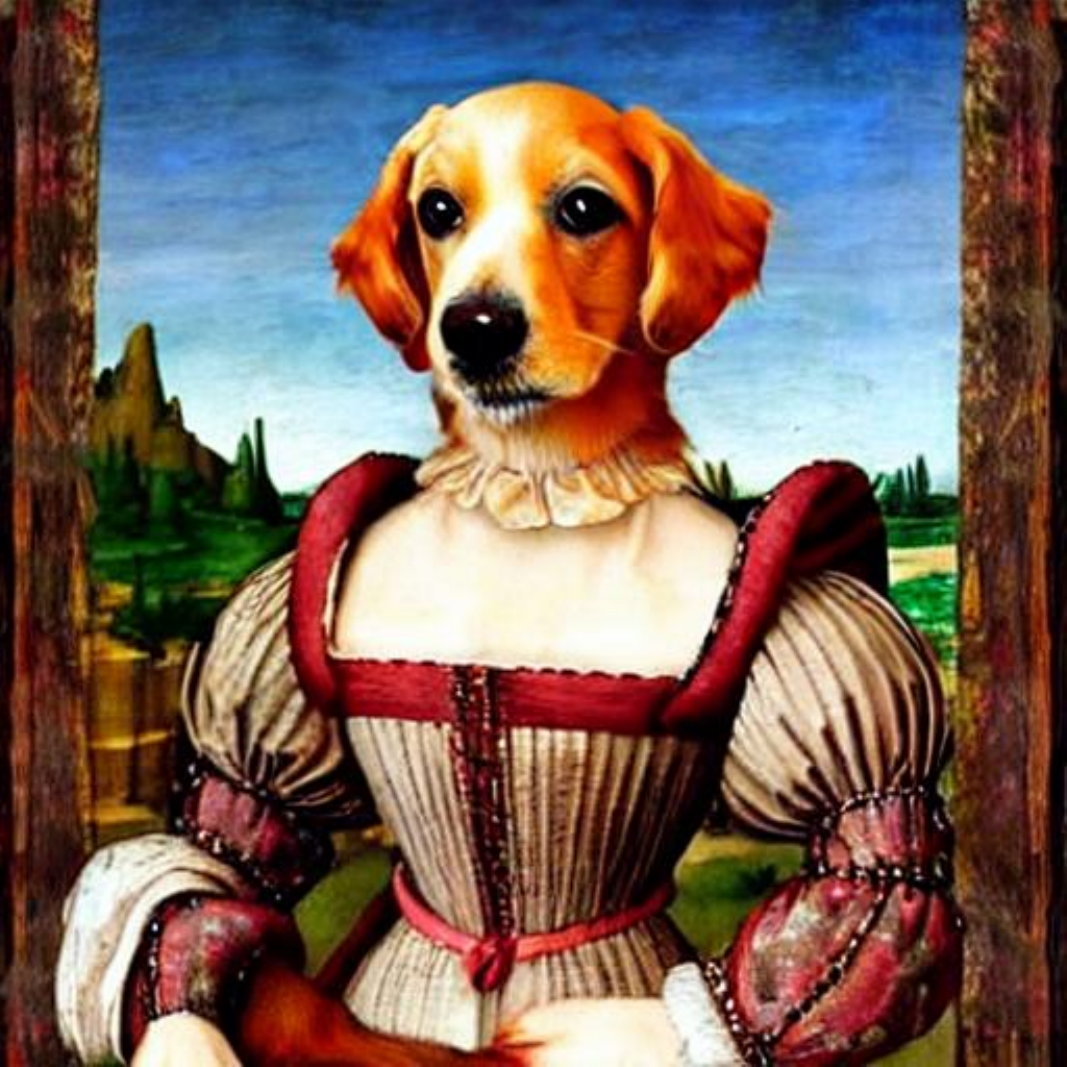} \\
    % Insert Prompt Row 1
    \promptrow{Portrait of cute dog in renaissance style clothing.}

    % ---------------- Row 2 ----------------
    \includegraphics[width=\imgw\textwidth]{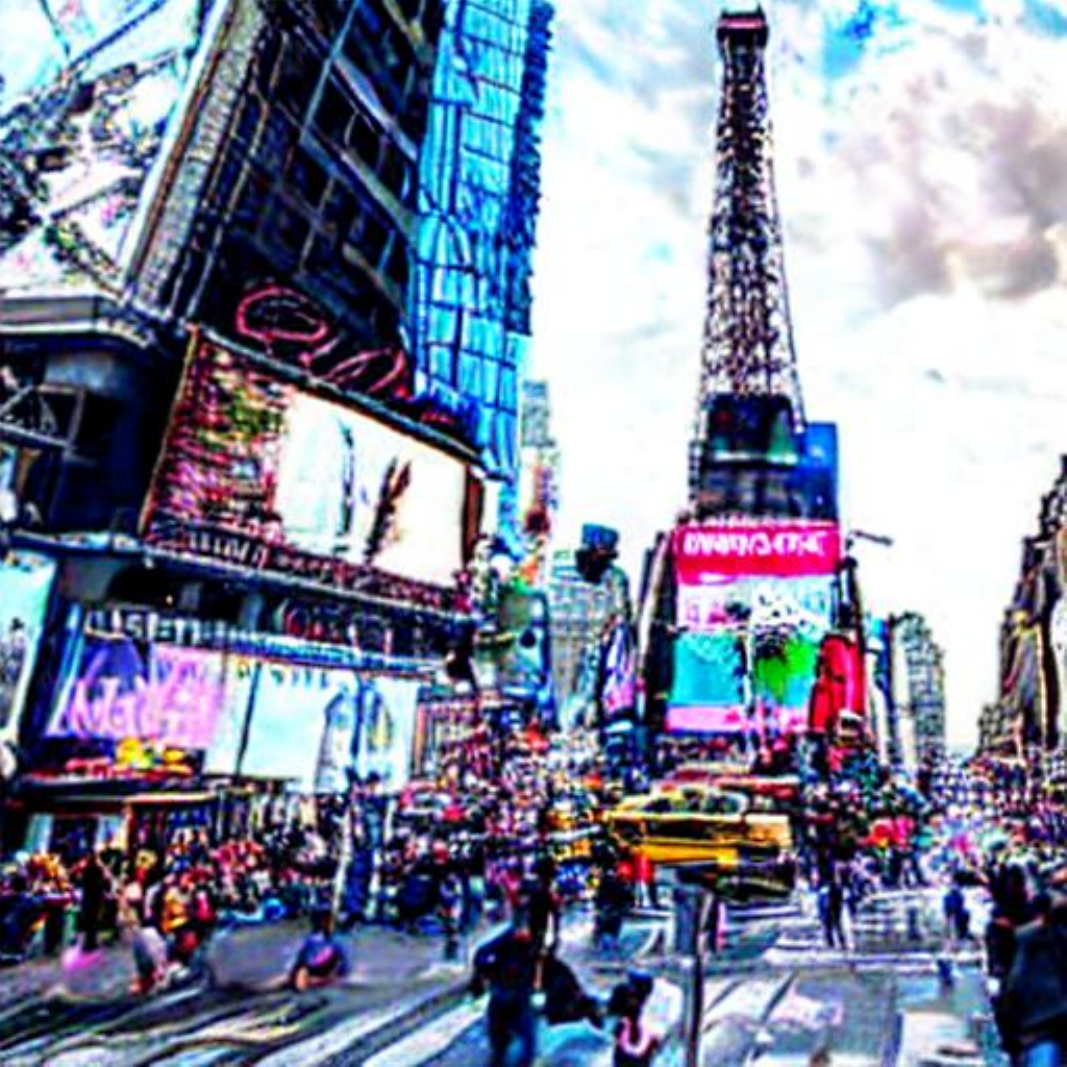} &
    \includegraphics[width=\imgw\textwidth]{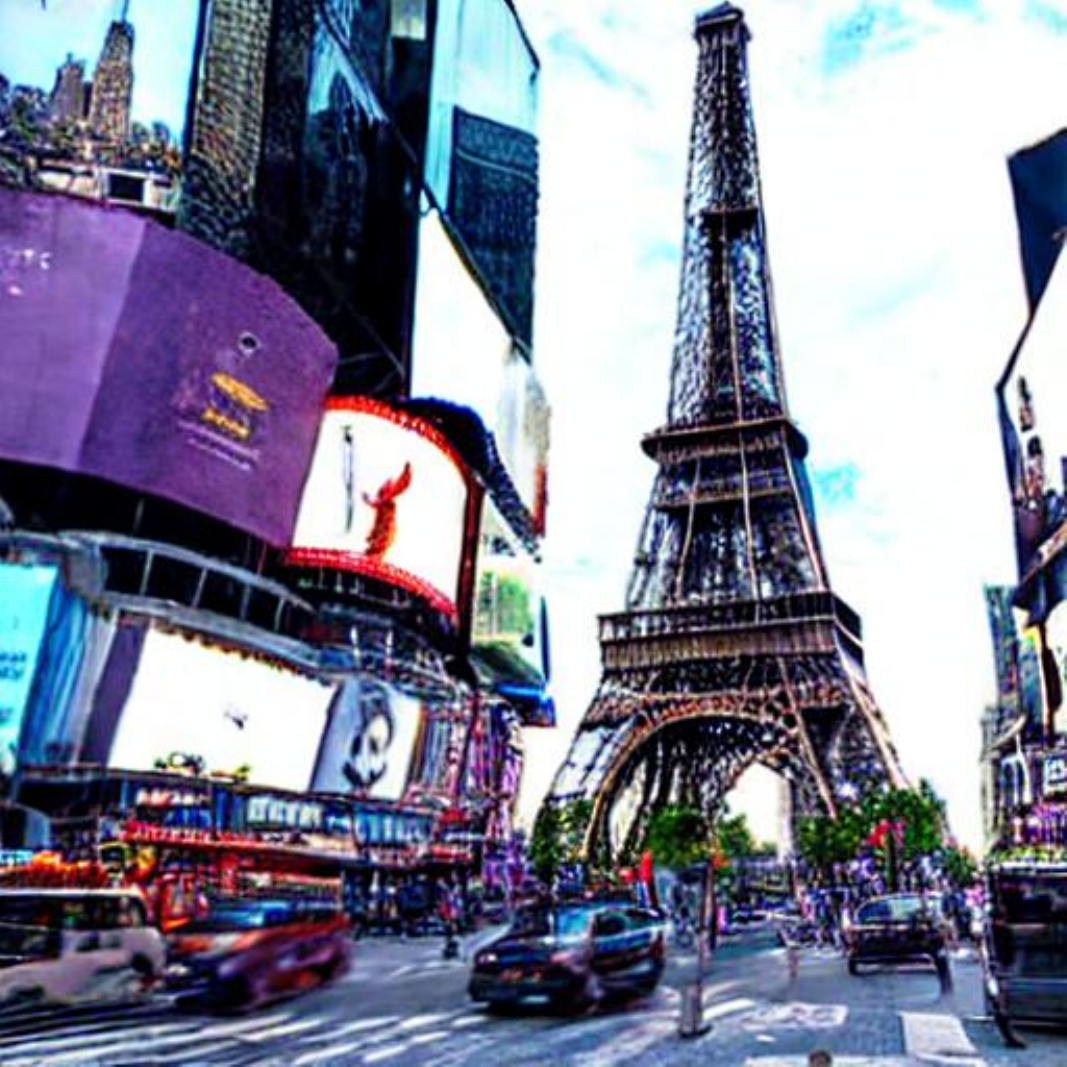} &
    \includegraphics[width=\imgw\textwidth]{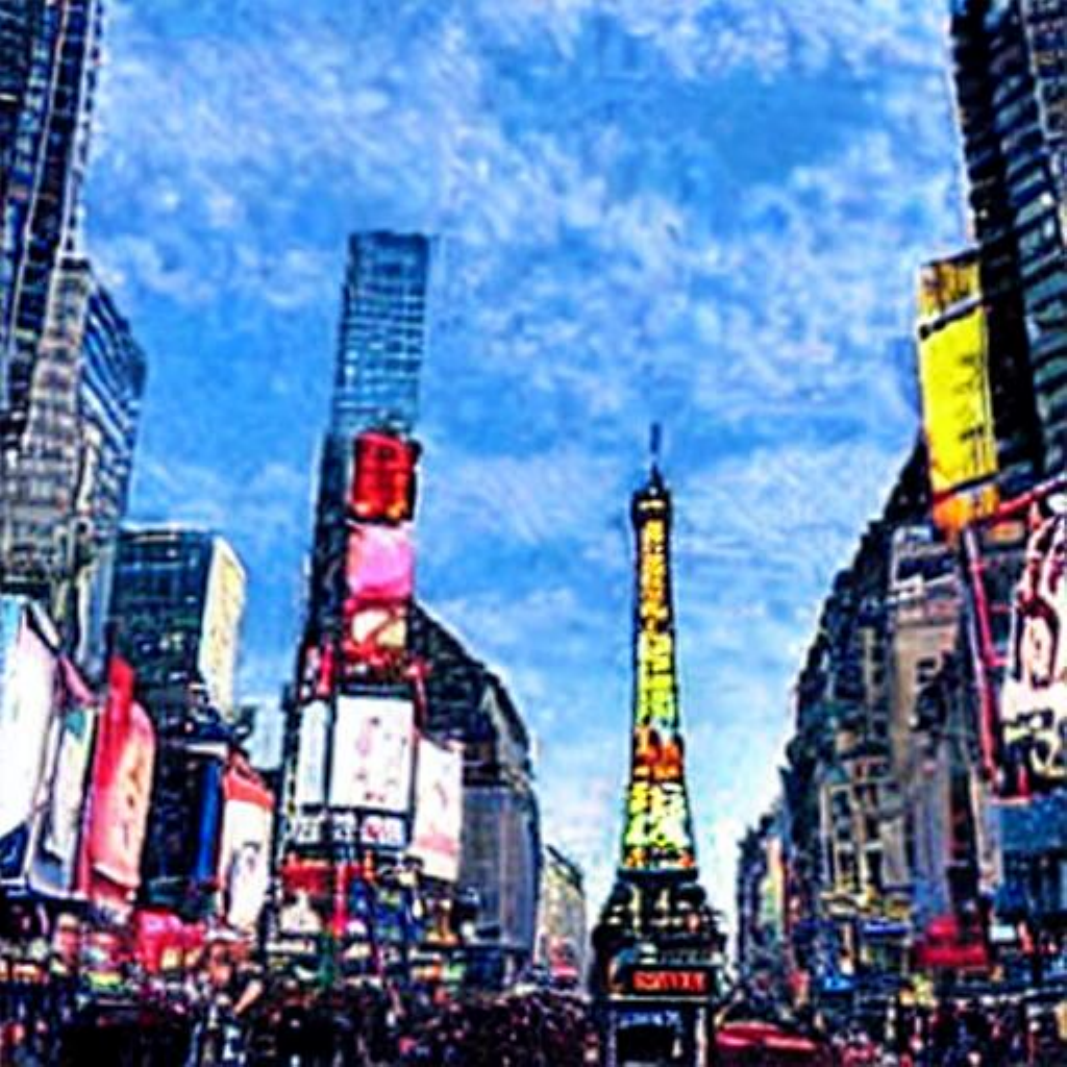} &
    \includegraphics[width=\imgw\textwidth]{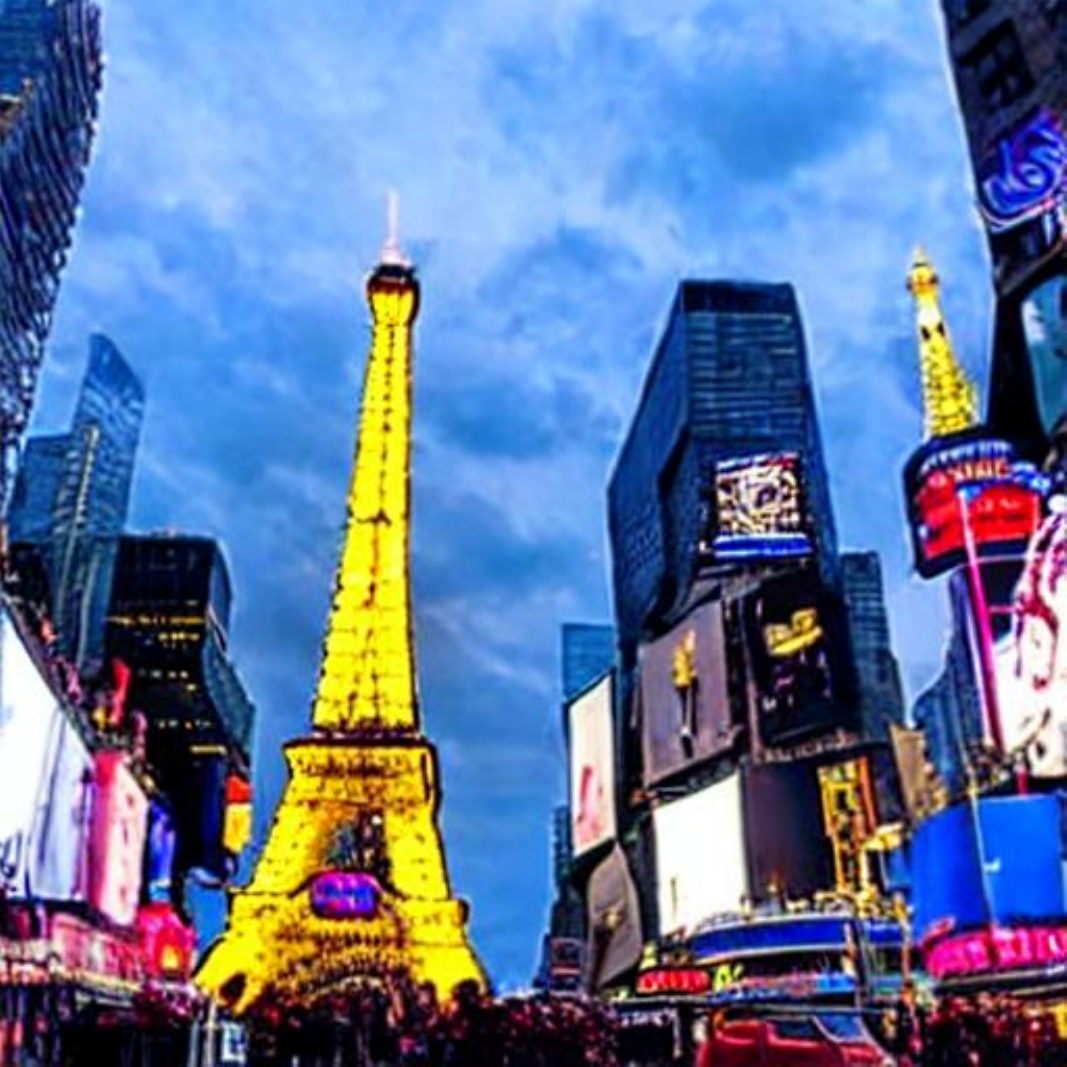} &
    \includegraphics[width=\imgw\textwidth]{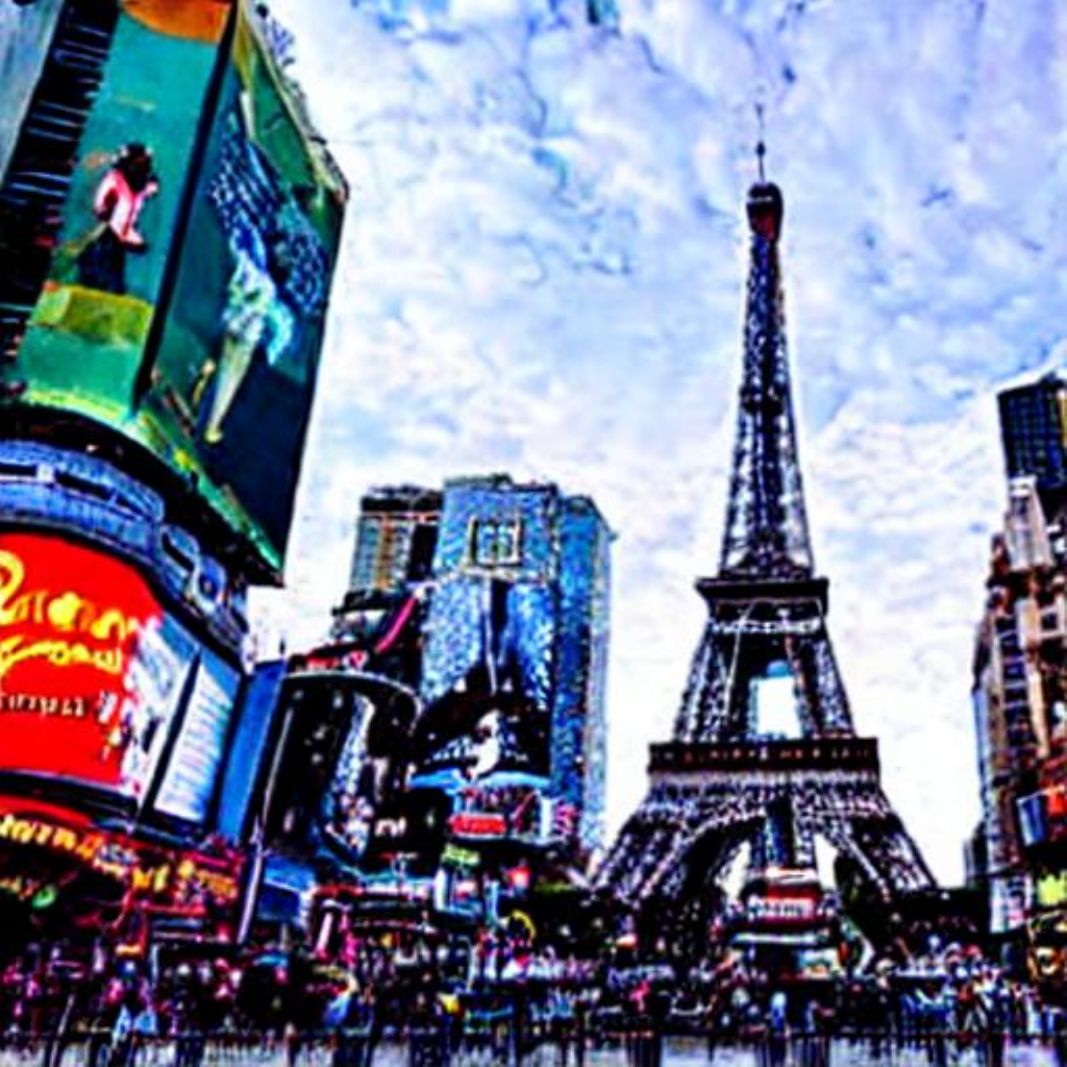} &
    \includegraphics[width=\imgw\textwidth]{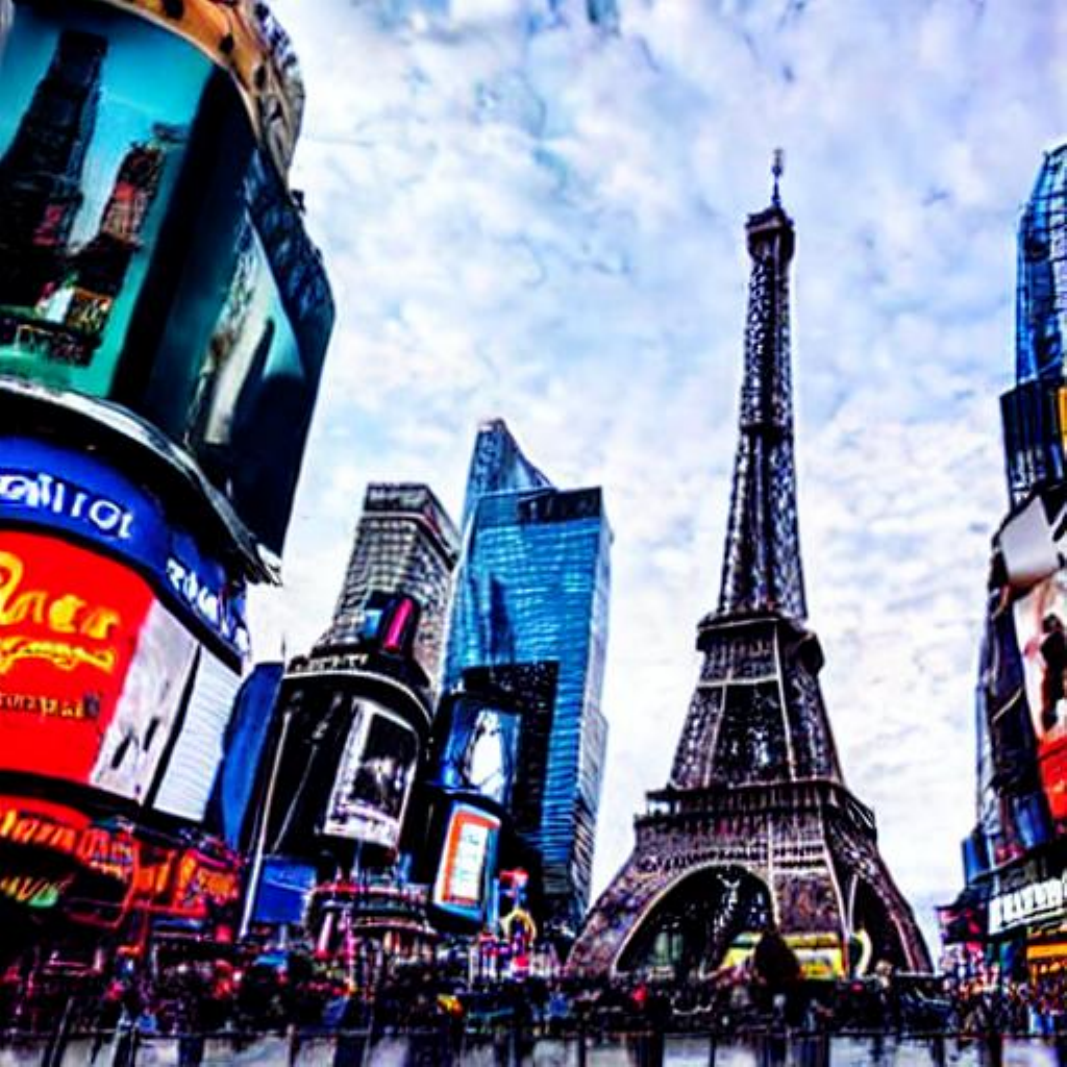} \\
    % Insert Prompt Row 2
    \promptrow{The Eiffel Tower in the middle of Times Square.}

    % ---------------- Row 3 ----------------
    \includegraphics[width=\imgw\textwidth]{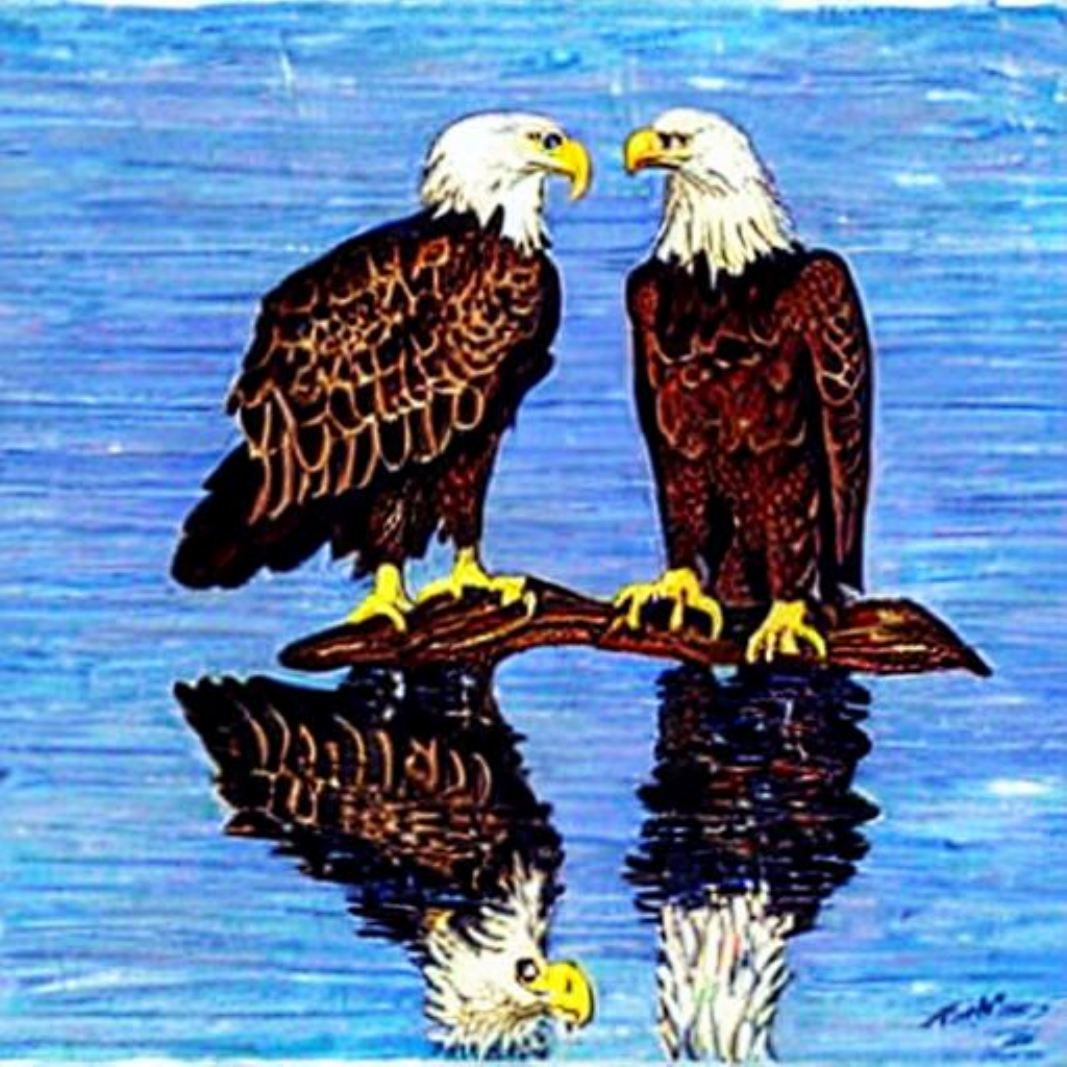} &
    \includegraphics[width=\imgw\textwidth]{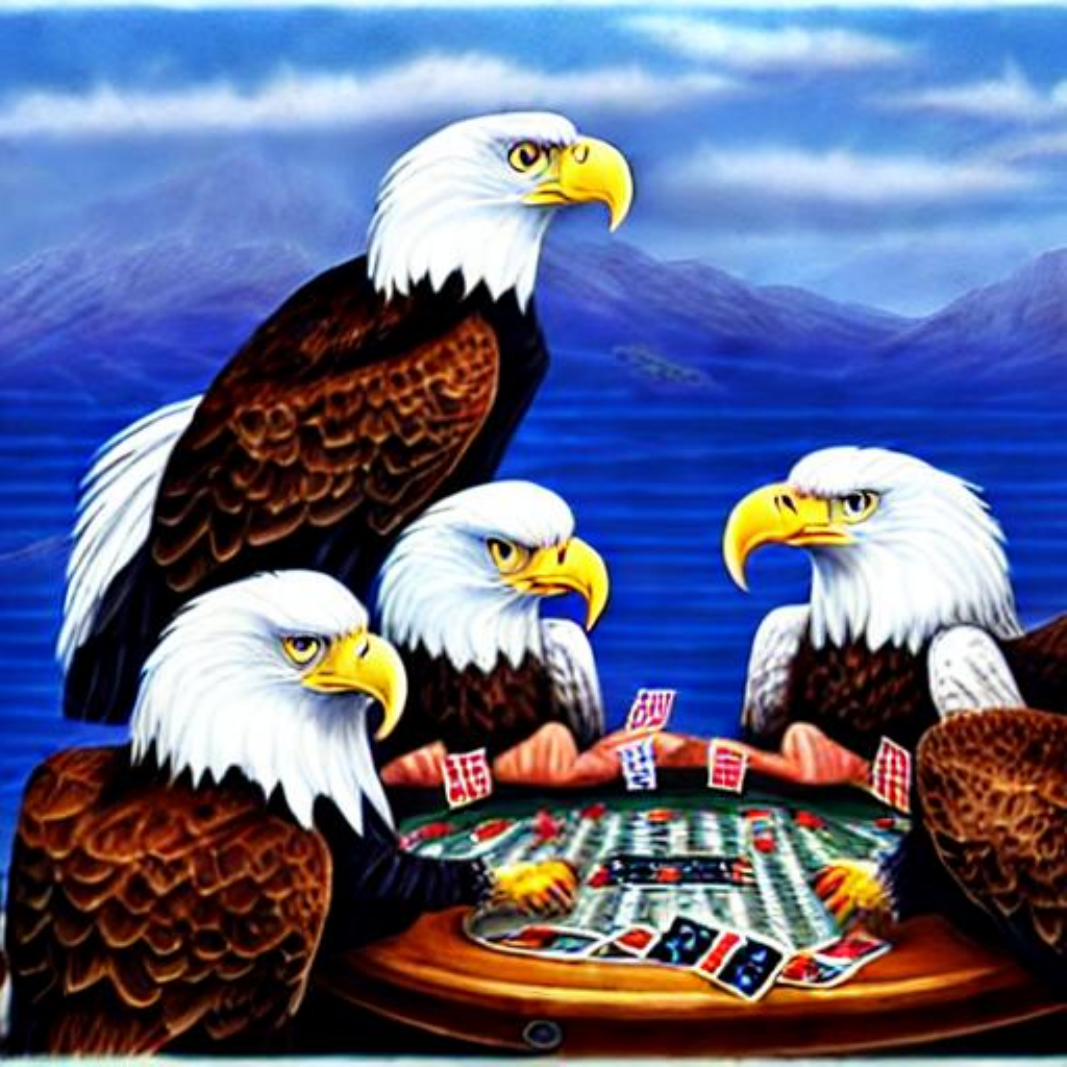} &
    \includegraphics[width=\imgw\textwidth]{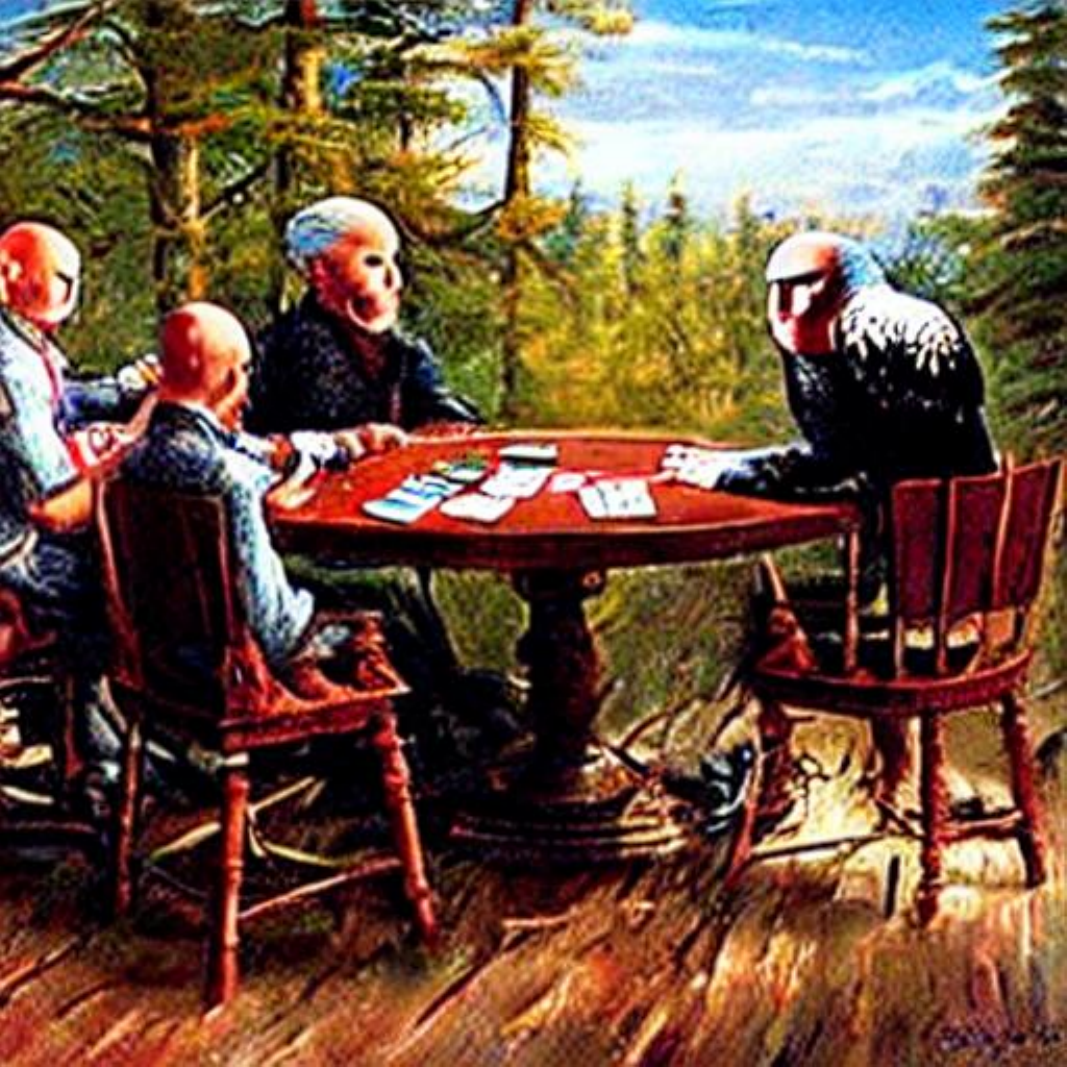} &
    \includegraphics[width=\imgw\textwidth]{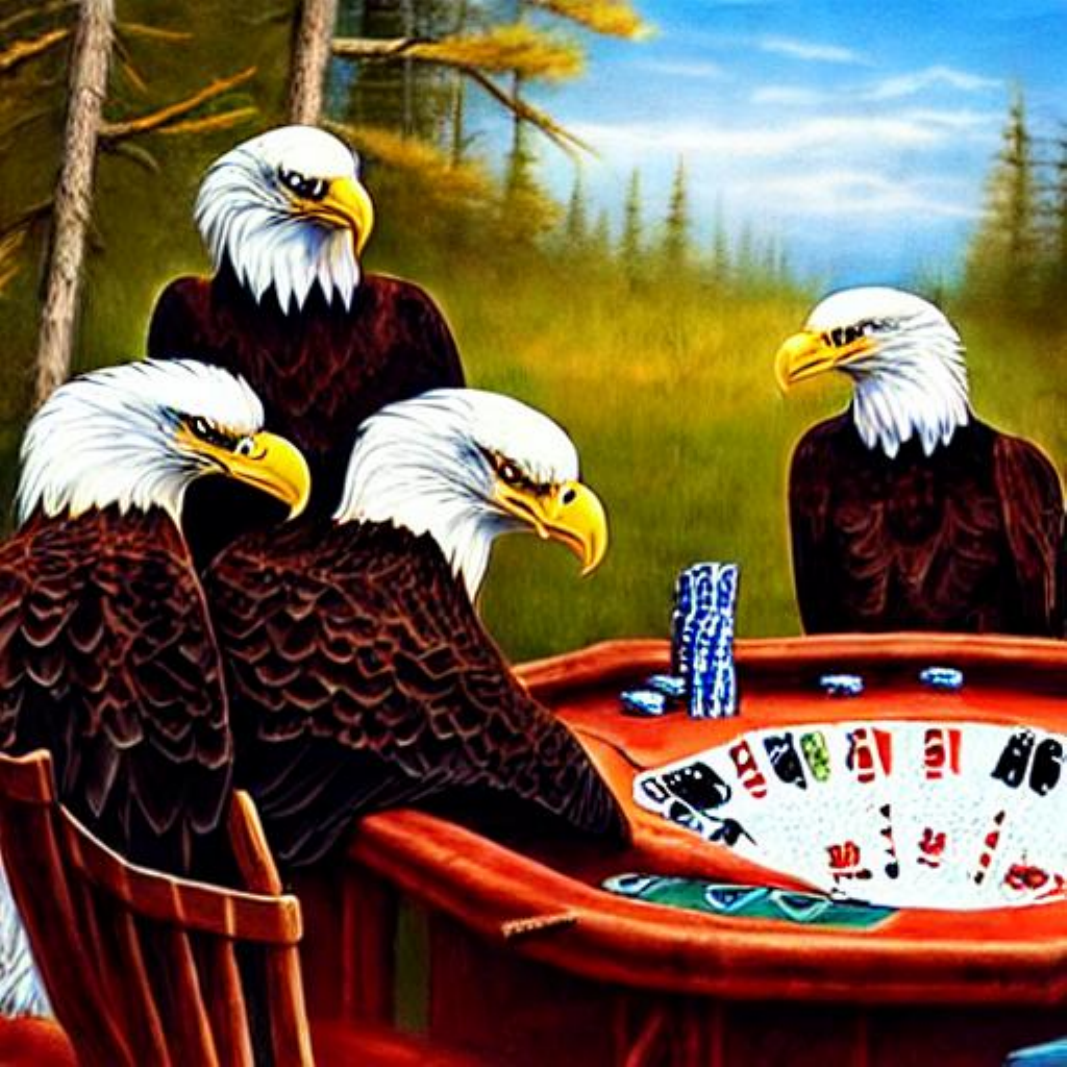} &
    \includegraphics[width=\imgw\textwidth]{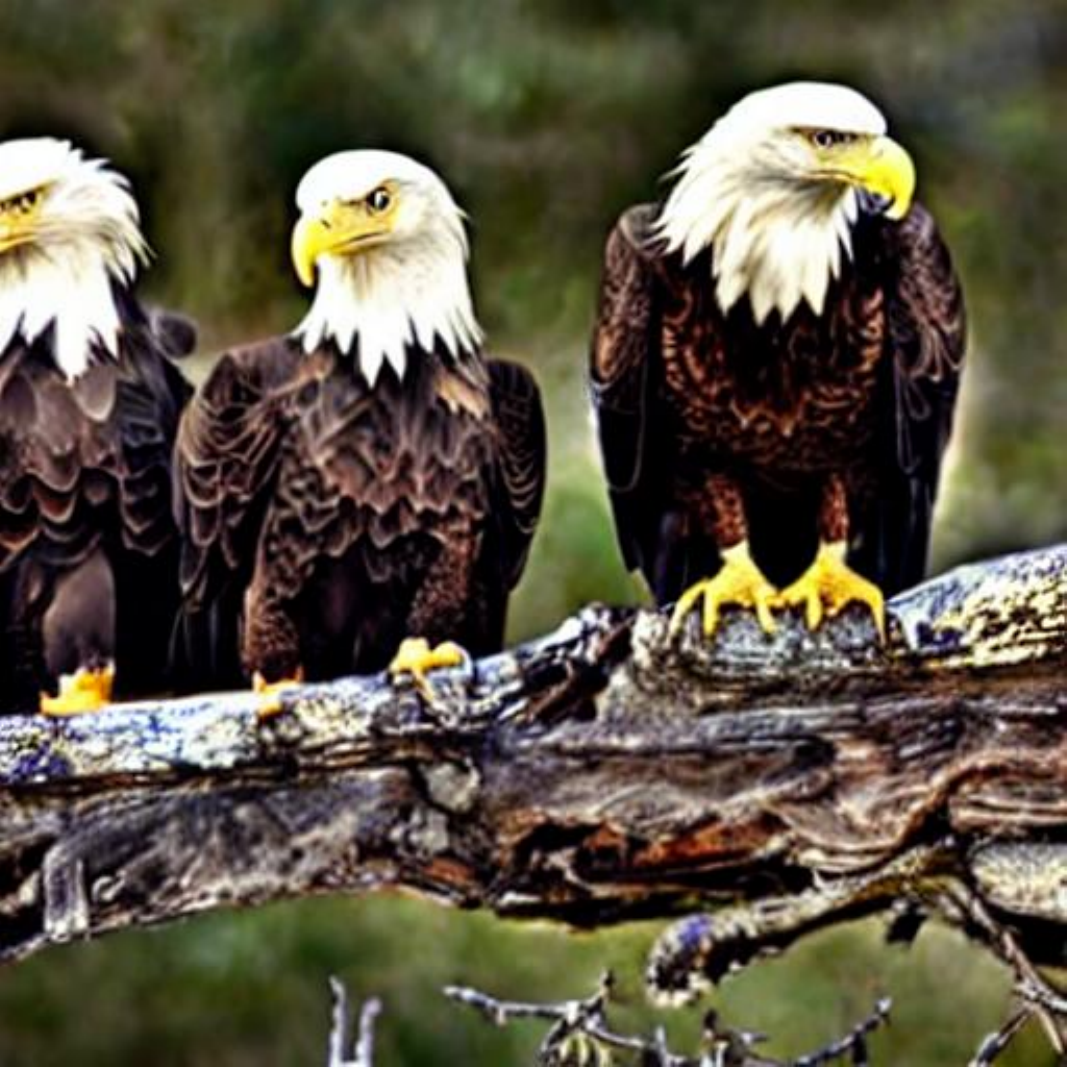} &
    \includegraphics[width=\imgw\textwidth]{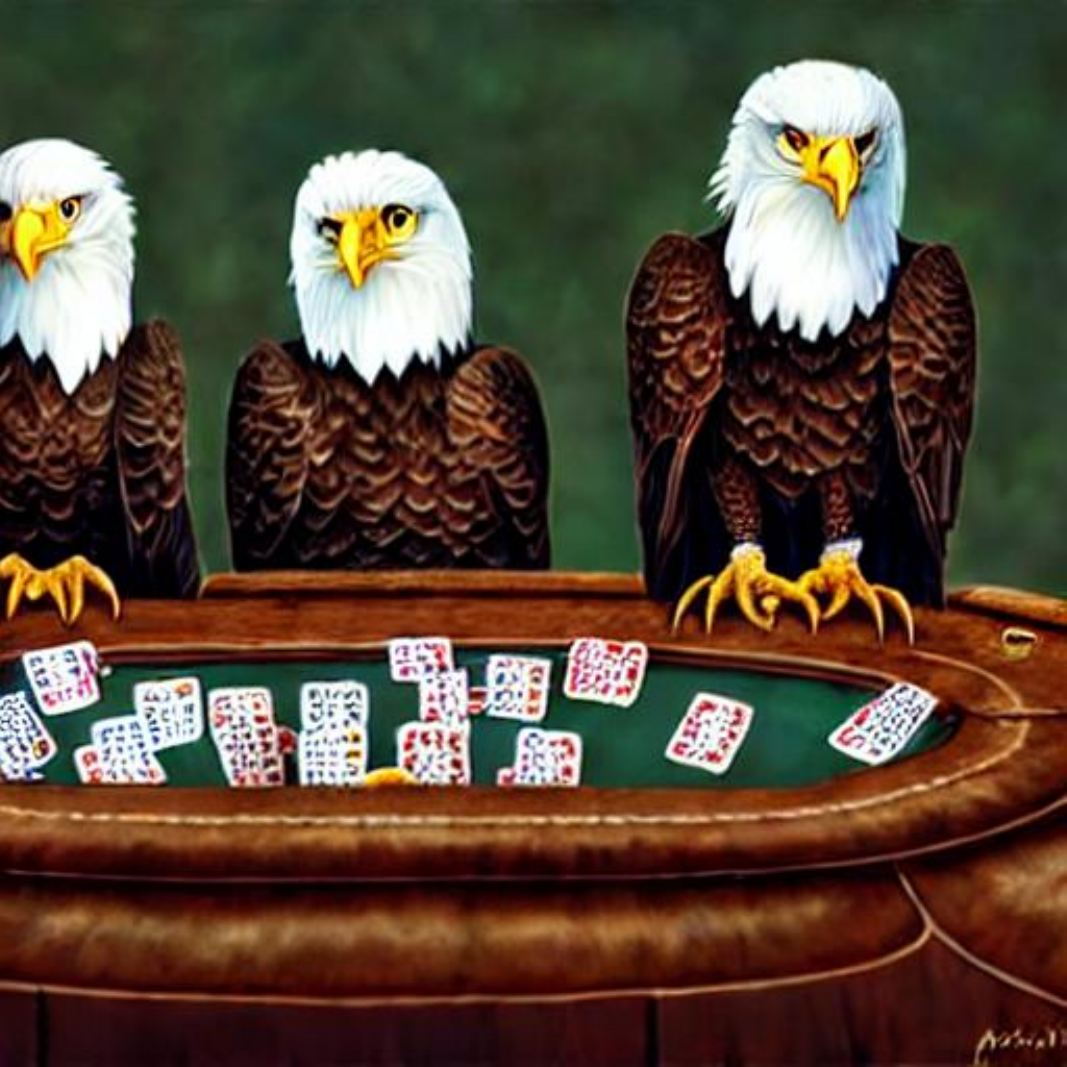} \\  
    % Insert Prompt Row 3
    \promptrow{Bald eagles playing poker.}

    % \includegraphics[width=\imgw\textwidth]{figs/545834941_base.pdf} &
    % \includegraphics[width=\imgw\textwidth]{figs/545834941_ckpt.pdf} &
    % \includegraphics[width=\imgw\textwidth]{figs/2763464060_base.pdf} &
    % \includegraphics[width=\imgw\textwidth]{figs/2763464060_ckpt.pdf} &
    % \includegraphics[width=\imgw\textwidth]{figs/3261985143_base.pdf} &
    % \includegraphics[width=\imgw\textwidth]{figs/3261985143_ckpt.pdf} \\    
    % % Insert Prompt Row 3
    % \promptrow{Peacocks playing poker.}

    % ---------------- Row 4 ----------------
    \includegraphics[width=\imgw\textwidth]{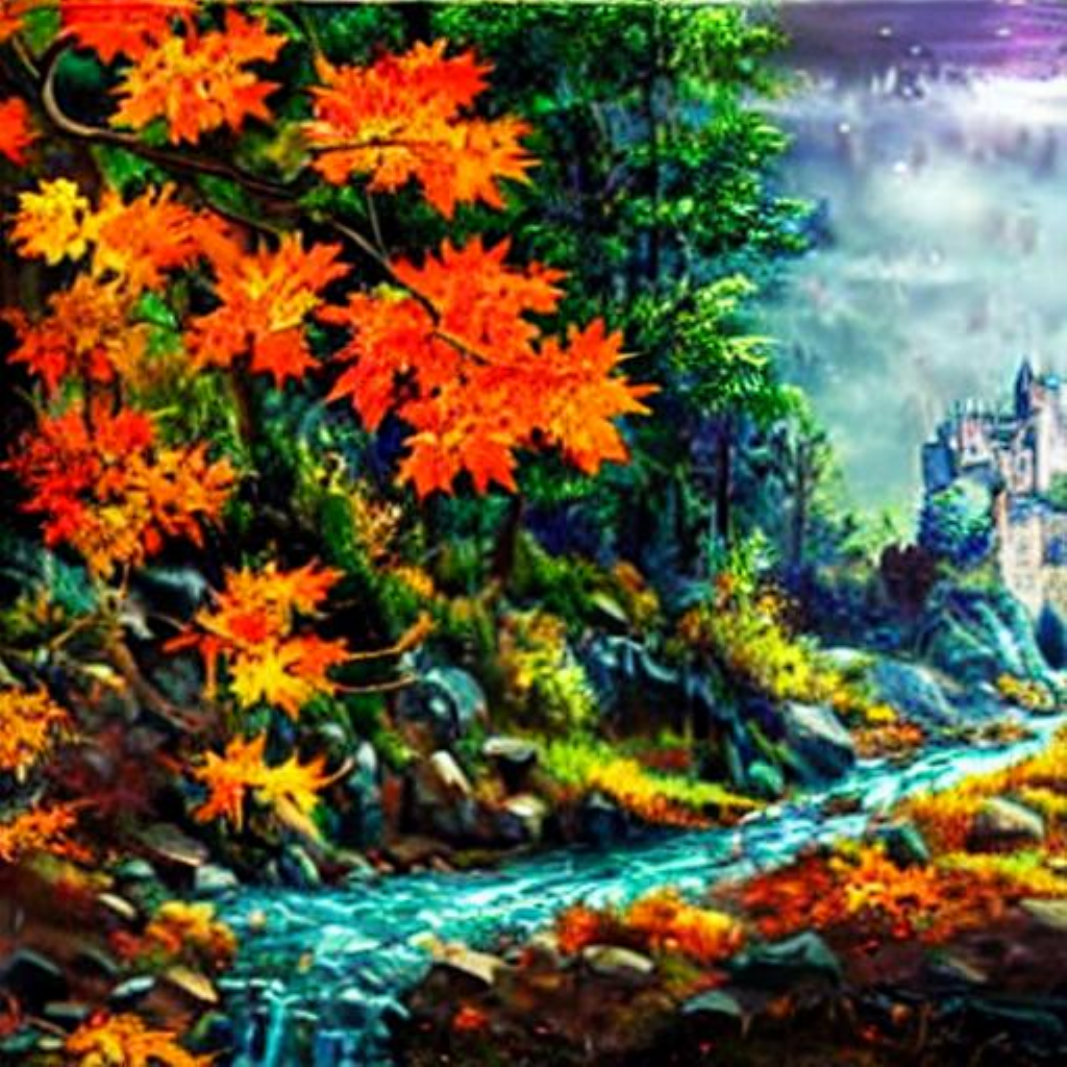} &
    \includegraphics[width=\imgw\textwidth]{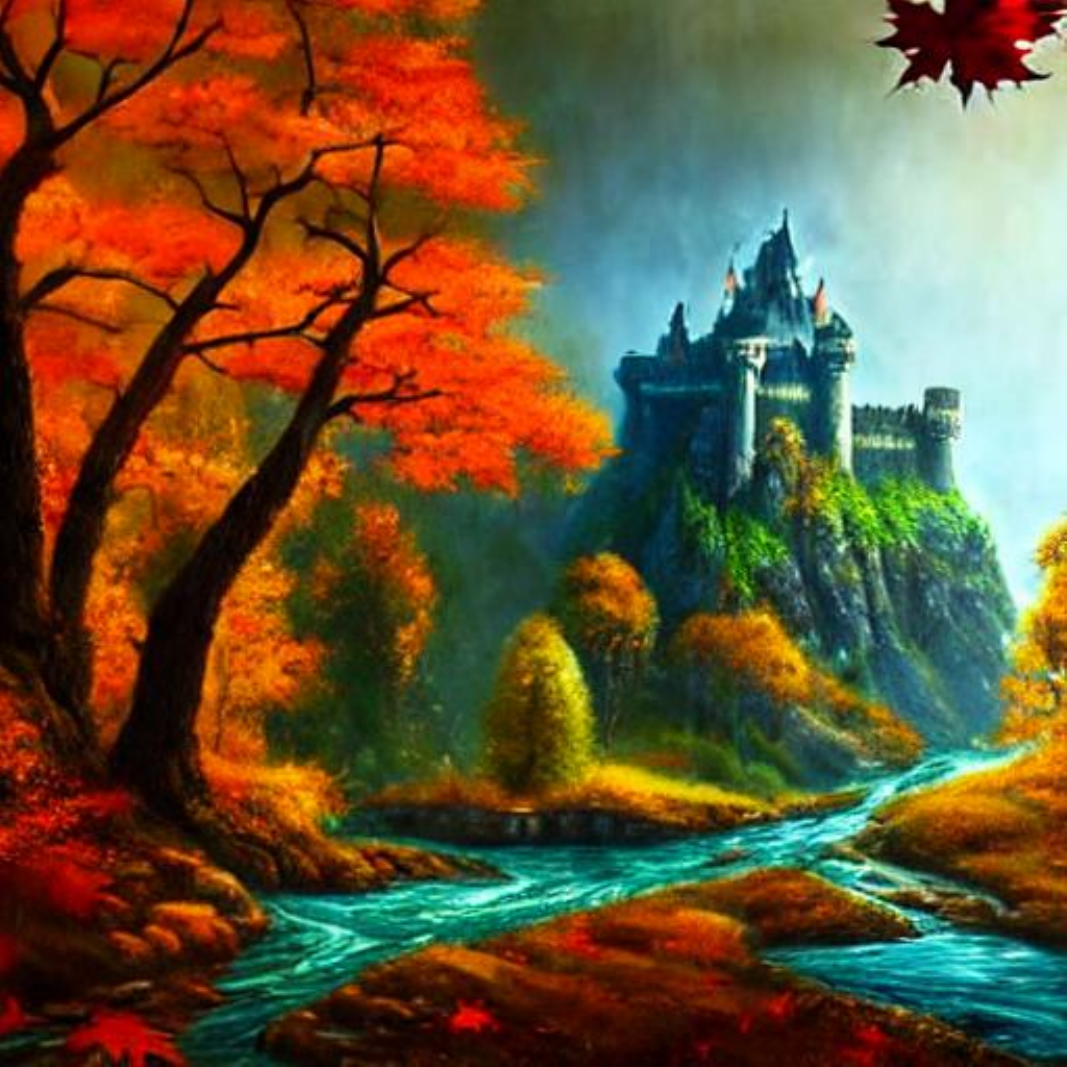} &   
    \includegraphics[width=\imgw\textwidth]{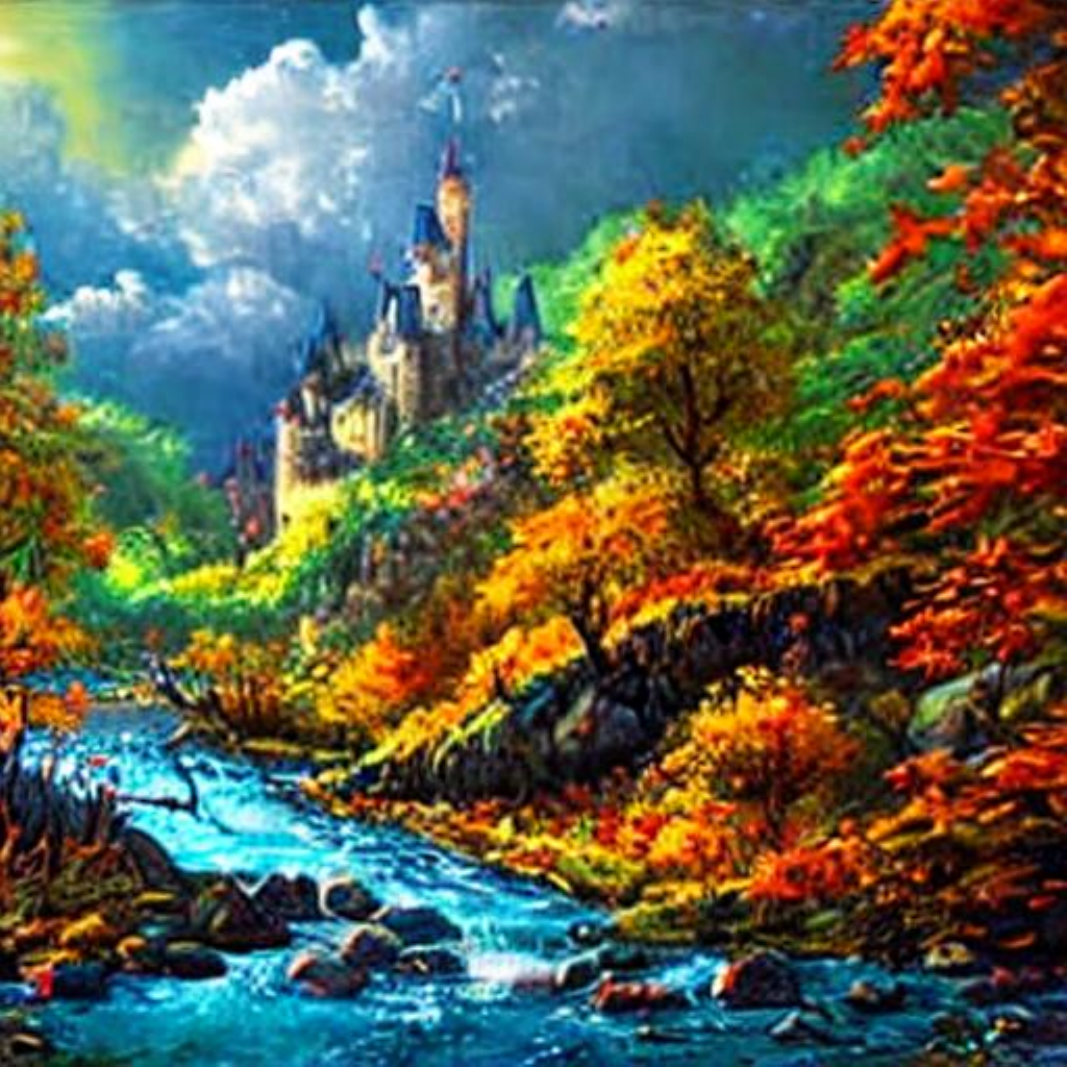} &
    \includegraphics[width=\imgw\textwidth]{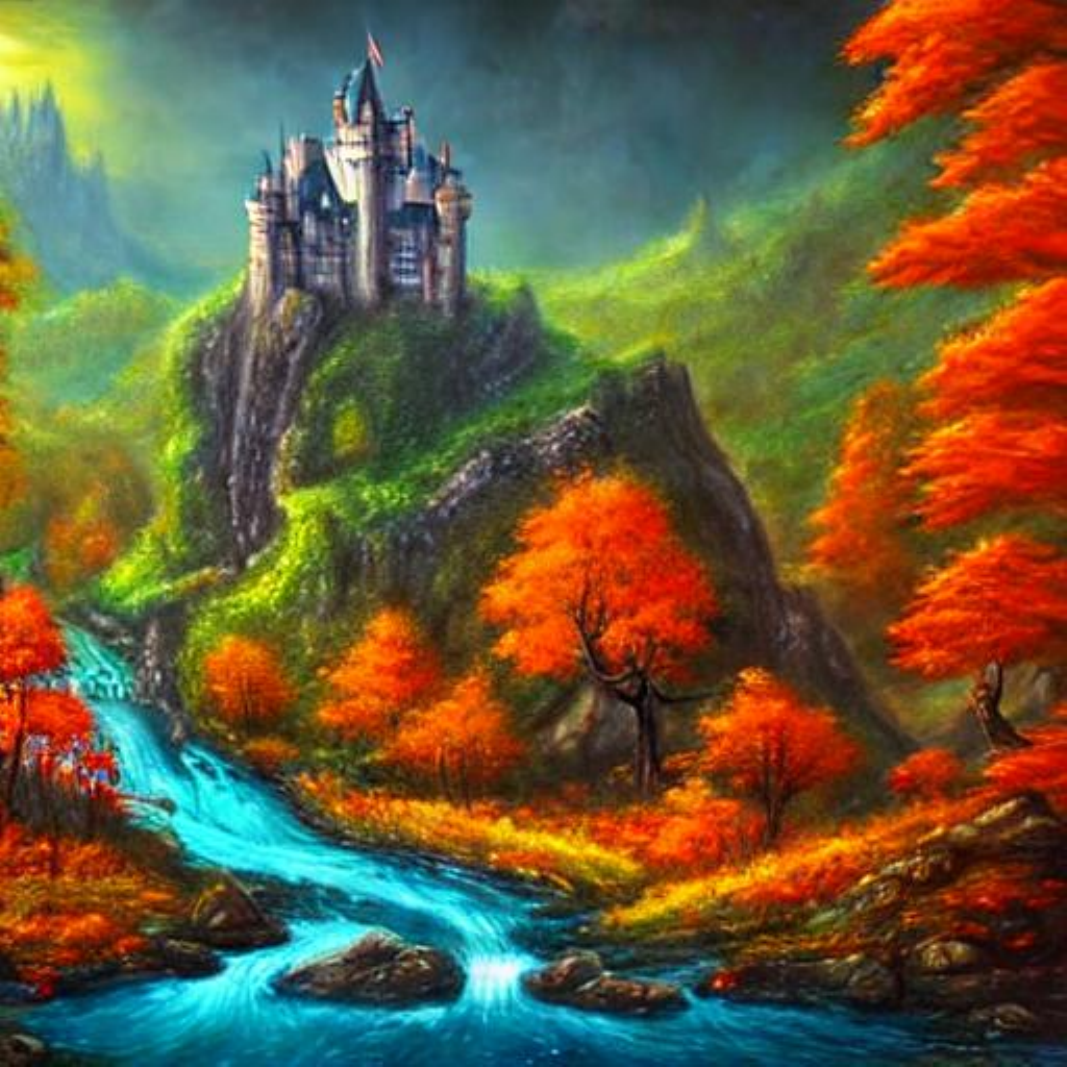} & 
    % Note: You had commented out images here, kept placeholders to maintain grid
    \includegraphics[width=\imgw\textwidth]{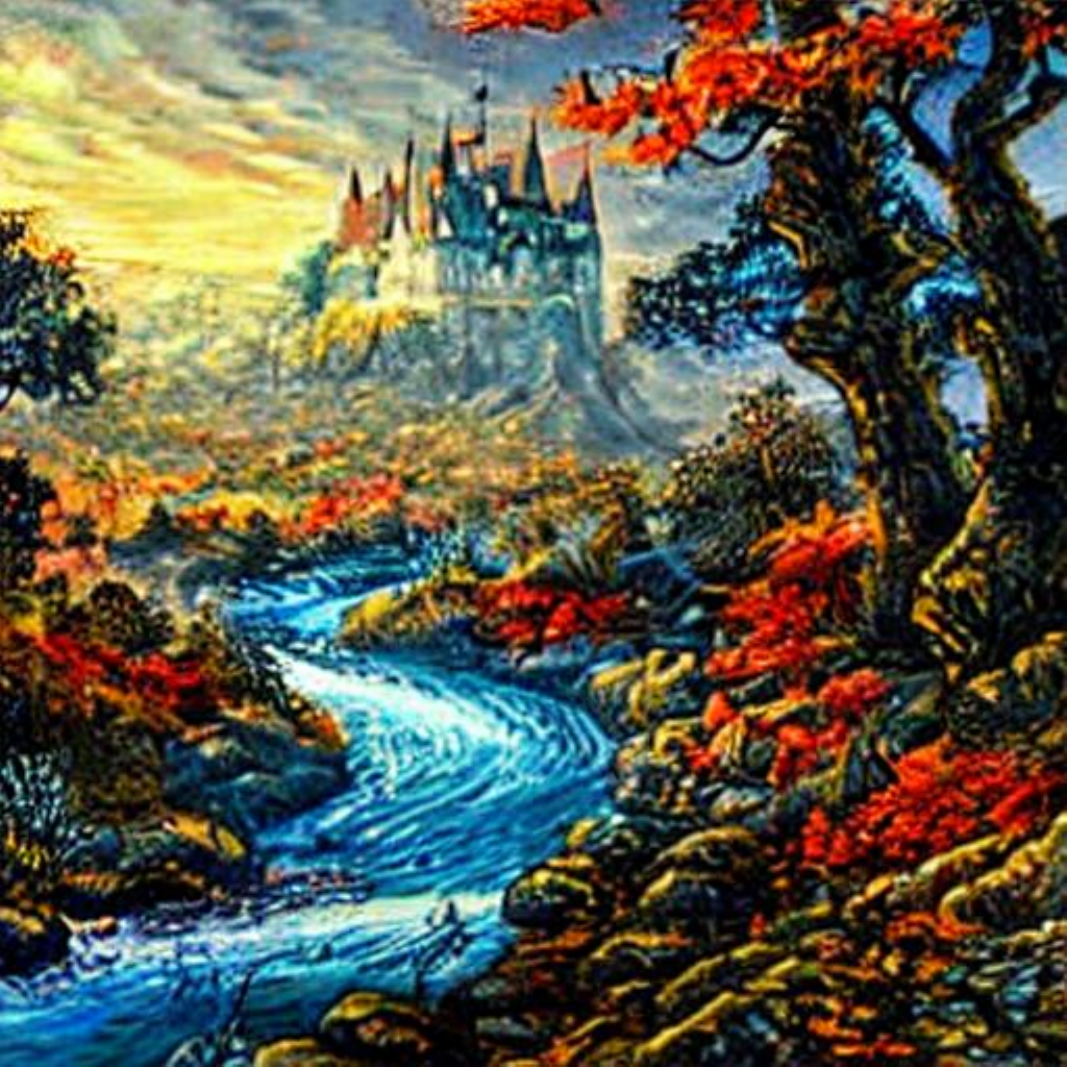} &
    \includegraphics[width=\imgw\textwidth]{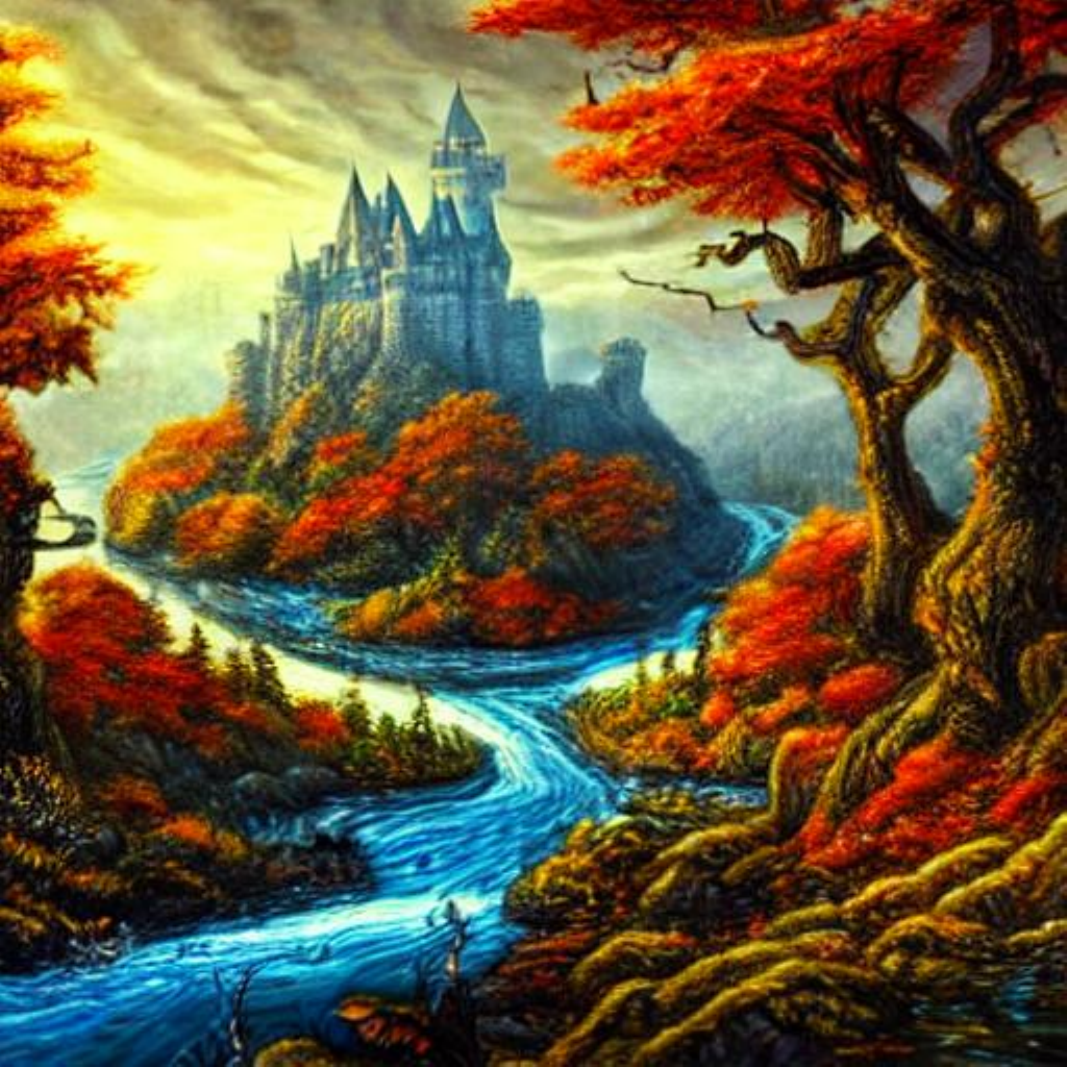} \\
    % Insert Prompt Row 4
    \promptrow{A fantasy landscape with a castle on a hill and a river flowing through a forest with fall leaves.}
    
  \end{tabular}
    \vspace{-0.2cm}
  \caption{
    For each prompt, we display three paired samples coming from the Base model vs our Tilt Matching method.
  }
\end{figure*}

\section{Introduction}

Generative models built out of dynamical transport like flow and diffusion models are highly scalable tools that serve as building blocks for foundation models across industries \citep{rombach2022high, geffner2025proteina, watson2023, videoworldsimulators2024, MatterGen2025}. These models work by building a continuous time map connecting a base distribution to a target distribution, realized by solving a differential equation whose coefficients are outputs of neural networks.

There is now a vested interest in applying them in settings where there is not \textit{a priori} an abundance of data to learn from to complete a task of interest. These include learning to sample under Boltzmann distributions appearing in molecular dynamics \citep{noe2019, herron2024, plainer2025consistent} and statistical physics \citep{albergo2019, gabrie2022adaptive,kanwar2020, nicoli2021}, as well as fine-tuning an existing generative model so as to produce samples that align with user requests. 

Both of these problems can be framed as \textbf{\textit{tilting}} some existing distribution toward a new target. For Boltzmann sampling, this means adapting the energy function defining the theory; for fine-tuning, this means adapting the base generative model to score highly against a reward $r(x)$. Our aims in this paper are precisely centered around this picture. Given access to samples from a distribution with density $\rho_1(x): \mathbb R^d \rightarrow  \mathbb R$, we want to learn to sample the tilted distribution $ \rho_{1,a} \propto \rho_1 e^{a r(x)}$, where $r(x)$ is a scalar function which defines the \textbf{tilt} and $a$ is an annealing parameter that  characterizes the extent of the tilt. This initial distribution $\rho_1 = \rho_{1,a=0}$ could be given by an existing generative model, as in the case of fine-tuning, or it may be a reference distribution that is easy to sample with conventional techniques when performing sampling. We will ultimately be interested in $\rho_{1,a=1}$, i.e. the density fully tilted toward the reward.
% Our aims in this paper are, given access to samples from a time-dependent distribution with density $\rho_t(x): \mathbb R^d \rightarrow  \mathbb R$, to learn to sample the tilted distribution $ \rho_{1,a} = \rho_1 e^{a r(x) + F_1}$, where $r(x)$ is a scalar function which defines the tilt, $a$ is an annealing parameter that  characterizes the extent of the tilt, and $F_{1,a} = - \log Z_{1,a}$ is the new normalization constant.

% This initial time-dependent distribution $\rho_t = \rho_{t,a=0}$ could be given by an existing generative model, as in the case of \textbf{fine-tuning}, or it may be a reference distribution used to perform posterior 

% \textbf{sampling}. We will ultimately be interested in $\rho_{t,a=1}$, i.e. the density fully tilted toward the reward.
While diffusions \citep{song2020score, ho2020denoising} and flow-based models \citep{lipman2022flow, albergo2022building, liu2022flow} work well when data is available, regression objectives for the data-less contexts we focus on here are still not available, or come with caveats.
In what follows, we begin by briefly recalling these highly scalable generative models. Then, we will motivate a practical \textit{modification} to these approaches so that they maintain many of their appealing optimization qualities while making them applicable to fine-tuning and sampling. To this end, we specify our \textbf{main contributions:}
\begin{itemize}
    \item \textbf{Evolution of flow matching velocity fields under reward tilts.}
    We derive an exact ordinary differential equation describing how stochastic interpolant velocity fields evolve with the annealing parameter $a$ under exponential reward tilting $\rho_{1,a} \propto \rho_1 e^{a r}$. The infinitesimal change is given by the conditional covariance between the interpolant dynamics and the reward.
    \item \textbf{Tilt Matching for learning reward-tilted transports.} We construct a family of simple iterative regression loss functions for the tilted velocity field that do not rely on backpropagating through generated trajectories, do not require spatial gradients of the reward, can avoid likelihood computations during training, and whose variances are strictly less than those of flow matching.
    \item \textbf{Connection to stochastic optimal control (SOC).} We show that the deterministic ODE drift learned by Tilt Matching is the \emph{probability flow ODE} corresponding to a Doob $h$-transform diffusion obtained by tilting the path measure with the terminal weight $e^{a r(X_1)}$. As a result, Tilt Matching recovers the same controlled transport object as in SOC, but without introducing an SDE-based control formulation, relying instead on simple regression with scalar rewards.
    \item \textbf{Applications to sampling and fine-tuning.} We show how Tilt Matching can be applied to both sampling distributions known up to normalizing constant and to fine-tuning existing generative models, where we achieve state-of-the-art performance on sampling Lennard-Jones potentials with diffusion-based samplers, and match or surpass the performance of diffusion fine-tuning methods that rely on stochastic simulations, validating the approach on Stable Diffusion 1.5 without reward scaling.
\end{itemize}

\section{Preliminaries}
\subsection{Dynamical Transport and Stochastic Interpolants}

\begin{figure}[ht]
    \centering
    \includegraphics[width=0.75\linewidth]{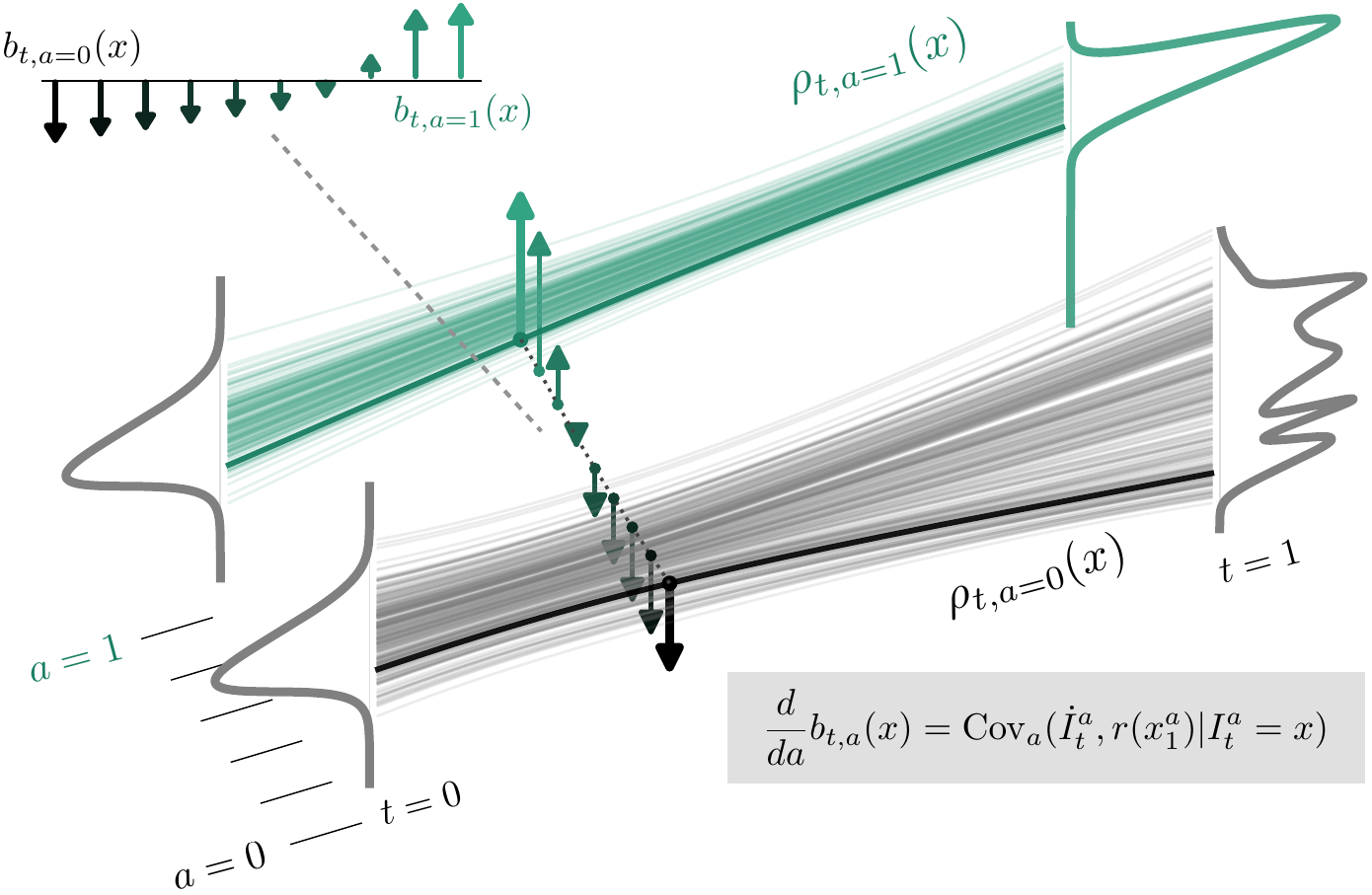}
    \caption{Schematic overview of the proposed method. When flow matching with a stochastic interpolant $I_t^{a=0}$ is used to learn a generative model $b_{t,a=0}$ that samples $\rho_{t,a=0}$ and in particular has terminal samples $\rho_{1,a=0}$ (gray curve), then the evolution of that velocity field in $a$ in order to sample $\rho_{1,a>0} = \tfrac{1}{Z_a}\rho_{1,0}e^{a r(x)}$, where $r$ is a reward function, has closed form given by the conditional covariance of the dynamics of the interpolant at $(t,a)$ and the reward. The velocity field, denoted as up or down arrows showing direction of motion in $x$, changes from negative to positive in the above toy example. }
    \label{fig:overview}
\end{figure}

Many state-of-the-art generative models that aim to model a data distribution $\rho_1(x)$ learned from samples $\{x_1\}_{i=1}^N$ do so by means of dynamically mapping samples from a reference distribution $x_0 \sim \rho_0$. This mapping is defined by a drift coefficient in a flow \citep{albergo2022building, lipman2022flow} or diffusion process \citep{song2020score, ho2020denoising}, e.g. the transport for an ordinary differential equation (ODE) is defined by
\begin{align}
\label{eq:ode}
    \dot x_t = b_t(x_t), \qquad x_0 \sim \rho_0,
\end{align}
where $b_t:[0,1] \times \mathbb R^{d} \rightarrow \mathbb R^d$ is a vector field that governs the transport such that the solution to \cref{eq:ode} at time $1$ produces a sample $x_1 \sim \rho_1$. The density $\rho_t$ of this process satisfies the continuity equation 
\begin{align}
    \label{eq:ce}
    \partial_t \rho_t + \nabla \cdot (b_t \rho_t) = 0, \qquad \rho_{t=0} = \rho_0.
\end{align}
%
% An arbitrary integrator of \cref{eq:ode} is given by the \textit{flow map} $X_{s,t}(x): [0,1]^2 \times \mathbb R^d \rightarrow \mathbb R^d$ defined as
% \begin{align}
% \label{eq:map}
%     X_{s,t}(x_s) = x_t \qquad  \forall s,t \in [0,1], 
% \end{align}
% which directly maps solutions of the dynamics at time $s$ to solutions at any time $t$ \cite{song_consistency_2023, boffi_flow_2024, sabour2025align}. It is convenient to represent the flow map in a universal, residual form
% %
% \begin{align}
%     \label{eq:map:resid}
%     X_{s,t}(x) = x + (t-s) v_{s,t}(x)
% \end{align}
% %
% so that its relation to $b_t$ via the tangent relation \cite{kim_consistency_2024}
% \begin{align}
% \label{eq:tangent}
%     \lim_{s\rightarrow t} \partial_t X_{s,t}(x) = b_t(x) = v_{t,t}(x)
% \end{align}
% can be made explicit. From this, the flow map is uniquely characterized in conjunction with \cref{eq:tangent} by satisfying any of the Eulerian, Lagrangian, and consistency conditions
% \begin{align}
% \label{eq:map:rules}
%     \partial_t X_{s,t}(x) = v_{t,t}((X_{s,t}(x)), \quad \partial_s X_{s,t}(x) + v_{s,s}(x) \cdot \nabla X_{s,t}(x) = 0, \quad X_{u,t}(X_{s,u}(x)) = X_{s,t}(x)
% \end{align}
% as detailed in \cite{boffi2025build}.
%
In generative modeling, our aims are given a path $(\rho_t)_{t\in[0,1]}$, to learn $b_t$ over neural networks such that the marginal law arising from \cref{eq:ode} satisfies \cref{eq:ce} with the given $\rho_t$. A highly scalable, unifying perspective for dynamical transport models is that of stochastic interpolants \citep{albergo2022building, albergo2023stochastic}. A stochastic interpolant $I_t(x_0, x_1): [0,1] \times \mathbb R^d \times \mathbb R^d \rightarrow \mathbb R^d$ defined as
\begin{align}
    \label{eq:si}
    I_t := \alpha_t x_0 + \beta_t x_1, \qquad (x_0, x_1) \sim \rho_0(x_0)\rho_1(x_1),
\end{align}
where $\alpha_t, \beta_t$ are functions of time satisfying $\alpha_0 = \beta_1 = 1$ and $\alpha_1 = \beta_0 = 0$, is a stochastic process which specifies a path $\rho_t := \textrm{Law}(I_t)$. Importantly, a velocity field associated to this $\rho_t$ which solves \cref{eq:ce} has a closed form which is given by $b_t(x) = \mathbb E[\dot I_t | I_t = x]$, where the expectation is taken over the coupling $(x_0, x_1) \sim \rho_0(x_0)\rho_1(x_1)$ conditional on $I_t = x$. Plugging this expression into a regression loss function to learn $b_t$ over neural networks gives, by the tower property of the conditional expectation,
\begin{align}
\label{eq:b:loss}
    b_t = \arg\min_{\hat b_t} \int_0^1 \mathbb E \left | \hat b_t(I_t) - \dot I_t \right |^2 dt.
\end{align}
This procedure is the backbone of various large-scale generative models across various domains such as image and video generation \citep{esser2024scaling} and protein design \citep{geffner2025proteina}.
% Minimizing \cref{eq:b:loss} over $\hat v_{t,t}$ and enforcing any of the equations in \cref{eq:map:rules} via physics-informed neural network (PINN) losses is sufficient to learn the flow map directly, see \cite{boffi2025build}. 
The main question to keep in mind going forward is: \textit{how is the solution $b_t$ of one transport problem related to the solution of another?}
\paragraph{Flow Maps.} %
An arbitrary integrator of \cref{eq:ode} is given by the \textit{flow map} $X_{s,t}(x): [0,1]^2 \times \mathbb R^d \rightarrow \mathbb R^d$ defined as
\begin{align}
\label{eq:map}
    X_{s,t}(x_s) = x_t \qquad  \forall s,t \in [0,1], 
\end{align}
which directly maps solutions of the ODE $(x_u)_{u \in [0,1]}$ at time $s$ to solutions at any time $t$ \citep{song_consistency_2023, boffi_flow_2024, sabour2025align}. It is convenient to represent the flow map in a universal, residual form
\begin{align}
    \label{eq:map:resid}
    X_{s,t}(x) = x + (t-s) v_{s,t}(x),
\end{align}
so that its relation to $b_t$ is given by the tangent relation \citep{kim_consistency_2024}
\begin{align}
\label{eq:tangent}
    \lim_{s\rightarrow t} \partial_t X_{s,t}(x) = b_t(x) = v_{t,t}(x).
\end{align}
From this, the flow map is uniquely characterized in conjunction with \cref{eq:tangent} by satisfying any of the Eulerian, Lagrangian, and consistency conditions as detailed in \cite{boffi_flow_2024, boffi2025build}:
\begin{align}
\label{eq:map:rules}
    \partial_t X_{s,t}(x) = v_{t,t}(X_{s,t}(x)), \quad \partial_s X_{s,t}(x) + v_{s,s}(x) \cdot \nabla X_{s,t}(x) = 0, \quad X_{u,t}(X_{s,u}(x)) = X_{s,t}(x).
\end{align}

Minimizing \cref{eq:b:loss} over $\hat v_{t,t}$ and enforcing any of the equations in \cref{eq:map:rules} via physics-informed neural network (PINN) losses is sufficient to learn the flow map directly. As such, the objectives defined below could also be applied to flow maps. For the sake of clarity, we limit experimental demonstrations just to velocity parameterizations.

\subsection{Fine-tuning and Sampling as Tilting}

One might ask how this $b_t$ could be modified to $b_{t,a}$ so that it solves the transport not for $\rho_1$, but rather the tilted distribution $\rho_{1,a}$ which defines our fine-tuning or sampling problem. That is, how are the velocity fields $b_{t,a=0}$ and $b_{t, a >0}$ related, and is there a learning paradigm that would allow us to estimate $b_{t,a}$ when initially given access only to the ground truth velocity field $b_{t,0}$? This would allow us to ultimately evolve $b_{t,a}$ all the way to $b_{t,1}$, which would be the velocity field that can be used to directly sample the tilted distribution. We emphasize that here $t$ indexes the transport time of the generative flow, while $a$ controls annealing toward the reward-tilted target, not the progression of the generative dynamics itself. We now introduce our method focused on this evolution, which we call \textbf{Tilt Matching (TM)}, a scalable procedure for adapting velocity fields under tilting. 
\begin{propbox}
    {\textbf{Core Problem.} Given a velocity field $b_t$ which generates samples from $\rho_1$, how can we modify it to produce a velocity field $b_{t,a}$ which generates samples from the tilted target distribution $\rho_{1,a}(x) \propto \rho_1 e^{a r(x)}$?}
    
\end{propbox}

% \paragraph{Fine-tuning} 

% \paragraph{Sampling} Choose $\rho_t(x) \sim e^{-|x|^2 / 2} \forall t \in [0,1]$ and $r_t(x) = t(|x^2|/2 - U(x)$ where $U(x)$ is a potential energy under which we wish to sample.  

\section{Deriving Tilt Matching}

To approach this question, consider modifying \cref{eq:si} so that it instead uses samples from the tilted measure $x_1^a \sim \rho_{1,a}$, and so we define the tilted interpolant by
\begin{align}
    \label{eq:si:a}
    I_t^a := \alpha_t x_0 + \beta_t x_1^a, \qquad (x_0, x_1^a) \sim \rho_0(x_0)\rho_{1,a}(x_1^a),
\end{align}
and define $\rho_{t,a} := \mathrm{Law}(I_t^a)$. Learning the velocity field $b_{t,a}(x) = \mathbb E[\dot I_t^a | I_t^a = x]$ directly from this interpolant would be convenient, but we do not  have samples a priori under $\rho_{t,a>0}$ to construct it, so this object is not immediately useful. However, it is possible to define $b_{t,a}$ in terms of the original interpolant, which we do have access to, combined with weights via the following proposition.

\begin{propbox}
\begin{restatable}{proposition}{Esscher}
\label{prop:esscher_transform}
(Esscher Transform.)
Let $I_t^a = \alpha_t x_0 + \beta_t x_1^a$ be the interpolant constructed from samples $x_1^a \sim \rho_{1,a}$. Then the augmented velocity field $b_{t,a}(x)$ is given by:
\begin{align}
\label{eq:bta}
    b_{t,a}(x) = \frac{\mathbb E[ \dot I_t^0 e^{ar(x_1)} | I_t^0 = x]}{\mathbb E[e^{ar(x_1)} | I_t^0 = x]}.
\end{align}
Furthermore, for any shift $h$ from $a$ to $a+h$, the updated velocity $b_{t,a+h}(x)$ satisfies:
\begin{equation}
\label{eq:btah}
    b_{t,a+h}(x) = \frac{\mathbb{E}[\dot{I}_{t}^{a}e^{hr(x_{1}^{a})}|I_{t}^{a}=x]}{\mathbb{E}[e^{hr(x_{1}^{a})}|I_{t}^{a}=x]}.
\end{equation}
\end{restatable}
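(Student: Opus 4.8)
The plan is to prove both identities by a single change-of-measure (importance-sampling) argument, executed at the level of conditional expectations. The starting point is that exponential tilting of the \emph{endpoint} only reweights the joint coupling by a factor depending on $x_1$ alone. Writing $Z_a := \mathbb{E}_{x_1\sim\rho_1}[e^{ar(x_1)}]$ so that $\rho_{1,a} = \rho_1 e^{ar}/Z_a$, the joint law of the tilted coupling $(x_0,x_1^a)\sim\rho_0(x_0)\rho_{1,a}(x_1^a)$ is absolutely continuous with respect to that of the untilted coupling $(x_0,x_1)\sim\rho_0(x_0)\rho_1(x_1)$, with Radon--Nikodym derivative $e^{ar(x_1)}/Z_a$; equivalently, $\mathbb{E}[\psi(x_0,x_1^a)] = Z_a^{-1}\,\mathbb{E}[\psi(x_0,x_1)e^{ar(x_1)}]$ for every integrable $\psi$. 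First I would record this and note that $I_t^a = \alpha_t x_0 + \beta_t x_1^a$ and $\dot I_t^a = \dot\alpha_t x_0 + \dot\beta_t x_1^a$ are the \emph{same} fixed affine functions of the coupling as $I_t^0$ and $\dot I_t^0$, merely evaluated on the reweighted pair.

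Next I would invoke the elementary Bayes-type rule for conditional expectations under a change of measure: if $Q\ll P$ with $dQ/dP = L$ and $\mathcal G$ is any sub-$\sigma$-algebra, then $\mathbb{E}_Q[f\mid\mathcal G] = \mathbb{E}_P[fL\mid\mathcal G]\,/\,\mathbb{E}_P[L\mid\mathcal G]$ for $P$-integrable $f$. This is verified in two lines by testing against a bounded $\mathcal G$-measurable $\xi$, using $\mathbb{E}_Q[\,\cdot\,]=\mathbb{E}_P[\,\cdot\,L]$ together with the tower property; importantly, any multiplicative constant in $L$ (here $Z_a^{-1}$) cancels between numerator and denominator, so one may take $L=e^{ar(x_1)}$ directly. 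I would then apply this with $P$ the untilted coupling law, $Q$ the tilted coupling law, $L=e^{ar(x_1)}$, $\mathcal G = \sigma(I_t^0)$, and $f=\dot I_t^0$. Since $b_{t,a}(x)=\mathbb{E}[\dot I_t^a\mid I_t^a=x]=\mathbb{E}_Q[\dot I_t^0\mid I_t^0=x]$ by the observation above, the rule yields \cref{eq:bta} immediately.

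For \cref{eq:btah} I would re-run the identical argument one step up the annealing path: since $\rho_{1,a+h}\propto\rho_{1,a}\,e^{hr}$, the joint law at level $a+h$ is absolutely continuous w.r.t. that at level $a$ with weight $\propto e^{hr(x_1^a)}$, so taking $P$ to be the coupling law at level $a$, $L=e^{hr(x_1^a)}$, $\mathcal G=\sigma(I_t^a)$ and $f=\dot I_t^a$ gives \cref{eq:btah}. (Alternatively \cref{eq:btah} can be deduced from \cref{eq:bta} by regrouping $e^{(a+h)r}=e^{ar}e^{hr}$ inside the conditional expectations, but re-running the base argument from the tilted interpolant is cleaner.)

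I do not anticipate a genuine obstacle. The conceptual crux is just the reweighting identity together with the fact that the tilt weight depends only on the endpoint $x_1$, so it survives the conditioning on $I_t$ unchanged; the only point needing care is that the conditional expectations be well defined as regular conditional distributions, which holds under the mild assumptions on $\rho_0,\rho_1$ and on $\alpha_t,\beta_t$ that make $I_t^a$ admit a density for $t\in(0,1)$. One can, if desired, bypass all measure-theoretic bookkeeping by phrasing the whole argument through the disintegration of $\rho_0\otimes\rho_{1,a}$ along the map $(x_0,x_1)\mapsto I_t^a$.
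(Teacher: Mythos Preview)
Your proposal is correct and takes a genuinely different route from the paper. The paper derives \cref{eq:bta} by writing $\rho_{t,a}(x)=Z_a^{-1}\mathbb{E}[\delta(x-I_t^0)e^{ar(x_1^0)}]$, differentiating in $t$ through the delta function to obtain a divergence form, matching against the continuity equation $\partial_t\rho_{t,a}+\nabla\cdot(b_{t,a}\rho_{t,a})=0$, and then dividing numerator and denominator by $\rho_{t,0}(x)$ to expose the conditional expectations. Your argument is the pure change-of-measure version: you take the identity $b_{t,a}(x)=\mathbb{E}[\dot I_t^a\mid I_t^a=x]$ as given (as the paper does in the surrounding text), observe that the tilted coupling is absolutely continuous with respect to the untilted one with weight $e^{ar(x_1)}/Z_a$, and apply the abstract Bayes rule $\mathbb{E}_Q[f\mid\mathcal G]=\mathbb{E}_P[fL\mid\mathcal G]/\mathbb{E}_P[L\mid\mathcal G]$ with $\mathcal G=\sigma(I_t^0)$. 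Your approach is shorter and sidesteps the delta-function calculus entirely; the paper's route has the minor expository benefit of re-verifying, in passing, that the resulting $b_{t,a}$ indeed transports $\rho_{t,a}$ along the continuity equation, though this is already implied by the general stochastic interpolant construction. Both arguments handle \cref{eq:btah} identically, by replacing the base level $0$ with level $a$.
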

\end{propbox}
Proposition \ref{prop:esscher_transform} is proven in Appendix \ref{proof:esscher_transform} and is related to the Esscher transform \citep{esscher1932}, which characterizes how densities evolve under exponential tilts. In principle, \cref{eq:btah} can be used to compute the final target velocity field $b_{t,1}(x)$ which is used to sample the fully tilted distribution. However, if $h$ is large, then the variance of this expression may make any computational realizations of it impractical. Instead, by taking the derivative of \cref{eq:btah} with respect to $a$, we can ask how $b_{t,a}$ should evolve to anneal it toward our target velocity field. The following proposition shows that the \textit{evolution} of the velocity field $b_{t,a}(x)$ associated to \cref{eq:si:a} with respect to $a$ has a closed form defined solely in terms of known or learnable quantities:
\begin{propbox}
\begin{restatable}{proposition}{dBda}
\label{prop:acm}
(Covariance ODE.) 
% Let $I_t^a = \alpha_t x_0 + \beta_t x_1^a$ be the interpolant constructed from samples $x_1^a \sim \rho_{1,a}(x)$. 
The augmented drift $b_{t,a}(x)$ satisfies
\begin{align}
\label{eq:b:da}
    \frac{ \partial b_{t,a}(x)}{\partial a}  = \mathbb E[ \dot I_t^a r(x_1^a) | I_t^a = x] -  b_{t,a}(x) \,\mathbb E[ r(x_1^a) | I_t^a = x], \qquad b_{t,a=0}(x) = b_t(x),
\end{align}
where the expectation is taken over the law of $I_t^a$ conditional on $I_t^a = x$. The right-hand side of this equation is the conditional covariance $\mathrm{Cov}_a(\dot I_t^a, r(x_1^a)\,|\, I_t^a = x)$.
\end{restatable}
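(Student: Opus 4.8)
The plan is to obtain the ODE by differentiating the Esscher identity \cref{eq:btah} of Proposition \ref{prop:esscher_transform} at $h=0$. Fixing $t$, $x$, and a base value $a$, observe that $\frac{\partial}{\partial a} b_{t,a}(x) = \frac{d}{dh}\big|_{h=0} b_{t,a+h}(x)$, so it suffices to differentiate the right-hand side of \cref{eq:btah} in $h$. I would set $N(h) := \mathbb{E}[\dot I_t^a\, e^{h r(x_1^a)}\mid I_t^a = x]$ and $D(h) := \mathbb{E}[e^{h r(x_1^a)}\mid I_t^a = x]$, so that $b_{t,a+h}(x) = N(h)/D(h)$, and then apply the quotient rule at $h=0$.

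Evaluating at $h=0$ gives $D(0) = 1$ and $N(0) = \mathbb{E}[\dot I_t^a \mid I_t^a = x] = b_{t,a}(x)$, the last equality being exactly the defining characterization of the stochastic-interpolant velocity field, here applied to the tilted interpolant $I_t^a$. Differentiating $e^{h r(x_1^a)}$ in $h$ and exchanging the derivative with the conditional expectation yields $D'(0) = \mathbb{E}[r(x_1^a)\mid I_t^a = x]$ and $N'(0) = \mathbb{E}[\dot I_t^a\, r(x_1^a)\mid I_t^a = x]$. The quotient rule then gives
\[
\frac{\partial}{\partial a} b_{t,a}(x) = \frac{N'(0)D(0) - N(0)D'(0)}{D(0)^2} = \mathbb{E}[\dot I_t^a\, r(x_1^a)\mid I_t^a = x] - b_{t,a}(x)\,\mathbb{E}[r(x_1^a)\mid I_t^a = x],
\]
which is precisely \cref{eq:b:da}. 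The initial condition $b_{t,a=0}(x) = b_t(x)$ is immediate from $b_{t,0}(x) = \mathbb{E}[\dot I_t^0 \mid I_t^0 = x] = b_t(x)$, equivalently by setting $a = 0$ in \cref{eq:bta}. To identify the right-hand side with the conditional covariance, I would invoke $\mathrm{Cov}(U,V\mid\cdot) = \mathbb{E}[UV\mid\cdot] - \mathbb{E}[U\mid\cdot]\,\mathbb{E}[V\mid\cdot]$ with $U = \dot I_t^a$, $V = r(x_1^a)$, conditioned on $\{I_t^a = x\}$, using once more $\mathbb{E}[\dot I_t^a\mid I_t^a = x] = b_{t,a}(x)$; the displayed expression is then $\mathrm{Cov}_a(\dot I_t^a,\, r(x_1^a)\mid I_t^a = x)$.

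The only genuine obstacle is justifying the interchange of $d/dh$ with the conditional expectation in computing $N'(0)$ and $D'(0)$, which is where a mild regularity hypothesis on $r$ enters; I would state it at the outset of the proof. It suffices that $h \mapsto \mathbb{E}[e^{h r(x_1^a)}\mid I_t^a = x]$ — and its counterpart carrying the extra factor $\dot I_t^a = \dot\alpha_t x_0 + \dot\beta_t x_1^a$, which is affine in the endpoints — be finite, hence real-analytic, on a neighborhood of $h = 0$. This holds in particular when $r$ is bounded, and more generally whenever $r$ grows slowly enough relative to the tails of $\rho_0$ and $\rho_{1,a}$ that these conditional moment generating functions are finite; under such a hypothesis dominated convergence licenses the exchange and the computation is fully rigorous. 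Everything else is the quotient rule and the definition of covariance.

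An alternative route is to differentiate \cref{eq:bta} directly in $a$, but that requires re-deriving the change-of-measure identity $\mathbb{E}[\dot I_t^0\, r(x_1)\, e^{a r(x_1)}\mid I_t^0 = x]\big/\mathbb{E}[e^{a r(x_1)}\mid I_t^0 = x] = \mathbb{E}[\dot I_t^a\, r(x_1^a)\mid I_t^a = x]$ en route, so routing through the one-step form \cref{eq:btah} is the cleaner path and is the one I would present.
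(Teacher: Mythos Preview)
Your proof is correct and essentially the same as the paper's: both differentiate the Esscher representation of $b_{t,a}$ via the quotient rule. The paper differentiates \cref{eq:esscher_proof} (the $I_t^0$-based ratio) directly in $a$ and then rewrites the result as conditional expectations under the tilted law, whereas you differentiate the incremental form \cref{eq:btah} in $h$ at $h=0$, which lands directly in the $I_t^a$-conditional form without the change-of-measure rewrite; your added remarks on the regularity needed to interchange differentiation and expectation go beyond what the paper states.
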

\end{propbox}
Proposition \ref{prop:acm} is proven in Appendix \ref{proof:acm}. The above relation can be interpreted as a dynamical formulation of the Esscher transform arising from \cref{eq:btah}. Applied to stochastic interpolant velocity fields, tilting by $e^{hr(x)}$ induces a flow on $b_{t,a}$ whose infinitesimal generator is the conditional covariance between the interpolant and the reward. Importantly, this evolution of $b_{t,a}$ in $a$ depends only on $b_{t,a}$, the modified interpolant \cref{eq:si:a}, and the reward.

\subsection{Explicit Tilt Matching}
We begin by deriving our first concrete algorithm using the Covariance ODE (Proposition \ref{prop:acm}). Notice that if we have access to $b_{t,a}$, then we can also produce samples $x_1^a \sim \rho_{1,a}$, which together are sufficient to compute the derivative in \cref{eq:b:da}. This suggests that the corrections to $b_{t,a=0}$ that need to be learned to sample the true tilted density can be learned in an iterative fashion by discretizing \cref{eq:b:da}. That is, for $0<h \ll 1$, we can write an \textbf{explicit Euler discretization} as
\begin{align}
\label{eq:acm:Euler}
b_{t,a+h}(x) &= b_{t,a}(x) + h \frac{ \partial b_{t,a}(x)}{\partial a} + \mathcal O(h^2) \\
&= b_{t,a}(x)
+ h\Big(
\mathbb{E}\!\left[\dot I_t^a r(x_1^a) \middle| I_t^a = x\right]
- b_{t,a}(x)\,\mathbb{E}\!\left[r(x_1^a) \middle| I_t^a = x\right]
\Big) + \mathcal O(h^2).
\end{align} 
This perspective suggests an iterative, covariance-guided procedure: starting from $b_{t,0}$, one can generate successive updates $b_{t,h}, b_{t,2h}, \ldots, b_{t,1}$ that gradually transform the velocity field toward the fully tilted distribution. As $h \to 0$, this discretization recovers the continuous evolution in \cref{eq:b:da}, ensuring convergence to the desired $b_{t,1}$. To formalize this, we introduce the \textbf{residual operator}:
\begin{align}
\label{eq:acm:target}
T_{t,a,h}^{\mathrm{ETM}}
\colonequals
b_{t,a}(I_t^a)
+ \underbrace{h\Big(\dot I_t^a\, r(x_1^a) - b_{t,a}(I_t^a)\, r(x_1^a)\Big)}_{\text{residual }}.
\end{align}
The following proposition shows that $b_{t,a+h}$ can be efficiently regressed and is first-order accurate using what we call \textbf{Explicit Tilt Matching (ETM)}:
\begin{propbox}
\begin{restatable}{proposition}{ACM}
\label{prop:iterative}
(Explicit Tilt Matching.) Let $h>0$. Then, the unique minimizer of the regression objective
\begin{align}
\label{eq:acm:loss}
\mathcal L^{\mathrm{ETM}}_{a\to a+h}(\hat b)
\colonequals
\int_0^1 \mathbb{E}\,\big\|\hat b_t(I_t^a) - T_{t,a,h}^{\mathrm{ETM}}\big\|^2\,dt,
\end{align}
is given by
\begin{align}
    \label{eq:ba:minimzer}
    \hat b_{t,a+h}(x)
= \mathbb{E}\!\left[T_{t,a,h}^{\mathrm{ETM}}\,\middle|\, I_t^a = x\right].
\end{align}
So, training $\hat b_{t,a+h}$ to optimality on \cref{eq:acm:loss} produces a first-order accurate Euler update of the tilted velocity.
\end{restatable}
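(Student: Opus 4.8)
The proposition has two parts: (i) the minimizer of the idealized regression objective $\mathcal L^{\mathrm{ETM}}_{a\to a+h}$ is the conditional expectation in \cref{eq:ba:minimzer}, and (ii) that minimizer agrees with the true tilted velocity $b_{t,a+h}$ up to $\mathcal O(h^2)$. Part (i) is the standard $L^2$-projection characterization of conditional expectation, applied slice-by-slice in $t$; part (ii) is an application of Proposition~\ref{prop:acm} followed by Taylor's theorem in $a$. I would organize the proof in exactly these two steps.

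\textbf{Step 1 (pointwise $L^2$ projection).} The integrand $t\mapsto \mathbb E\|\hat b_t(I_t^a) - T_{t,a,h}^{\mathrm{ETM}}\|^2$ is nonnegative, and $\hat b$ enters the objective only through its time-slices $\hat b_t$, which may be chosen independently for (a.e.)\ $t$. Hence it suffices to minimize, for each fixed $t$, the functional $\hat b_t \mapsto \mathbb E\|\hat b_t(I_t^a) - T_{t,a,h}^{\mathrm{ETM}}\|^2$ over square-integrable measurable functions of $I_t^a$. Writing an arbitrary competitor as $\hat b_t = \mathbb E[T_{t,a,h}^{\mathrm{ETM}}\mid I_t^a=\cdot] + g$, the cross term vanishes by the tower property because $g(I_t^a)$ is $\sigma(I_t^a)$-measurable and $\mathbb E[\,T_{t,a,h}^{\mathrm{ETM}} - \mathbb E[T_{t,a,h}^{\mathrm{ETM}}\mid I_t^a]\mid I_t^a\,]=0$; therefore the objective equals $\mathbb E\|T_{t,a,h}^{\mathrm{ETM}} - \mathbb E[T_{t,a,h}^{\mathrm{ETM}}\mid I_t^a]\|^2 + \mathbb E\|g(I_t^a)\|^2$, minimized precisely when $g=0$ a.s. Strict convexity of $\|\cdot\|^2$ gives uniqueness up to $\mathrm{Law}(I_t^a)$-null sets, establishing \cref{eq:ba:minimzer}. (The only ingredient to check here is $T_{t,a,h}^{\mathrm{ETM}}\in L^2$, which holds under the moment hypotheses on $\dot I_t^a$ and $r(x_1^a)$ already in force for Propositions~\ref{prop:esscher_transform} and~\ref{prop:acm}.)

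\textbf{Step 2 (identify the minimizer to first order).} Compute $\mathbb E[T_{t,a,h}^{\mathrm{ETM}}\mid I_t^a=x]$ from the definition \cref{eq:acm:target}. Since $b_{t,a}(I_t^a)$ is a measurable function of $I_t^a$, it factors out of the conditional expectation, so
\begin{align*}
\mathbb E[T_{t,a,h}^{\mathrm{ETM}}\mid I_t^a=x]
= b_{t,a}(x) + h\Big(\mathbb E[\dot I_t^a\, r(x_1^a)\mid I_t^a=x] - b_{t,a}(x)\,\mathbb E[r(x_1^a)\mid I_t^a=x]\Big).
\end{align*}
By Proposition~\ref{prop:acm}, the parenthesized quantity equals $\partial_a b_{t,a}(x)$, so the optimal $\hat b_{t,a+h}(x)$ is $b_{t,a}(x) + h\,\partial_a b_{t,a}(x)$. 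Taylor-expanding $a\mapsto b_{t,a}(x)$ — legitimate under the same regularity used to differentiate \cref{eq:btah} in obtaining Proposition~\ref{prop:acm}, e.g.\ $b_{t,\cdot}(x)\in C^2$ on the relevant interval with locally bounded second derivative — gives $b_{t,a+h}(x) = b_{t,a}(x) + h\,\partial_a b_{t,a}(x) + \mathcal O(h^2)$, so the minimizer of \cref{eq:acm:loss} coincides with the true tilted velocity $b_{t,a+h}$ with error $\mathcal O(h^2)$; i.e.\ it is a first-order-accurate Euler update.

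\textbf{Main obstacle.} The argument is essentially bookkeeping around two classical facts (the $L^2$-projection characterization of conditional expectation and Taylor's theorem), so no step is genuinely hard. The two delicate points are: (a) justifying $T_{t,a,h}^{\mathrm{ETM}}\in L^2$ and the interchange "minimize the time integral $\Leftrightarrow$ minimize each slice," both of which reduce to the moment/measurability hypotheses already assumed for Proposition~\ref{prop:acm}; and (b) making the $\mathcal O(h^2)$ statement quantitative, which requires a uniform-in-$x$ (or $L^2(\rho_{t,a})$) bound on $\partial_a^2 b_{t,a}$. I would treat (b) as a standing regularity assumption inherited from the derivation of the Covariance ODE rather than reprove it, and state the first-order-accuracy claim accordingly.
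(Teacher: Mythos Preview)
Your proposal is correct and follows essentially the same approach as the paper: invoke the $L^2$-projection characterization of conditional expectation to obtain \cref{eq:ba:minimzer}, then use Proposition~\ref{prop:acm} together with a Taylor expansion of $a\mapsto b_{t,a}(x)$ to conclude first-order accuracy. The paper's proof is a two-sentence sketch of exactly these two steps, so your version is simply a more careful unpacking of the same argument.
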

\end{propbox}
A proof of Proposition \ref{prop:iterative} is provided in Appendix \ref{proof:iterative}. This results shows that iterating for $a_k = k h$ with samples $x_1^{a_k}$ drawn using the current model defines a consistent scheme that converges to $b_{t,1}$ as $h\to 0$. See Algorithm~\ref{alg:etm}.

\paragraph{Regularity of transport.} Annealing across $a$ for $b_{t,a}$ via a procedure like this gives a velocity field $b_{t, a=1}$ with favorable regularity conditions in time $t$, since the ultimate transport from $\rho_{t=0, a=1}$ to $\rho_{t=1,a=1}$ follows the interpolant path. This should make $b_{t,a=1}$ well-posed to be estimated with neural networks, as the transport for such paths starting from the Gaussian is geometrically smooth and does not exhibit any teleportation. At the same time, there are two main approximation errors to account for, which we outline next.

\paragraph{Approximation error due to incomplete minimization of the objective.}
A first source of error arises if the regression problem in \cref{eq:acm:loss} is not minimized exactly at each iteration. This means the learned drift $\hat b_{t,a}$ may deviate from the ideal $b_{t,a}$, with errors compounding over successive updates. To mitigate this, we can introduce importance weights during training, which correct for residual mismatch between the model distribution and the tilted target. Alternatively, in settings where the potentials defining $\rho_{1,a}$ are known, this error can be mitigated by applying MCMC correction steps (e.g., MALA). Both strategies help debias the procedure and prevent incomplete optimization from undermining convergence.

\paragraph{Discretization error in $a$.}
A second source of error comes from discretizing the Covariance ODE in the annealing parameter $a$. Since \cref{eq:b:da} defines a continuous evolution, replacing it with discrete steps introduces a bias. In practice, this issue is negligible: we typically choose $h$ to be very small, so the resulting discretization bias can be almost completely eliminated. Moreover, the step size can be adapted dynamically using diagnostic quantities such as effective sample size (ESS) or the norm of the change in the drift, which ensures the updates remain close to the continuous trajectory. However, computing diagnostics like the ESS can be computationally expensive, as it often requires calculating the divergence of the learned vector field. This motivates alternative approaches that eliminate discretization error by construction, rather than managing it with this adaptive scheme.

\paragraph{PINN objective.} One approach to remove the discretization error would be to parametrize a network $\hat{b}_{t,a}$ to learn over all $a$ continuously, i.e. to minimize the residual 
\begin{align}
    \mathcal{L}_{\text{PINN}}[\hat b] := \int_{[0,1]^2} \mathbb E \Big[ \Big|\frac{\partial \hat b_{t,a}}{\partial a} (I_t^a) - \dot I_t^a r(x_1^a) + \hat b_{t,a}(I_t^a)r(x_1^a)    \Big|^2 \Big]  da dt,
\end{align}
where the expectation is taken over $I_t^a$. However, this loss function requires computing derivatives in $a$ and would still need to be learned by annealing out in $a$ so as to properly construct the Monte Carlo estimator. For purposes that make bookkeeping easier computationally, we propose an alternative approach below. 
\subsection{Implicit Tilt Matching}
\label{sec:itm}
\begin{wrapfigure}[10]{r}{0.35\textwidth}% {r} for right, {l} for left
    \centering
    \vspace{-45pt} % adjust vertical spacing if needed
    \includegraphics[width=0.35\textwidth]{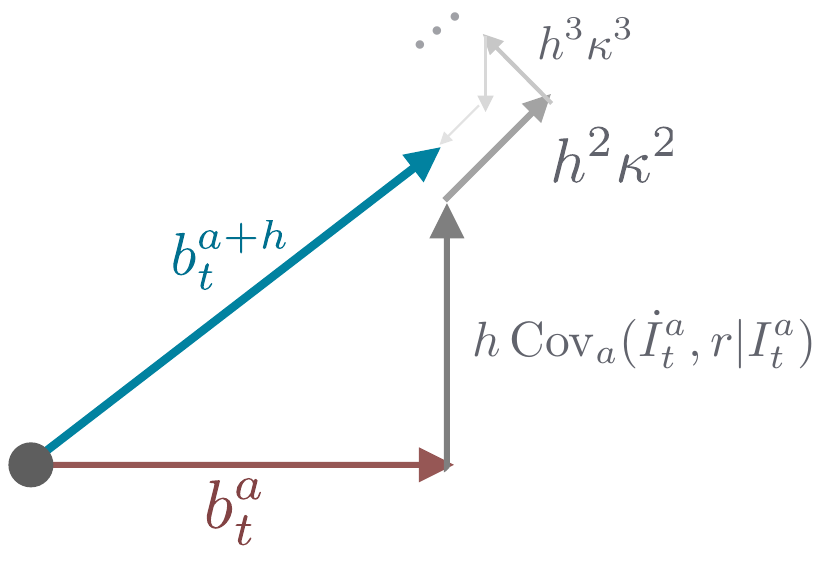} 
    % \vspace{-10pt} % fine-tune spacing below
    \caption{Pictorial additivity of higher-order corrections to $b_{t,a+h}$. The first-order term is the covariance, while higher-order terms are cumulants $\kappa^n$. }
    % \vspace{-10pt} % fine-tune spacing below
\end{wrapfigure}
\paragraph{Higher-order expansions.} The explicit scheme defined by \cref{eq:acm:loss} arises by discretizing the evolution of $b_{t,a}$ given in 
\cref{eq:b:da} with a forward Euler step. While convenient, such updates inherit a discretization bias, which, even if small, might compound over successive steps in $a$. A natural extension is to consider taking higher-order Taylor expansions. Extending \cref{eq:acm:Euler} to all orders gives
\begin{equation}
\label{eq:series}
    b_{t,a+h}(x) = b_{t,a}(x) + \sum_{n>0} \frac{h^n}{n!} \frac{\partial^n}{\partial a^n}\big[ b_{t,a}(x) \big].
\end{equation}
This statement on its own is contentless, but the following proposition shows that each term in the expansion has rich meaning:

\begin{propbox}
\begin{restatable}{proposition}{Expansion}
\label{prop:expansion}
(Tilt Expansion.) 
The $n^{th}$ term $\frac{\partial^n}{\partial a^n}\big[ b_{t,a}(x) \big]$ in the expansion in \cref{eq:series} is the $(n+1)^{\text{th}}$ order joint cumulant of the interpolant's velocity and $n$ instances of the reward, 
$\kappa^n(\dot{I}_t^a, r(x_1^a), \dots ,r(x_1^a))$.   
\end{restatable}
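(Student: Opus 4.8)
The plan is to show that the function $a \mapsto b_{t,a}(x)$ is, up to the obvious conditioning, the derivative of a cumulant generating function, and then read off its Taylor coefficients as joint cumulants. Fix $t$ and $x$ and work throughout conditional on the event $\{I_t^0 = x\}$ under the untilted coupling $(x_0,x_1)\sim\rho_0\rho_1$; write $\mathbb E_x[\,\cdot\,]$ for this conditional expectation. The key object is the conditional cumulant generating function of the reward,
\begin{align}
\label{eq:cgf}
    K_x(a) \colonequals \log \mathbb E_x\!\left[ e^{a r(x_1)} \right],
\end{align}
together with its ``tilted-by-velocity'' companion. From \cref{eq:bta} in Proposition~\ref{prop:esscher_transform} we have the representation
\begin{align}
\label{eq:bta-ratio}
    b_{t,a}(x) = \frac{\mathbb E_x[\dot I_t^0\, e^{a r(x_1)}]}{\mathbb E_x[e^{a r(x_1)}]},
\end{align}
so $b_{t,a}(x)$ is exactly the expectation of $\dot I_t^0$ under the exponentially tilted conditional law with density proportional to $e^{a r(x_1)}$. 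The first step is therefore to recognize the right-hand side of \cref{eq:bta-ratio} as a derivative: introducing a dummy vector source $\lambda\in\mathbb R^d$ and the augmented generating function
\begin{align}
\label{eq:augmented-cgf}
    \Phi_x(\lambda,a) \colonequals \log \mathbb E_x\!\left[ \exp\!\big( \langle \lambda, \dot I_t^0\rangle + a\, r(x_1) \big) \right],
\end{align}
one has $b_{t,a}(x) = \nabla_\lambda \Phi_x(\lambda,a)\big|_{\lambda=0}$.

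The second step is a standard cumulant bookkeeping argument. By definition, the joint cumulants of the random vector-scalar tuple $(\dot I_t^0, r(x_1),\dots,r(x_1))$ (one copy of the velocity, $n$ copies of the reward), all taken under the conditional law given $I_t^0=x$, are the mixed Taylor coefficients of $\Phi_x(\lambda,a)$ at the origin:
\begin{align}
\label{eq:cumulant-def}
    \kappa^n\big(\dot I_t^0, r(x_1),\dots,r(x_1)\big) = \frac{\partial^{n+1}}{\partial\lambda\,\partial a^n}\,\Phi_x(\lambda,a)\Big|_{\lambda=0,\,a=0}.
\end{align}
Differentiating $b_{t,a}(x)=\nabla_\lambda\Phi_x(\lambda,a)|_{\lambda=0}$ $n$ times in $a$ and evaluating at $a=0$ immediately gives $\partial_a^n b_{t,a}(x)|_{a=0} = \kappa^n(\dot I_t^0, r(x_1),\dots,r(x_1))$, which is the $n=0$-basepoint version of the claim. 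To obtain the statement at a general base point $a$, I would use the semigroup structure of \cref{eq:btah}: tilting from $a$ to $a+h$ is the same operation as tilting from $0$ to $h$ but with the conditional law $\rho_0\rho_1$ replaced by the tilted coupling $\rho_0\rho_{1,a}$ and the interpolant $I_t^0$ replaced by $I_t^a$. Hence $b_{t,a+h}(x) = \nabla_\mu \Phi_x^{(a)}(\mu,h)|_{\mu=0}$ with $\Phi_x^{(a)}$ the analogue of \cref{eq:augmented-cgf} built from $(\dot I_t^a, r(x_1^a))$ conditioned on $I_t^a=x$, and differentiating $n$ times in $h$ at $h=0$ yields $\partial_a^n b_{t,a}(x) = \kappa^n(\dot I_t^a, r(x_1^a),\dots,r(x_1^a))$, matching the expansion in \cref{eq:series} term by term. (As a sanity check, the $n=1$ term reproduces $\partial_a b_{t,a}(x) = \mathbb E_x[\dot I_t^a r(x_1^a)] - b_{t,a}(x)\mathbb E_x[r(x_1^a)] = \mathrm{Cov}_a(\dot I_t^a, r(x_1^a)\,|\,I_t^a=x)$, exactly Proposition~\ref{prop:acm}.)

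The main obstacle is analytic rather than algebraic: the formal Taylor series in \cref{eq:series} and the interchange of $\partial_a$ with the conditional expectation in \cref{eq:bta-ratio} both require that the conditional moment generating function $a\mapsto\mathbb E_x[e^{a r(x_1)}]$ be finite in a neighborhood of the relevant interval and smooth there, i.e. that $r(x_1)$ have conditionally sub-exponential tails. I would state this as a standing regularity hypothesis (e.g. $r$ bounded, or $e^{\delta|r|}$ conditionally integrable for some $\delta$ exceeding the range of $a$ of interest), under which $\Phi_x$ is real-analytic in $(\lambda,a)$ near the origin by the usual dominated-convergence / Morera argument, legitimizing every differentiation above. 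The remaining bookkeeping — that the mixed partial $\partial_\lambda\partial_a^n$ of a log-generating function is precisely the joint cumulant with one velocity slot and $n$ reward slots — is the textbook definition of joint cumulants via the multivariate cumulant generating function, so I would simply cite it rather than re-derive the combinatorics.
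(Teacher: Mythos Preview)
Your proposal is correct and takes essentially the same approach as the paper: both identify $b_{t,a+h}(x)$ as the gradient (in the velocity source variable) of the joint conditional cumulant generating function of $(\dot I_t^a, r(x_1^a))$ and then read off the $h$-derivatives at $h=0$ as the stated joint cumulants. The only difference is presentational---the paper defines the generating function directly at basepoint $a$ (conditional on $I_t^a=x$, invoking \cref{eq:btah}) rather than first handling $a=0$ and then shifting via the semigroup structure, and it omits your regularity discussion.
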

\end{propbox}
This result is proven in Appendix \ref{proof:expansion} and relies on a relation between the Esscher representation of $b_{t,a+h}$ and the joint moment generating function of the interpolant's velocity and the reward. Since the cumulants involve higher-order moments of the reward, a Monte Carlo training objective that attempts to match these term by term would be computationally infeasible. 
% However, it is actually possible to match all higher-order terms together.

\paragraph{Expanding to all orders.}
  In order to fully eliminate the discretization error, we should consider all higher-order cumulants. Surprisingly, this expression is tractable as we show in the following proposition. If we define the \textbf{implicit residual operator} as 
\begin{align}
    \label{eq:op:imp}
    T_{t,a,h}^{\mathrm{ITM}}(\hat b)
\colonequals
b_{t,a}(I_t^a) + 
  \underbrace{\big(e^{h r(x_1^a)} - 1\big)\, \big(\dot I_t^a - \textrm{stopgrad}(\hat b_{t})(I_t^a) \big)}_{\text{residual}},
\end{align}
then we can learn the infinite cumulant expansion by directly matching against it.
\begin{propbox}
    \begin{restatable}{proposition}{ITM}(Implicit Tilt Matching.)
    \label{prop:ITM}
        Let $b_{t,a+h}$ be defined to all orders as in \cref{eq:series}. Then 
        \begin{align}
        \label{eq:inf_expansion}
            \sum_{n>0} \frac{h^n}{n!} \frac{\partial^n}{\partial a^n}\big[ b_{t,a}(x) \big] = \mathbb E[\big(e^{h r(x_1^a)} - 1\big)\, \big(\dot I_t^a  -b_{t,a+h}(x)\big) | I_t^a = x ].
        \end{align}
        Furthermore, for any $h$, $b_{t,a+h}$ is the unique fixed point of the gradient descent dynamics induced by the stop-gradient objective:
        \begin{align}
        \label{eq:ITM}
            \mathcal L^{\mathrm{ITM}}_{a\to a+h}(\hat b)
\colonequals
\int_0^1 \mathbb{E}\,\big\|\hat b_t(I_t^a) - T_{t,a,h}^{\mathrm{ITM}}(\hat b)\big\|^2\,dt,
        \end{align}
         where the expectation is taken over  $(x_0, x_1^a)\sim\rho_0(x_0)\rho_{1,a}( x_1^a)$  conditional on $I_t^a = x$. 
    \end{restatable}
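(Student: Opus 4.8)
The plan is to establish the two assertions separately; each reduces to the Esscher representation \cref{eq:btah} of Proposition~\ref{prop:esscher_transform} together with one elementary linear solve. Throughout I abbreviate the conditional expectation $\mathbb E[\,\cdot \mid I_t^a = x]$ by $\mathbb E_x[\,\cdot\,]$, recall that $b_{t,a}(x) = \mathbb E_x[\dot I_t^a]$ by definition of the tilted velocity, and work under the mild integrability hypothesis (automatic if $r$ is bounded, and more generally whenever $e^{hr(x_1^a)}$ and $\dot I_t^a e^{hr(x_1^a)}$ are conditionally integrable) that makes every expectation below finite and \cref{eq:btah} valid; alternatively one reads \cref{eq:series} and \cref{eq:inf_expansion} as identities of formal power series in $h$, in which case the moment/cumulant bookkeeping is purely algebraic and \cref{eq:btah} still serves as the closed form of $b_{t,a+h}$.

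For the closed-form identity, note that by the fully expanded definition \cref{eq:series}, the left-hand side of \cref{eq:inf_expansion} is exactly $b_{t,a+h}(x) - b_{t,a}(x)$, so it suffices to verify
\[
  b_{t,a+h}(x) - b_{t,a}(x) = \mathbb E_x\big[(e^{hr(x_1^a)} - 1)\,(\dot I_t^a - b_{t,a+h}(x))\big].
\]
Expanding the right-hand side gives $\mathbb E_x[e^{hr(x_1^a)}\dot I_t^a] - b_{t,a}(x) - b_{t,a+h}(x)\,\mathbb E_x[e^{hr(x_1^a)}] + b_{t,a+h}(x)$, whose first and third terms cancel because \cref{eq:btah} states precisely that $\mathbb E_x[\dot I_t^a e^{hr(x_1^a)}] = b_{t,a+h}(x)\,\mathbb E_x[e^{hr(x_1^a)}]$; what remains is $b_{t,a+h}(x) - b_{t,a}(x)$, as required.

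For the fixed-point claim, I first pin down the meaning: since the $\textrm{stopgrad}$ copy of $\hat b$ is held fixed under differentiation, a stationary point of the gradient flow of $\mathcal L^{\mathrm{ITM}}_{a\to a+h}$ is a $\hat b$ that equals the $L^2$-projection onto $\sigma(I_t^a)$ of the then-frozen target, i.e.\ (using the least-squares fact already invoked in Proposition~\ref{prop:iterative}) one satisfying $\hat b_t(x) = \mathbb E_x[\,T^{\mathrm{ITM}}_{t,a,h}(\hat b)\,]$ for a.e.\ $(t,x)$, with the $\textrm{stopgrad}$ argument being $\hat b$ itself. Conditioning on $I_t^a = x$ makes $b_{t,a}(I_t^a)$ and $\hat b_t(I_t^a)$ deterministic, so writing $u \colonequals \hat b_t(x)$ this becomes the scalar-coefficient linear equation
\[
  u = b_{t,a}(x) + \mathbb E_x\big[(e^{hr(x_1^a)} - 1)\,\dot I_t^a\big] - u\big(\mathbb E_x[e^{hr(x_1^a)}] - 1\big),
\]
whose coefficient of $u$, namely $\mathbb E_x[e^{hr(x_1^a)}] > 0$, is strictly positive and hence invertible; solving, and cancelling $b_{t,a}(x) = \mathbb E_x[\dot I_t^a]$ against the $-\mathbb E_x[\dot I_t^a]$ coming from the $-1$, yields $u = \mathbb E_x[\dot I_t^a e^{hr(x_1^a)}]/\mathbb E_x[e^{hr(x_1^a)}] = b_{t,a+h}(x)$ by \cref{eq:btah}. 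Thus $b_{t,a+h}$ is the unique stationary point; and since, with the $\textrm{stopgrad}$ term frozen, the loss is a strictly convex quadratic in the live $\hat b$, the gradient-descent dynamics converge to this fixed point.

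The algebra is routine; the step I expect to require the most care is the one just described — making precise what a fixed point of the stop-gradient gradient-descent dynamics is, and justifying that vanishing of the parameter gradient is equivalent to the pointwise conditional-expectation identity above (immediate in the infinite-capacity / nonparametric idealization, but worth stating), together with checking the integrability conditions that legitimize \cref{eq:btah} and the conditional-expectation manipulations used throughout.
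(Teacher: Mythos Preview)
Your proposal is correct and follows essentially the same approach as the paper: both reduce \cref{eq:inf_expansion} to the identity $b_{t,a+h}(x)-b_{t,a}(x)=\mathbb E_x[(e^{hr}-1)(\dot I_t^a-b_{t,a+h}(x))]$ via the Esscher representation \cref{eq:btah}, and both characterize the fixed point by writing out the conditional first-order optimality condition and solving the resulting linear equation using $b_{t,a}(x)=\mathbb E_x[\dot I_t^a]$. The paper presents the two parts in the opposite order and derives the identity by specializing the fixed-point equation to $\hat b=b_{t,a+h}$, but the algebra is the same; your version is slightly more explicit about uniqueness (positivity of $\mathbb E_x[e^{hr}]$) and about what ``fixed point of the stop-gradient dynamics'' means.
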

\end{propbox}
For a proof of Proposition \ref{prop:ITM}, see Appendix \ref{proof:ITM}. This result \textit{removes the discretization error} inherent to ETM and shows that all orders of the correction to the interpolant velocity field are directly learnable. Enforcing that the residual update to  to $b_{t,a+h}$ is given by \cref{eq:inf_expansion} is equivalent to enforcing that the residual update is exact \textbf{\textit{to all orders}}. We call this condition \textbf{Implicit Tilt Matching (ITM)} because the residual term that we add to $b_{t,a}$ in \cref{eq:op:imp} depends on our current estimate $\hat b_t$ of $b_{t,a+h}$, leading to this fixed-point method. See Algorithm~\ref{alg:itm}.

The expression on the right-hand side of \cref{eq:inf_expansion} may seem opaque, but it can be motivated with a simple derivation. Starting from the expression for the velocity $b_{t,a+h}$ in \cref{eq:btah}, we can multiply both sides by $\mathbb E[e^{hr(x)} | I_t^a = x]$ and rearrange terms to obtain the necessary and sufficient condition:
\begin{align}
\label{eq:acm:orth}
\hat b_t(x) = b_{t,a+h}(x) \iff \mathbb E\!\left[
    e^{h r(x_1^a)} \,\big(\hat b_t(x) - \dot I_t^a \big)
    \,\middle|\, I_t^a = x
\right] = 0.
\end{align}
Observe that \cref{eq:acm:orth} is precisely the first-order optimality condition of the ITM objective in \cref{eq:ITM}, which can be seen by taking a functional gradient of the objective conditional on $I_t^a = x$. 
%Also note that if we take $h$ to be small, replace $e^{hr(x_1^a)} \approx 1+ h r(x_1^a)$ and replace one of the $\hat b_{t}$ with $b_{t,a}$, then we recover the optimality condition of ETM. Thus we can view Implicit Tilt Matching as a generalization of the discretized procedure in \cref{eq:acm:Euler} since its linearization recovers the ETM covariance update as specified by \cref{eq:acm:Euler}.

\paragraph{Variance reduction via control variates.}
We can further introduce a scalar (or more generally a matrix-valued) control variate $c_t(x): \mathbb R^{d} \rightarrow \mathbb R$ into \cref{eq:acm:orth} to obtain a generalized optimality condition:
\begin{align}
\label{eq:acm:cv}
\hat b_t(x) = b_{t,a+h}(x) \iff \mathbb E\!\left[c_t(x)\,\big(\hat b_t(x) - b_{t,a}(x)\big) + 
  \big(e^{h r(x_1^a)} - c_t(x)\big)\, \big(\hat b_t(x) - \dot I_t^a \big)
  \,\middle|\, I_t^a = x
\right] = 0 ,
\end{align}
where we used the fact that $\mathbb E [c_t(x) \dot{I}_t^a | I_t^a = x] = \mathbb E [c_t(x) b_{t, a}(x) | I_t^a = x]$.
%: any choice of $c$  preserves the expectation of the condition is unaffected, but the efficiency of training depends critically on the choice of $c$. 
The optimality condition \cref{eq:acm:cv} is valid for any choice of $c_t(x)$ and therefore suggests a family of valid implicit objectives we could use to find $b_{t,a+h}$. We can enforce \cref{eq:acm:cv} with the following stop-gradient objective (valid for $c_t \neq 0$):
\begin{equation}
    \label{eq:c-sg-itm}
    \mathcal L^{\mathrm{c-ITM}}_{a\to a+h}(\hat b) :=
\int_0^1 \mathbb{E}\big\|c_t(I_t^a)\,\big(\hat{b}_{t}(I_t^a) - b_{t,a}(I_t^a)\big) + 
  \big(e^{h r(x_1^a)} - c_t(I_t^a)\big)\, \big(\textrm{stopgrad}(\hat{b}_{t}(I_t^a)) - \dot I_t^a \big)\big\|^2 dt.
\end{equation}
The role of $c_t(x)$ is to control the variance of the Monte Carlo estimator of the loss function. Notice that the choice $c_t(x) = 1$ recovers \cref{eq:ITM} exactly. Moreover, this choice has the convenient property that for $h \ll 1$, it is close to the optimal control variate since $c_t(x) = 1$ clearly minimizes the variance conditional on $I_t^a = x$ when $h = 0$. More generally, one can optimize $c_t(x)$ to minimize the variance, yielding adaptive control variates that further stabilize training. See Appendix \ref{app:control_variate} for details on the control variate joint optimization and analytic expressions for the optimal control variate. Alternatively, we can enforce \cref{eq:acm:cv} via  a regression loss that we minimize using gradient descent, with this c-ITM objective taking the general form (valid for any $c_t$): 
%  Without underbrace:
% \begin{align}
%     \label{eq:c-itm}
%     \mathcal L^{\mathrm{c-ITM}}_{a\to a+h}(\hat b) \colonequals
% \int_0^1 \mathbb{E}\,\big\|\underbrace{c(I_t^a)\,\big(b_{t,a+h}(I_t^a) - b_{t,a}(I_t^a)\big) + 
%   \big(e^{h r(x_1^a)} - c(I_t^a)\big)\, \big(b_{t,a+h}(I_t^a) - \dot I_t^a \big)}_{\text{Monte Carlo Estimator}}\big\|^2\,dt
% \end{align}
\begin{equation}
    \label{eq:c-itm}
    \mathcal L^{\mathrm{c-ITM}}_{a\to a+h}(\hat b) :=
\int_0^1 \mathbb{E}\left [e^{-hr(x_1^a)}\big\|c_t(I_t^a)\big(\hat{b}_{t}(I_t^a)\!-\!b_{t,a}(I_t^a)\big) \!+\! 
  \big(e^{h r(x_1^a)}\!-\!c_t(I_t^a)\big) \big(\hat{b}_{t}(I_t^a)\!-\!\dot I_t^a \big)\big\|^2\right ]dt.
\end{equation}
The two optimization procedures behave similarly since the gradient of \cref{eq:c-sg-itm} is a $c_t(I_t^a)$ scaling of the gradient of \cref{eq:c-itm}. In fact, when $c_t(x) = 1$, gradients of both objectives are identical.

\begin{remark}
    If learning a flow map $X_{s,t}(x) = x + (t-s) v_{s,t}(x)$ which takes solutions of \cref{eq:ode} at time $s$ to its solution at time $t$ and $v_{s,t}(x)$ is learned as a neural network, then enforcing \cref{eq:acm:orth} over $v_{t,t}$ using ITM and any of consistency equations for flow maps would allow us to straightforwardly apply Tilt Matching to the output of any-step maps.
\end{remark}
% \begin{align}
% \label{eq:acm:cv1}
% \mathbb E\!\left[\,b_{t,a+h}(x) - b_{t,a}(x) + 
%   \big(e^{h r(x_1^a)} - 1\big)\, \big(b_{t,a+h}(x) - \dot I_t^a \big)
%   \,\middle|\, I_t^a = x
% \right] = 0 ,
% \end{align}
% which recovers a concrete
% update rule closely resembling the ETM objective. 

% In practice when $h$  is very small, these higher-order cumulants are likely negligible, and this process is still driven by the covariance. Nonetheless, it is now robust to any discretization errors, which we will explore experimentally later.

\subsection{Connection to Doob's $h$-Transform and Stochastic Optimal Control}
\label{sec:doob_connection}
Given the regression loss function for the tilt established in Proposition \ref{prop:ITM}, it is useful to relate the tilted velocity field $b_{t,a}$ to Doob’s $h$-transform \citep{Doob, dai1991stochastic} and stochastic optimal control (SOC) \citep{fleming2012deterministic, domingo-enrich2025adjoint}. We begin by defining the interpolant's value function
\begin{align}
    \label{eq:val_func_def}
    v_{t,a}(x) \colonequals \log \mathbb{E}\left[e^{a r(x_1)} \mid I_t^0 = x\right].
\end{align}
This value function characterizes the tilted marginals along the interpolant path. By definition of the tilt and the tower property of the conditional expectation, we can write $\rho_{t,a}$ as
\begin{align}
    \label{eq:prob_path}
    \rho_{t,a}(x) \propto \mathbb{E}\left[\delta(x - I_t^0)e^{a r(x_1)}\right]
    =
    \mathbb{E}\left[\delta(x - I_t^0)\mathbb{E}\left[e^{a r(x_1)} \mid I_t^0\right]\right]
    =
    \rho_{t,0}(x)\,e^{v_{t,a}(x)} ,
\end{align}
where $\delta$ is the Dirac delta. The value function also determines the drift correction required to realize $\rho_{t,a}$ from a reference stochastic dynamics. To make this connection explicit, we convert the ODE \cref{eq:ode} into an SDE as in \cite{song2020score, albergo2023stochastic}, and consider the reference dynamics with marginals $\rho_{t,0}$:
\begin{align}
    \label{eq:uncontrolled_sde}
    dX_t
    =
    \Big[b_{t,0}(X_t) + \tfrac{\sigma_t^2}{2}\nabla \log \rho_{t,0}(X_t)\Big]\,dt
    + \sigma_t\,dB_t,
    \qquad X_0 \sim \rho_0 .
\end{align}
Doob’s $h$-transform (equivalently, an SOC construction with quadratic control cost) tilts the path measure by the terminal weight $e^{a r(X_1)}$. The resulting controlled dynamics has a drift correction determined by the gradient of the uncontrolled process's value function, $V_{t,a}(x) := \log \mathbb{E}[e^{a r(X_1)}\mid X_t=x]$, yielding the controlled diffusion:
\begin{align}
    \label{eq:controlled_sde}
    dX_t^{a}
    =
    \Big[
    b_{t,0}(X_t^{a})
    + \tfrac{\sigma_t^2}{2}\nabla \log \rho_{t,0}(X_t^{a})
    + \sigma_t^2 \nabla V_{t,a}(X_t^a)
    \Big]\,dt
    + \sigma_t\,dB_t,
    \qquad X_0^{a} \sim \rho_0 .
\end{align}
Interestingly, as we show in the following proposition, the minimizer of Tilt Matching gives the same drift as in this SOC formulation for a specific choice of diffusion coefficient $\sigma_t$.
\begin{propbox}
\begin{restatable}{proposition}{DoobConnection}
\label{prop:doob_connection} (Doob's Probability Flow ODE.)
Define the diffusion coefficient $\frac{\sigma_t^2}{2} = \frac{\dot \beta_t}{\beta_t}\alpha_t^2 -\dot \alpha_t \alpha_t$. Then $b_{t,a}$ which is the solution to \cref{eq:b:da} and the minimizer of $\mathcal L_{0 \rightarrow a}^{\mathrm{ITM}}$ is given by 
\begin{align}   \label{eq:b_ta_analytic}
    b_{t,a}(x) = b_{t,0}(x) + \frac{\sigma_t^2}{2} \nabla v_{t,a}(x), 
\end{align} 
and is precisely the drift corresponding to the probability flow ODE of \cref{eq:controlled_sde}.
\end{restatable}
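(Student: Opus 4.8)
The plan is to prove the statement in two stages: first derive the analytic form \cref{eq:b_ta_analytic} from the Esscher representation \cref{eq:bta}, then show that with this form $b_{t,a}$ coincides with the probability-flow ODE drift of the controlled diffusion \cref{eq:controlled_sde}, the only nontrivial input being that the interpolant value function $v_{t,a}$ equals the value function $V_{t,a}$ of the uncontrolled SDE \cref{eq:uncontrolled_sde}.

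For the analytic form, I would work in the Gaussian-base setting $\rho_0 = \mathcal N(0,\mathrm{I}_d)$ in which the conversion \cref{eq:uncontrolled_sde} is made, so that the conditional law of $I_t^0$ given $x_1$ is $\mathcal N(\beta_t x_1, \alpha_t^2\mathrm{I}_d)$ and hence $\nabla_x \log p_t(x\mid x_1) = -(x-\beta_t x_1)/\alpha_t^2$. Differentiating $v_{t,a}(x) = \log \mathbb E_{x_1\sim\rho_1}[e^{ar(x_1)}p_t(x\mid x_1)] - \log\rho_{t,0}(x)$ under the integral gives $\nabla v_{t,a}(x) = \tfrac{\beta_t}{\alpha_t^2}\big(\mathbb E_a[x_1\mid I_t^0 = x] - \mathbb E_0[x_1\mid I_t^0 = x]\big)$, where $\mathbb E_a[\cdot\mid I_t^0=x]$ denotes the $e^{ar(x_1)}$-reweighted conditional expectation and $\mathbb E_0$ the unweighted one. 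On $\{I_t^0 = x\}$ I would substitute $x_0 = (x-\beta_t x_1)/\alpha_t$ into $\dot I_t^0 = \dot\alpha_t x_0 + \dot\beta_t x_1$, so that the Esscher formula \cref{eq:bta} reads $b_{t,a}(x) = \tfrac{\dot\alpha_t}{\alpha_t}x + (\dot\beta_t - \tfrac{\dot\alpha_t\beta_t}{\alpha_t})\,\mathbb E_a[x_1\mid I_t^0=x]$; subtracting the $a=0$ case and eliminating the common factor $\mathbb E_a[x_1\mid x]-\mathbb E_0[x_1\mid x]$ between the two displays yields $b_{t,a}(x)-b_{t,0}(x) = \big(\tfrac{\dot\beta_t}{\beta_t}\alpha_t^2-\dot\alpha_t\alpha_t\big)\nabla v_{t,a}(x) = \tfrac{\sigma_t^2}{2}\nabla v_{t,a}(x)$, which is \cref{eq:b_ta_analytic}.

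For the SOC identification, I would first note that \cref{eq:uncontrolled_sde} has marginals $\rho_{t,0}$: in its Fokker--Planck equation the score drift and the diffusion term combine to $\tfrac{\sigma_t^2}{2}\Delta\rho_{t,0}-\tfrac{\sigma_t^2}{2}\nabla\cdot(\rho_{t,0}\nabla\log\rho_{t,0})=0$, leaving the continuity equation \cref{eq:ce}. Since \cref{eq:controlled_sde} is by construction the Doob $h$-transform of \cref{eq:uncontrolled_sde} with terminal weight $e^{ar(X_1)}$, its time-$t$ marginal is proportional to $\rho_{t,0}(x)\,\mathbb E[e^{ar(X_1)}\mid X_t = x] = \rho_{t,0}(x)e^{V_{t,a}(x)}$, so its probability-flow ODE has drift $\big(b_{t,0}+\tfrac{\sigma_t^2}{2}\nabla\log\rho_{t,0}+\sigma_t^2\nabla V_{t,a}\big)-\tfrac{\sigma_t^2}{2}\big(\nabla\log\rho_{t,0}+\nabla V_{t,a}\big) = b_{t,0}+\tfrac{\sigma_t^2}{2}\nabla V_{t,a}$. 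Comparing with \cref{eq:b_ta_analytic}, the proof is complete once $v_{t,a}=V_{t,a}$ (up to an additive constant in $x$).

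The identity $v_{t,a}=V_{t,a}$ is the crux, and I expect it to be the main obstacle. I would prove it by showing that $g_t(x):=e^{v_{t,a}(x)}=\mathbb E[e^{ar(x_1)}\mid I_t^0=x]$ solves the backward Kolmogorov equation associated with the generator of \cref{eq:uncontrolled_sde}. Concretely, \cref{eq:prob_path} gives $\rho_{t,0}\,g_t = Z_a\,\rho_{t,a}$ with $Z_a$ independent of $t$; differentiating in $t$, using that $\rho_{t,0}$ and $\rho_{t,a}$ solve the continuity equation with drifts $b_{t,0}$ and $b_{t,a}$ (Proposition~\ref{prop:esscher_transform}), and substituting $b_{t,a}=b_{t,0}+\tfrac{\sigma_t^2}{2}\nabla\log g_t$ from the first stage, the $\nabla\cdot(b_{t,0}\rho_{t,0})$ contributions cancel and one is left with $\partial_t g_t + \big(b_{t,0}+\tfrac{\sigma_t^2}{2}\nabla\log\rho_{t,0}\big)\cdot\nabla g_t + \tfrac{\sigma_t^2}{2}\Delta g_t = 0$. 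Since $I_1^0=x_1$, the terminal data is $g_1=e^{ar}$, so Feynman--Kac gives $g_t(x)=\mathbb E[e^{ar(X_1)}\mid X_t=x]=e^{V_{t,a}(x)}$, i.e.\ $v_{t,a}=V_{t,a}$; substituting back, the probability-flow ODE drift of \cref{eq:controlled_sde} is $b_{t,0}+\tfrac{\sigma_t^2}{2}\nabla v_{t,a}=b_{t,a}$. The remaining work is routine regularity bookkeeping — differentiation under the integral sign for $\nabla v_{t,a}$ and the integrability and smoothness hypotheses needed to invoke Feynman--Kac — with the reweighted-conditional-expectation algebra of the first stage and this cancellation identity carrying all the actual content.
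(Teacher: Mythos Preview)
Your proof is correct but takes a genuinely different route from the paper's. You establish \cref{eq:b_ta_analytic} \emph{statically}, by computing $b_{t,a}-b_{t,0}$ and $\nabla v_{t,a}$ separately from the Gaussian posterior and matching coefficients; the paper instead works \emph{dynamically} in $a$, differentiating the Doob-derived drift $\tilde b_{t,a}=b_{t,0}+\tfrac{\sigma_t^2}{2}\nabla V_{t,a}$ and showing $\partial_a\tilde b_{t,a}$ equals the conditional-covariance right-hand side of the Covariance ODE (Proposition~\ref{prop:acm}), then matching coefficients on the resulting covariances to recover the $\sigma_t$ condition. More importantly, the two proofs handle the identification $v_{t,a}=V_{t,a}$ differently: the paper simply \emph{asserts} the endpoint-posterior identity $\mathrm{Law}(X_1\mid X_t)=\mathrm{Law}(I_1\mid I_t)$ for this $\sigma_t$ and uses it to swap $X$ for $I$ inside the value-function expectations, whereas you \emph{derive} $v_{t,a}=V_{t,a}$ from scratch by showing that $g_t=e^{v_{t,a}}$ satisfies the backward Kolmogorov equation of \cref{eq:uncontrolled_sde} (a consequence of your first-stage identity plus the two continuity equations) and invoking Feynman--Kac. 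Your route is therefore more self-contained---it effectively proves the conditional-law matching rather than importing it---while the paper's route ties the result more tightly to Proposition~\ref{prop:acm} and emphasizes the covariance-ODE interpretation.
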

\end{propbox}
See \Cref{app:doob_connection} for a proof. Crucially, the choice of diffusion coefficient in \Cref{prop:doob_connection} ensures that the conditional endpoint laws of the uncontrolled SDE \cref{eq:uncontrolled_sde} are equal to that of the interpolant,  i.e., $\text{Law}(X_1 \mid X_t=x) = \text{Law}(x_1 \mid I_t^0=x)$, which results in the equality of the two value functions, $v_{t,a}(x) = V_{t,a}(x)$. This provides a formal link between Tilt Matching and optimal-control-based viewpoints. At the same time, Tilt Matching avoids standard SOC routes for obtaining $\nabla v_{t,a}$—such as solving the Hamilton--Jacobi--Bellman equation for the value function or backward adjoint SDEs for the control—which typically require simulating trajectories, computing reward gradients, or tackling boundary value problems. Instead, Tilt Matching circumvents these challenges entirely by recovering the optimal control through a simple iterative regression on scalar rewards.

\subsection{Weighted Flow Matching}

We now explore a particular instantiation of c-ITM. In principle, the tilted drift $b_{t,a+h}$ could be obtained by applying flow matching directly to the interpolant $I_t^{a+h}$ with samples $x_1^{a+h} \sim \rho_{1,a+h}$:
\begin{align}
\label{eq:fm_ah}
    b_{t,a+h}
    = \arg\min_{\hat b_t}
    \int_0^1 \mathbb{E}\!\left[
        \big\|\hat b_t(I_t^{a+h}) - \dot I_t^{a+h}\big\|^2
    \right] dt.
\end{align}
Since we do not have samples from $\rho_{t,a+h}$, the expectation in \cref{eq:fm_ah} must be expressed in terms of $\rho_{t,a}$, from which we do have samples. Introducing importance weights leads to what we call the \emph{weighted flow matching} (WFM) objective:
\begin{align}
 \label{eq:wfm}  
     \mathcal {L}^{\mathrm{WFM}}_{a\to a+h}(\hat{b})
    := 
    \int_0^1 \mathbb E\!\left[
        e^{h r(x_1^a)} \, \big\| \hat b_t(I_t^a) - \dot I_t^a \big\|^2
    \right] dt .
\end{align}
Notice that this is precisely the c-ITM loss \cref{eq:c-itm} with $c_t(x) = 0$. Therefore WFM is an \textbf{instantiation} of c-ITM. As such, it has the same expected gradient as any c-ITM variant. What differs between the different algorithms is how the Monte Carlo estimates of the gradients are taken: WFM regresses directly on $\dot I_t^a$, whereas ITM substitutes the dynamics $\dot I_t^a$ of the stochastic interpolant with its conditional expectation $b_{t,a}(I_t^a)$. As such, ITM enjoys strictly lower variance than the WFM objective, at least for sufficiently small $h$.
% To analyze the update implied by \cref{eq:wfm}, consider its first-order optimality condition:
% \begin{align}
%  \label{eq:wfm_optimality}  
%     \int_0^1 \mathbb E\!\left[
%         e^{h r(x_1^a)} \, \big(\hat b_t(x) - \dot I_t^a \big)
%         \,\middle|\, I_t^a = x
%     \right] dt = 0.
% \end{align}
% Expanding the weight as $e^{h r(x_1^a)} = 1 + h r(x_1^a) + o(h)$ yields
% \begin{align}
% \label{eq:wfm_expanded}
%     \int_0^1 \mathbb E\!\left[
%         \big(\hat b_t(x) - \dot I_t^a \big)
%         + h\, r(x_1^a)\big(\hat b_t(x) - \dot I_t^a \big)
%         \,\middle|\, I_t^a = x
%     \right] dt = o(h).
% \end{align}
% We highlight the similarity to the ETM update. In fact, the two expected updates coincide up to terms of order $o(h)$. However, they differ in how the Monte Carlo estimates are taken: ACM replaces the stochastic interpolant $\dot I_t^a$ with its conditional expectation $b_{t,a}(x)$, whereas WFM regresses directly on $\dot I_t^a$. As such, ACM enjoys strictly lower variance than the WFM objective, at least for sufficiently small $h$ as we show in the following proposition.
\begin{propbox}
\begin{restatable}{proposition}{variance}
\label{prop:variance}
(Variance Control.)
Let $\mathcal L^{\mathrm{ITM}}_{a\to a+h}$ and $\mathcal L^{\mathrm{WFM}}_{a\to a+h}$ be the regression losses in \cref{eq:ITM} and \cref{eq:wfm}. For sufficiently small $h$, the gradient estimator of WFM has variance at least as large as that of ITM:
\begin{align}
    \mathrm{Var}\!\left[\nabla \mathcal L^{\mathrm{WFM}}_{a\to a+h}\right]
    \;\;\geq\;\;
    \mathrm{Var}\!\left[\nabla \mathcal L^{\mathrm{ITM}}_{a\to a+h}\right].
\end{align}

\end{restatable}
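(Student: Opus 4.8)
\emph{Proof plan.} The plan is to realize both loss functions as members of the control-variate family \cref{eq:c-itm}, compare their single-sample gradient estimators by conditioning on the interpolant position, and exploit the fact, already noted after \cref{eq:acm:cv}, that at $h=0$ the ITM choice of control variate is the variance-minimizing one. Fix the parameters $\theta$ of $\hat b$ and, along a single draw $(x_0,x_1^a)\sim\rho_0\rho_{1,a}$ with $t$ uniform on $[0,1]$, abbreviate $\dot I:=\dot I_t^a$, $b_a:=b_{t,a}(I_t^a)$ (the exact conditional mean $\mathbb{E}[\dot I_t^a\mid I_t^a=\cdot]$), $w:=e^{h r(x_1^a)}$, and $\mathcal F:=\sigma(I_t^a,t)$. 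A one-line differentiation of \cref{eq:c-itm} with constant control variate $c$, using $e^{-h r(x_1^a)}w=1$ and that $b_a,\dot I$ do not depend on $\theta$, gives the single-sample gradient estimator $g(c)=g^{\mathrm{WFM}}+c\,Q$, where $g^{\mathrm{WFM}}:=2w\,(\nabla_\theta\hat b_t(I_t^a))^\top(\hat b_t(I_t^a)-\dot I)$ is the WFM estimator ($c=0$), the ITM estimator is $g^{\mathrm{ITM}}=g(1)$, and $Q:=2\,(\nabla_\theta\hat b_t(I_t^a))^\top(\dot I-b_a)$. The structural fact driving the argument is that, by $b_{t,a}(x)=\mathbb{E}[\dot I_t^a\mid I_t^a=x]$, the correction is conditionally centred: $\mathbb{E}[Q\mid\mathcal F]=0$.

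Because $\mathbb{E}[Q\mid\mathcal F]=0$, every $g(c)$ has the same $\mathcal F$-conditional mean, so in the law of total variance the between-$\mathcal F$ term is common and $\mathrm{Var}[g^{\mathrm{WFM}}]-\mathrm{Var}[g^{\mathrm{ITM}}]=\mathbb{E}_{\mathcal F}\big[\mathrm{Var}(g^{\mathrm{WFM}}\mid\mathcal F)-\mathrm{Var}(g^{\mathrm{ITM}}\mid\mathcal F)\big]$. I evaluate the conditional variances at $h=0$: there $w=1$, so $g(c)\big|_{h=0}=C-(1-c)Q$ with $C:=2(\nabla_\theta\hat b_t(I_t^a))^\top(\hat b_t(I_t^a)-b_a)$ being $\mathcal F$-measurable, hence $\mathrm{Var}(g(c)\mid\mathcal F)\big|_{h=0}=(1-c)^2\,\mathrm{Var}(Q\mid\mathcal F)$, which is minimized at $c=1$. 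Thus $\mathrm{Var}(g^{\mathrm{WFM}}\mid\mathcal F)\big|_{h=0}=\mathrm{Var}(Q\mid\mathcal F)$ and $\mathrm{Var}(g^{\mathrm{ITM}}\mid\mathcal F)\big|_{h=0}=0$, and integrating gives $\big(\mathrm{Var}[g^{\mathrm{WFM}}]-\mathrm{Var}[g^{\mathrm{ITM}}]\big)\big|_{h=0}=\mathbb{E}[\mathrm{Var}(Q\mid\mathcal F)]\ge 0$.

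It remains to pass to small $h>0$. If $Q=0$ almost surely then $g^{\mathrm{WFM}}$ and $g^{\mathrm{ITM}}$ coincide and the inequality is an equality; otherwise $\mathbb{E}[\mathrm{Var}(Q\mid\mathcal F)]>0$ and, since $h\mapsto\mathrm{Var}[g^{\mathrm{WFM}}](h)-\mathrm{Var}[g^{\mathrm{ITM}}](h)$ is continuous at $h=0$, the difference remains strictly positive for all sufficiently small $h$, which is the assertion. The main obstacle is exactly this continuity claim: the $h$-dependence enters only through $w=e^{h r(x_1^a)}$, so establishing continuity (indeed smoothness) of the second moments of $g^{\mathrm{WFM}}$ and $g^{\mathrm{ITM}}$ in $h$ needs a dominated-convergence argument, hence a moment/tail hypothesis on the reward — e.g. $\mathbb{E}_{\rho_{1,a}}[e^{2hr}]<\infty$ for $h$ near $0$, together with enough moments of $\dot I_t^a$ and of $\nabla_\theta\hat b_t$. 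I would also pin down what ``variance'' means for the vector-valued estimator: read as the trace of the covariance the statement is literally the above, while in the Loewner order the identical $h=0$ computation gives $\mathrm{Cov}[g^{\mathrm{WFM}}]-\mathrm{Cov}[g^{\mathrm{ITM}}]=\mathbb{E}[\mathrm{Cov}(Q\mid\mathcal F)]\succeq 0$, again persisting for small $h$ off the degenerate case.
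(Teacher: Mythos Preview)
Your proposal is correct and follows essentially the same route as the paper's proof: embed WFM and ITM as the $c=0$ and $c=1$ members of the control-variate family, apply the law of total variance conditioning on $I_t^a$, observe that the conditional mean is $c$-independent so only the conditional variance term differs, compute that term at $h=0$ to get the $(1-c)^2$ factor, and finish with a small-$h$ continuity/$\mathcal O(h)$ argument. The only cosmetic differences are that the paper works with the functional (Gateaux) gradient estimator $\xi_c$ while you work with the parametric gradient $g(c)=(\nabla_\theta\hat b)^\top\xi_c$, and that you are more explicit about the integrability hypotheses needed for the continuity-in-$h$ step and about the trace-vs-Loewner reading of ``variance''---points the paper leaves implicit.
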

\end{propbox}
We provide a proof of Proposition~\ref{prop:variance} in Appendix~\ref{proof:variance}. This result formalizes that ITM enjoys a variance advantage over WFM because it centers updates on the conditional mean $b_{t,a}(I_t^a)$ rather than the noisy sample $\dot I_t^a$. Furthermore, optimizing for the optimal control variate $c_t(x)$ guarantees a lower variance for any choice of stepsize $h$.

% Having addressed discretization error in the annealing parameter $a$, we briefly discuss error accumulation arising from imperfect minimization of the objective. When updating from $a$ to $a+h$, the current drift $b_{t,a}$ is used both within the objective \cref{eq:c-itm} and to generate samples $x_1^a \sim \rho_{1,a}$. As a result, errors in the learned drift influence both the regression targets and the sampling distribution, and can therefore propagate to the next iterate $b_{t,a+h}$. In sampling settings where the potentials defining $\rho_{1,a}$ are known, this effect can be mitigated by applying MCMC correction steps (e.g., MALA), in which case only drift estimation errors persist. Setting $c_t(x)=0$ recovers WFM, which removes this dependence on the current drift by regressing directly on interpolant velocities, but does so at the cost of higher variance. While the optimal choice is problem-dependent, we find that ITM is a robust default and in practice matches or outperforms WFM across our experiments.

% In what follows, we provide some numerical realizations of the procedure as they pertrain to fine-tuning an exsiting generative model, and to sampling distributions known only up to normalization, e.g. in posterior Bayesian inference, molecular dynamics, and statistical physics. 

\section{Numerical Experiments}
%

% \begin{wrapfigure}[13]{r}{0.55\textwidth} % r or l for side, width of wrap
%     \centering
%     \vspace{-5em} % adjust vertical placement if needed
%     \includegraphics[width=\linewidth]{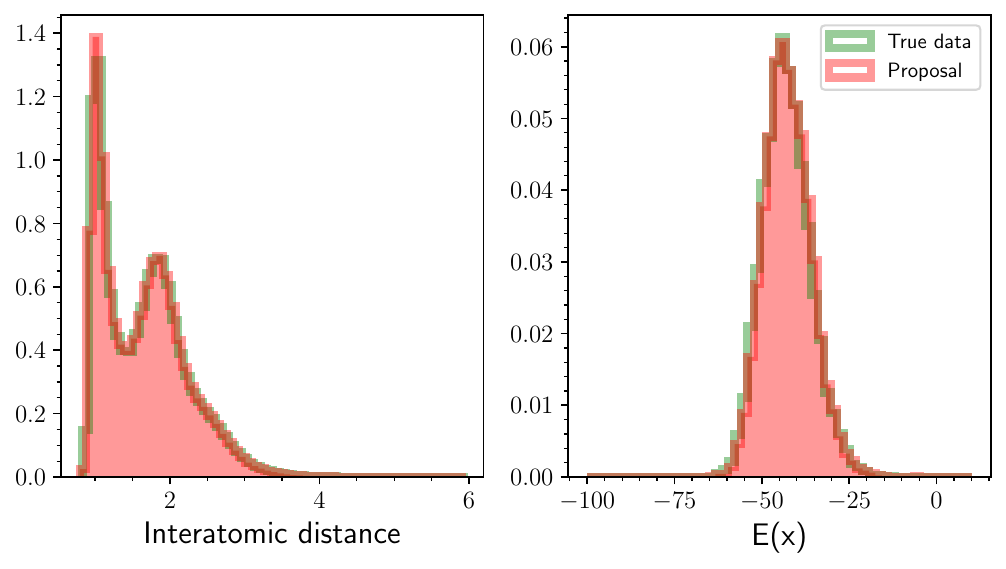}
%     \vspace{-2em} % adjust spacing below
%     \caption{Comparison of LJ13 results using explicit tilt matching vs ground truth molecular dynamics data. \textbf{Left:} Histogram of interatomic distances amongst particles in the system. \textbf{Right:} Histogram of the energy of 10000 samples of the system. The method shows strong alignment with both measures.}
%     \label{fig:placeholder}
% \end{wrapfigure}
%

\begin{wrapfigure}[21]{r}{0.45\textwidth} % {r} for right, {l} for left
    \centering
    \vspace{-22pt} % adjust vertical spacing if needed
    \includegraphics[width=0.44\textwidth]{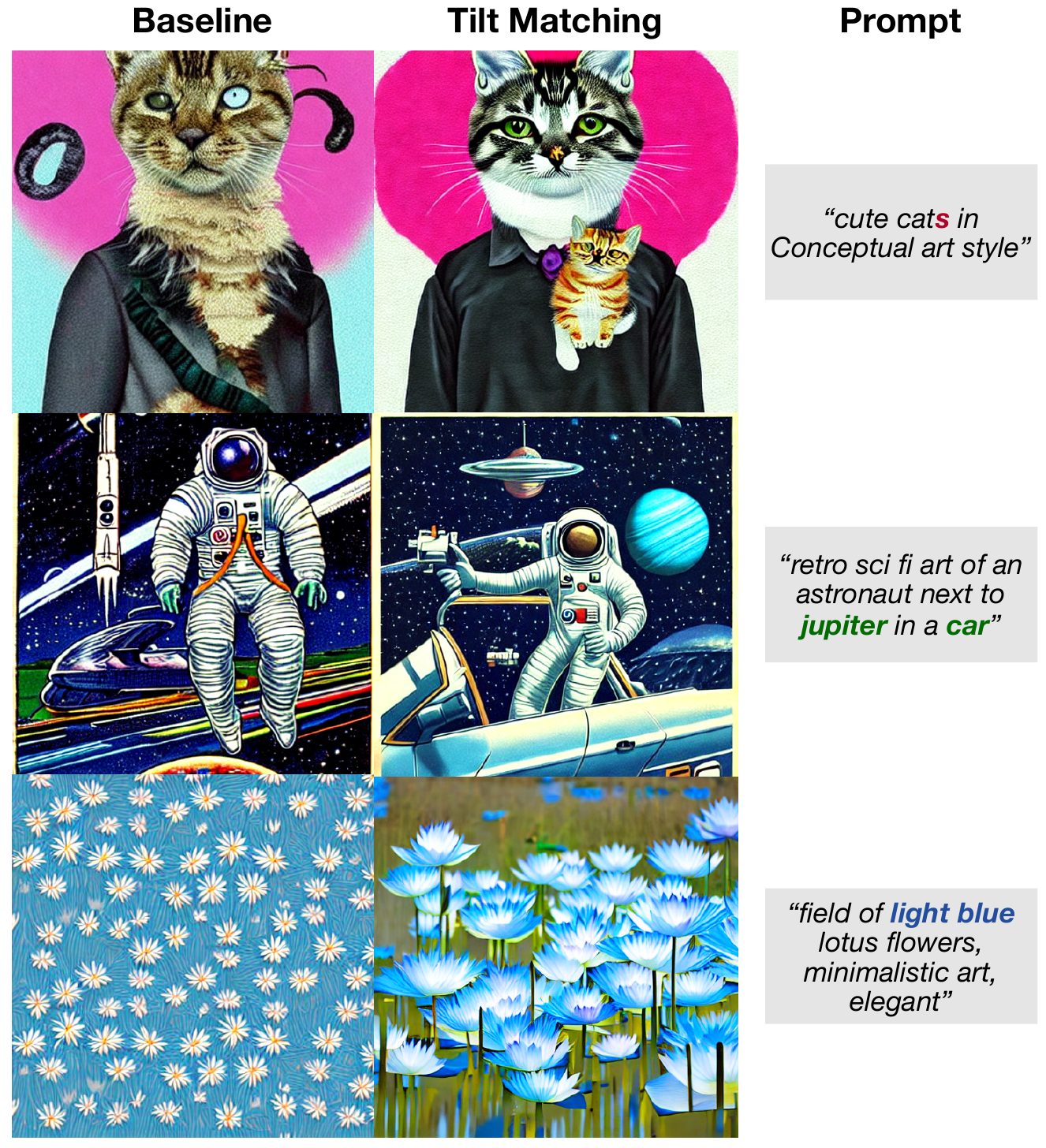} % replace with your image
    \caption{Example improvements to Stable Diffusion 1.5 using Tilt Matching and ImageReward. Key parts of each prompt are highlighted in color.}
    \vspace{-10pt} % fine-tune spacing below
\end{wrapfigure}

In what follows, we test ITM on both sampling Lennard-Jones (LJ) potentials (of 13 and 55 particles) and fine-tuning Stable Diffusion 1.5. All experiments use ITM, as it matches the simplicity of ETM while avoiding discretization error. For both setups, we build on existing code bases, e.g. from \cite{akhoundsadegh2025} for the LJ experiments and \cite{domingo-enrich2025adjoint, blessing2025trust} for the fine-tuning. All network architectures are the same unless otherwise stated.

\subsection{Fine-tuning Stable Diffusion 1.5}
To validate our proposed method, we fine-tune Stable Diffusion 1.5 \citep{rombach2022high} using the ImageReward score \citep{image-reward} as the objective. Our implementation builds upon the codebase and parameters established in \cite{domingo-enrich2025adjoint, blessing2025trust}. As our method operates within the stochastic interpolant framework \citep{albergo2023stochastic}, we adopt the necessary transformations to recast the underlying denoising diffusion model, following the procedure detailed in the Appendix of \cite{domingo-enrich2025adjoint}.

To ensure a comprehensive evaluation and mitigate concerns of overfitting to a single reward metric, we additionally assess performance across three distinct axes: (1) text-to-image consistency, measured by CLIPScore \citep{hessel-etal-2021-clipscore}; (2) human aesthetic preference, quantified by HPSv2 \citep{wu2023humanpreferencescorev2}; and (3) sample diversity, evaluated with DreamSim \citep{fu2023dreamsim}.
\begin{table}[h] 
    \centering
    \begin{tabular}{lcccc}
        \toprule % Top rule of the table
        \textbf{Method} & \textbf{ImageReward} ($\uparrow$)& \textbf{CLIPScore} ($\uparrow$) & \textbf{HPSv2} ($\uparrow$) & \textbf{DreamSim} ($\uparrow$) \\
        \midrule % Middle rule, separating header from data
        SD 1.5 (Base) &
        0.1873 ± 0.0762 & 0.2746 ± 0.0032	& 0.2566 ± 0.0030	& \textbf{0.3849 ± 0.0105}
\\
        AM ($\lambda=1$)&  0.2170 ± 0.0755 & 0.2754 ± 0.0032	   & 0.2576 ± 0.0030	 & 0.3826 ± 0.0104
 \\
        % \textbf{ETM} ($\lambda$ = 1)   & 0.3799 ± 0.0744 & \textbf{0.2801 ± 0.0036} & 0.2655 ± 0.0029 & 0.3530 ± 0.0118\\
        \textbf{ITM} ($\lambda$ = 1)&   \textbf{0.4465 ± 0.0709} & \textbf{0.2794 ± 0.0036} & \textbf{0.2659 ± 0.0027} & 0.3383 ± 0.0116\\
        \midrule
         AM ($\lambda=10^2$)& 0.7873 ± 0.0689 &  0.2792 ± 0.0033			   & 0.2791 ± 0.0028	 & 0.3363 ± 0.0101 \\
        \bottomrule % Bottom rule of the table
    \end{tabular}
    \caption{
        Fine-tuning results on Stable Diffusion 1.5. We compare our method against Adjoint Matching \citep{domingo-enrich2025adjoint}. We report on CLIPScore \citep{hessel-etal-2021-clipscore}, HPSv2 \citep{wu2023humanpreferencescorev2}, and DreamSim \citep{fu2023dreamsim}. For all metrics, higher values are better, as indicated by the up-arrow ($\uparrow$).}
    \label{tab:finetuning-results}
\end{table}
We primarily benchmark against adjoint matching \citep{domingo-enrich2025adjoint}, the current state-of-the-art for reward fine-tuning, which has demonstrated superior performance over prominent methods like DRaFT \citep{clark2024directly}, DPO \citep{Wallace_2024_CVPR}, and ReFL \citep{image-reward}. We emphasize that we fine-tune \emph{only} on the ImageReward score, but measure performance on other scores as well.

Our results are summarized in Table~\ref{tab:finetuning-results}, and sample uncurated images can be found in Appendix~\ref{sec:example-images}. It is standard practice for adjoint matching to employ a reward multiplier, $\lambda$, which amplifies the reward signal to steer the learned distribution towards $\rho_1(x)e^{\lambda r(x)}$. A key finding in our experiments is that our method achieves competitive performance against this strong baseline without a need for such a multiplier ($\lambda=1$) or other hyperparameter tuning. This suggests that our approach provides a direct and stable mechanism for incorporating reward signals into the generation process, likely due to the fact that it does not rely on spatial gradients of the reward for training or generation. Further gains could likely be made by hyperparameter sweeps. 
% We also observed that the implicit version of the tilt matching algorithm can be unstable during fine-tuning, as the magnitude of $\exp(h\times r(x))$ is highly sensitive to the scale of the reward function. In contrast, Euler Tilt Matching with small increments generally trains very stably without much tuning and is therefore recommended for this purpose.

\subsection{Sampling Lennard-Jones Potentials}
In the context of sampling, the goal is to draw samples from a target density $\rho_{1, a=1}$, which is typically the Boltzmann distribution for a given potential energy function $E_1(x)$, such that $\rho_{1, a=1}(x) \propto \exp(-E_1(x))$. For TM, we begin with a simple prior density, $\rho_{1,a=0}$, which corresponds to an initial potential $E_0(x)$, and define an annealing path via linear interpolation:
\begin{equation}
    E_a(x) = (1-a)E_0(x) + aE_1(x).
\end{equation}
This defines a family of densities $\rho_{1,a}(x) \propto \exp(-E_a(x))$ for $a \in [0, 1]$. This path is equivalent to the geometric annealing path described by the reward tilt formulation, where the reward is given by:
\begin{equation}
r(x) = E_0(x) - E_1(x)
\end{equation}
A common choice for the prior $\rho_{1,a=0}$ is a Gaussian distribution. For molecular systems, a more effective strategy is to define the prior as a high-temperature analogue of the target by setting the initial potential as $E_0(x) = E_1(x) / T_0$, where $T_0 \gg 1$ is a high temperature. The resulting prior, $\rho_{1,a=0} \propto \exp(-E_1(x) / T_0)$, has a smoothed energy landscape that facilitates more efficient MCMC sampling. We adopt this temperature annealing approach for our numerical experiments as it is a standard practice in the Boltzmann sampling literature and has been used in many recent neural sampler approaches as in \cite{akhoundsadegh2025, rissanen2025progressivetemperingsamplerdiffusion}. Our results are reported in Table~\ref{tab:lj-results} and Figure~\ref{fig:three_panel_results}.

\begin{table}[h]
\centering
\small
\caption{Performance comparison on LJ-13 and LJ-55 using the effective sample size ESS, 1D energy histogram $E(\cdot)$ $\mathcal W_2$ metric, and
a geometric Geo $\mathcal W_2$ metric from~\cite{klein2023equivariantflowmatching}. Missing $--$ entries indicate the metric is not available. We omit the ESS comparison for LJ-55 because it is too computationally intensive for us to compute, and other works do not provide a number to juxtapose with for similar reason.}
\begin{tabular}{lcccccc}
\toprule
& \multicolumn{3}{c}{\textbf{LJ-13}} & \multicolumn{2}{c}{\textbf{LJ-55}} \\
\cmidrule(lr){2-4} \cmidrule(lr){5-6}
\textbf{Method} & ESS ($\uparrow$) & $E(\cdot)$ $\mathcal W_2$ $(\downarrow)$ & Geo $\mathcal W_2$ $(\downarrow)$ &  $E(\cdot)$ $\mathcal W_2$ $(\downarrow)$ & Geo $\mathcal W_2$ $(\downarrow)$ \\
\midrule
DDS \citep{vargas2023}               &   0.101           &    24.61         &   1.99    &  173.09    &  4.60    \\
% PIS \citep{zhang2022path}            &   0.004           &    1.93        &   $--$        &  228.70    &  4.79    \\
iDEM \citep{akhoundsadegh2024}             &  0.231           &  30.78         &  1.61       &   93.53   &  4.69    \\
Adjoint Sampling \citep{havens2025}   &  $--$           &  2.40          & 1.67  &  30.83    &   4.04  \\
ASBS  \citep{liu2025adjointschrodingerbridgesampler}       &  $--$            &  1.99          &  1.59     &   28.10   &  4.00   \\
PITA  \citep{akhoundsadegh2025}       &  $--$            &  2.26          &  1.65     &   $--$   &  $--$   \\
% ETM (Ours)        &  \textbf{0.740}  & \textbf{0.270} & \textbf{0.012}     &   --   &  --       \\
\textbf{ITM (Ours)}        &  \textbf{0.507}  &  \textbf{0.879}     & \textbf{1.54}     &  \textbf{26.37}    &  \textbf{3.98}         \\
\bottomrule
\end{tabular}
\label{tab:lj-results}
\end{table}

\begin{figure}[htbp]
    \centering
    \begin{subfigure}[t]{0.415\textwidth}
        \centering
        \includegraphics[width=\linewidth]{figs/lj-comp.pdf}
        \caption{Comparison of ITM samples vs ground truth molecular dynamics data. 
        \textbf{Left:} Histogram of interatomic distances amongst particles in the system. 
        \textbf{Right:} Histogram of the energy of 10000 samples.}
        \label{fig:lj_comp}
    \end{subfigure}\hfill
    \begin{subfigure}[t]{0.27\textwidth}
        \centering
        \includegraphics[width=\linewidth]{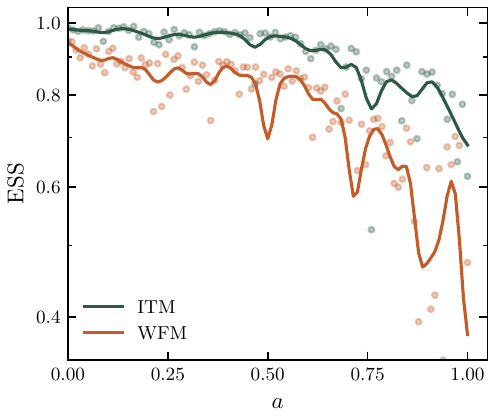}
        \caption{ESS evolution with $a$ for ITM and WFM.}
        \label{fig:wfm:v:itm}
    \end{subfigure}\hfill
    \begin{subfigure}[t]{0.27\textwidth}
        \centering
        \includegraphics[width=\linewidth]{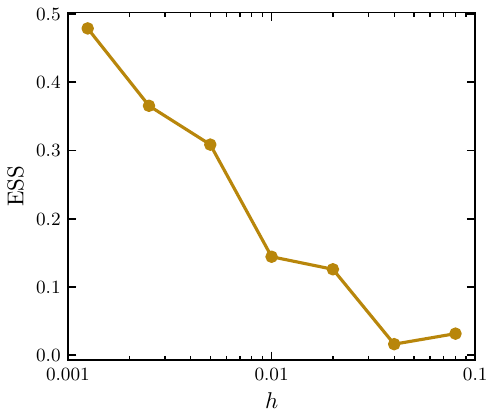}
        \caption{ITM performance scaling with discretization step size $h$, in terms of final ESS.}
        \label{fig:ess_vs_discr}
    \end{subfigure}
    \caption{Sampling performance and ablation results for LJ13.}
    \label{fig:three_panel_results}
\end{figure}

\section{Related Work}

\paragraph{Neural Samplers}
Employing transport in Monte Carlo sampling algorithms has been an active research topic beginning with the work of \cite{marzouk2016sampling}, and made parametric with neural networks in \cite{noe2019, albergo2019}, using coupling-based normalizing flows \citep{rezende_variational_2015, dinh_density_2017}. Recent works have sought to perform this sampling with continuous time flow and diffusion models. These neural samplers take on various forms. Some approach the problem from an optimal control perspective \citep{zhang2022path, tzen2019theoreticale, havens2025} involving backpropagating through stochastic trajectories. Others interface with annealed importance sampling \cite{Neal1993} and attempt to learn drift coefficients along a geometric annealing path either through trajectories \citep{vargas2024transport} or physics informed neural network (PINN) objectives \citep{mate2023learning, tian2024, albergo2024nets, holderrieth2025leaps}. The trajectory based losses can become unstable if the number of steps taken in solving the SDE is not sufficiently small, and while the PINN based losses avoid this, they can sometimes involve unstable or expensive terms based off of derivatives of neural networks in the loss function. Other works like \cite{vargas2023} also try to learn to sample along the time-dependent density of a diffusion process, but their formulation requires backpropagating through trajectories.
%
% Importantly, the proposed method doesn't need to backprop through trajectories like related approaches, and does requires spatial gradients of the energy function for the learning. 
%
Our proposed approach inherits the potential efficiency of coupling based flows because all of it can be defined with the any step flow map \citep{boffi_flow_2024, boffi2025build, sabour2025align}; does not rely on backpropagating through trajectories; does not require computation of likelihoods; and does not require enforcing a PINN loss with gradients of neural networks in it. A related method PTSD~\citep{rissanen2025progressivetemperingsamplerdiffusion} iteratively trains a diffusion model along a temperature ladder, using approximate samples coming from a diffusion model using a finite-difference approximation of the score function. PTSD also employs reweighting and local parallel tempering refinement to reduce bias. These techniques are compatible with our framework, while the ITM objective avoids the finite-difference approximation error.

\paragraph{Fine-tuning flows and diffusions}
Modifying the drift of the generative process is the predominant strategy for fine-tuning dynamical transport models. Existing work follows two high-level approaches: 1) reward-maximizing methods that directly optimize the quality of generated samples, such as D-Flow \citep{ben-hamu2024dflow} and DRaFT \citep{clark2024directly}, and 2) distribution matching techniques that align the model with a reward-tilted distribution to prevent overfitting, seen in DEFT \citep{denker2024deft}, adjoint matching \citep{domingo-enrich2025adjoint},  GFlowNet approaches \citep{zhang2024improvinggflownetstexttoimagediffusion, liu2025efficient}, and approaches adapted from DPO \citep{Wallace_2024_CVPR}. Nevertheless, these algorithms frequently suffer from major disadvantages, including the need to differentiate through trajectories \citep{denker2024deft, ben-hamu2024dflow, clark2024directly} or the requirement of a differentiable reward function \citep{ben-hamu2024dflow, clark2024directly, zhang2024improvinggflownetstexttoimagediffusion, liu2025efficient, domingo-enrich2025adjoint}, while some are only approximate \citep{Wallace_2024_CVPR}. The proposed Tilt Matching method is free from these limitations.

\section*{Acknowledgments}
% Use unnumbered third level headings for the acknowledgments. All
% acknowledgments, including those to funding agencies, go at the end of the paper. \\
% Carles, Tara, Joey, Alex, George, Eric 
We thank Carles Domingo-Enrich, Tara  Akhound-Sadegh, and Alex Tong for helpful feedback on setting up the numerical experiments, as well as George Deligiannidis and Eric Vanden-Eijnden for constructive comments on the theoretical exposition. PP is supported by the EPSRC CDT in Modern Statistics and Statistical Machine Learning [EP/S023151/1], a Google PhD Fellowship, and an NSERC Postgraduate Scholarship (PGS D). MSA is supported by a Junior Fellowship at the Harvard Society of Fellows as well as the National Science Foundation under Cooperative Agreement PHY-2019786 (The NSF AI Institute for Artificial Intelligence and Fundamental Interactions, http://iaifi.org/). This work has been made possible in part by a gift from the Chan Zuckerberg Initiative Foundation to establish the Kempner Institute for the Study of Natural and Artificial Intelligence.

\bibliographystyle{assets/acl_natbib.bst}
\bibliography{bibliography.bib}

\clearpage
\beginappendix

The Appendix is structured as follows. In Appendix~\ref{app:proof}, we provide proofs of the results in the main body. In Appendix~\ref{app:control_variate}, we provide additional details on the control variates and their optimization. In Appendix~\ref{app:doob_connection}, we elaborate on the connection between Tilt Matching and Doob's $h$-transform and stochastic optimal control. In Appendix~\ref{app:exp}, we provide further experimental details and pseudocode for ITM and ETM. Finally, in Appendix~\ref{sec:example-images} we give additional uncurated sample images from our Stable Diffusion fine-tuning experiment.

\section{Proofs}
\label{app:proof}

\Esscher*
\hyperref[prop:esscher_transform]{\emph{(Back to Proposition \ref*{prop:esscher_transform} in the main text.)}}
\begin{proof}
    \label{proof:esscher_transform}
    Let $Z_a := \mathbb{E}[e^{ar(x_1^0)}]$ be the normalizing constant for the tilted distribution $\rho_{1,a}(x) = \rho_{1,0}(x)e^{ar(x)} / Z_a$. The probability density function of the tilted interpolant $I_t^a$ is given by:
    \begin{equation}
        \rho_{t,a}(x) = \frac{1}{Z_a} \mathbb{E}[\delta(x - I_t^0) e^{ar(x_1^0)}],
    \end{equation}
    where the expectation is over the base coupling $(x_0, x_1^0) \sim \rho_0(x_0)\rho_{1,0}(x_1^0)$. Taking the time derivative of $\rho_{t,a}$ and applying the chain rule to the delta function, we have:
    \begin{align}
        \partial_t \rho_{t,a}(x) &= \frac{1}{Z_a} \partial_t \mathbb{E}[\delta(x - I_t^0) e^{ar(x_1^0)}] \\
        &= \frac{1}{Z_a} \mathbb{E}[-\nabla \delta(x - I_t^0) \cdot \dot{I}_t^0 e^{ar(x_1^0)}] \\
        &= -\nabla \cdot \left( \frac{1}{Z_a} \mathbb{E}[\delta(x - I_t^0)\dot{I}_t^0  e^{ar(x_1^0)}] \right).
    \end{align}
    The continuity equation for $\rho_{t,a}$ is $\partial_t \rho_{t,a} + \nabla \cdot (b_{t,a} \rho_{t,a}) = 0$. Comparing terms inside the divergence, we identify:
    \begin{equation}
        b_{t,a}(x) \rho_{t,a}(x) = \frac{1}{Z_a} \mathbb{E}[\delta(x - I_t^0)\dot{I}_t^0  e^{ar(x_1^0)}].
    \end{equation}
    Assuming $\rho_{t,a}(x) > 0$, we solve for $b_{t,a}(x)$ by dividing by $\rho_{t,a}(x)$:
    \begin{equation}
    \label{eq:esscher_proof}
        b_{t,a}(x) = \frac{\frac{1}{Z_a} \mathbb{E}[\delta(x - I_t^0) \dot{I}_t^0  e^{ar(x_1^0)}]}{\frac{1}{Z_a} \mathbb{E}[\delta(x - I_t^0) e^{ar(x_1^0)}]}.
    \end{equation}
    The normalizing constant $1/Z_a$ cancels out. We then divide the numerator and denominator by the marginal density of the base interpolant, $\rho_{t,0}(x) = \mathbb{E}[\delta(x - I_t^0)]$, to express the result in terms of conditional expectations:
    \begin{equation}
        b_{t,a}(x) = \frac{\mathbb{E}[\dot{I}_t^0 e^{ar(x_1^0)} \mid I_t^0 = x]}{\mathbb{E}[e^{ar(x_1^0)} \mid I_t^0 = x]}.
    \end{equation}
    This establishes \cref{eq:bta}. The derivation for \cref{eq:btah} follows identically by replacing the base distribution $\rho_{1,0}$ with $\rho_{1,a}$ and the tilt $e^{ar}$ with $e^{hr}$.
\end{proof}

\dBda*
\hyperref[prop:acm]{\emph{(Back to Proposition \ref*{prop:acm} in the main text.)}}
\begin{proof}
    \label{proof:acm}
    % To show \cref{eq:b:da}, note that the pdf of $I_t^a$ is given explicitly as
    % \begin{align}
    %     \label{eq:rhota}
    % \rho_{t,a}(x) = \mathbb E[ \delta(x - I_t^0) e^{ar(x_1^0)}],
    % \end{align}
    % where $x_0 \sim \rho_0, x_1^0 \sim \rho_{1,0}$ and we ignore the normalizing constant as it will not be relevant for the proof. Taking the time derivative on both sides of this equation
    % \begin{align}
    %     \label{eq:dt:rho}
    %      -\nabla \cdot (b_{t,a}(x) \rho_{t,a}(x))  = -\nabla \cdot \mathbb E [ \dot I_t^0 \delta (x - I_t^0) e^{ar(x_1^0)}]
    % \end{align}
    %  where we assume $\rho_{t,a}(x) > 0$ else $b_{t,a}(x) = 0$. Isolating $b_{t,a}(x)$ gives 
    % \begin{align}
    % b_{t,a}(x) = \frac{\mathbb E [ \dot I_t^0 \delta (x - I_t^0) e^{ar(x_1^0)}]}{\mathbb E [ \delta (x - I_t^0) e^{ar(x_1^0)}]}.
    % \end{align}
    % Dividing the numerator and denominator by $\mathbb E [ \delta (x - I_t^0)]$ proves \cref{eq:bta}. 
    To show \cref{eq:b:da}, we begin by recalling \cref{eq:esscher_proof} which states
    \begin{align}
        b_{t,a}(x) = \frac{\mathbb E [ \delta (x - I_t^0) \dot I_t^0  e^{ar(x_1^0)}]}{\mathbb E [ \delta (x - I_t^0) e^{ar(x_1^0)}]}.
    \end{align} 
    Taking its derivative with respect to a and using the quotient rule, we have
    \begin{align}
        \frac{\partial}{\partial a} b_{t, a}(x) 
        &= \frac{\mathbb E [\delta (x - I_t^0) \dot I_t^0 e^{ar(x_1^0)}r(x_1^0)]}{\mathbb E [ \delta (x - I_t^0) e^{ar(x_1^0)}]} - \frac{\mathbb E [\delta (x - I_t^0) \dot I_t^0  e^{ar(x_1^0)}]}{\mathbb E [ \delta (x - I_t^0) e^{ar(x_1^0)}]}\frac{\mathbb E [ \delta (x - I_t^0) e^{ar(x_1^0)}r(x_1^0)]}{\mathbb E [ \delta (x -  I_t^0) e^{ar(x_1^0)}]} \\
        &= \mathbb E[ \dot I_t^a r(x_1^a) | I_t^a = x] -  b_{t,a}(x) \,\mathbb E[ r(x_1^a) | I_t^a = x],
    \end{align}
which completes the proof.
\end{proof}

\ACM*
\hyperref[prop:iterative]{\emph{(Back to Proposition \ref*{prop:iterative} in the main text.)}}
\begin{proof}
\label{proof:iterative}
    By the Hilbert $L^2$ projection theorem, among all functions of $I_t^a$, the optimizer is the conditional expectation $\mathbb E[T_{t,a,h}\mid I_t^a=x]$. Expanding $b_{t,a+h}=b_{t,a}+h\,\partial_a b_{t,a}+\mathcal O(h^2)$ and using \cref{eq:b:da} yields the result.
\end{proof}

\Expansion*
\hyperref[prop:expansion]{\emph{(Back to Proposition \ref*{prop:expansion} in the main text.)}}
\begin{proof}
    \label{proof:expansion}
    For a fixed $x$, define the joint conditional cumulant generating function of $r(x_1^a)$ and $\dot I_t^a$ as
    \begin{equation}
        M(\mu, \nu) = \log \mathbb E \left [e^{\mu r(x_1^a) + \langle \nu, \dot I_t^a \rangle} | I_t^a = x\right],
    \end{equation}
    for $\mu \in \mathbb R$ and $\nu \in \mathbb R^d$. Its partial derivative with respect to $\nu$ evaluated at $0$ is
    \begin{align}
        \frac{\partial}{\partial \nu}M(\mu, 0)
        &= \frac{\mathbb E [\dot I_t^a e^{\mu r(x_1^a)}  | I_t^a = x]}{\mathbb E [e^{\mu r(x_1^a)}  | I_t^a = x]}
        = b_{t, a+\mu}(x),
    \end{align}
    where the second equality is \cref{eq:btah}. Taking $n$ derivatives with respect to $\mu$ and evaluating at $0$, we obtain 
    \begin{equation}
        \frac{\partial^{n+1}}{\partial \mu^{n}\partial \nu} M(0, 0)
        = \frac{\partial^{n}}{\partial \mu ^{n}}b_{t, a+\mu}(x) \big |_{\mu = 0} =  \frac{\partial^{n}}{\partial a^{n}}b_{t, a}(x).
    \end{equation}
    The leftmost term is precisely the $(n+1)^{\text{th}}$ order joint cumulant.
\end{proof}

\ITM*
\hyperref[prop:ITM]{\emph{(Back to Proposition \ref*{prop:ITM} in the main text.)}}
\begin{proof}
    \label{proof:ITM}
    By taking the functional gradient of the stop-gradient objective \cref{eq:ITM}, we see that $\hat{b}_t$ is a fixed point of the optimization procedure precisely when 
    \begin{align}
    \label{eq:proof-ITM}
        \mathbb E\!\left[\big(\hat b_{t}(x) - b_{t,a}(x)\big) + 
        \big(e^{h r(x_1^a)} - 1\big)\, \big(\hat b_{t}(x) - \dot I_t^a \big)\,\middle|\, I_t^a = x\right] = 0.
    \end{align}
    Using the identity $b_{t,a}(x) = \mathbb E[\dot I_t^a \mid I_t^a =x]$, the above equality occurs iff
    \begin{align}
        \mathbb E\!\left[e^{h r(x_1^a)} \big(\hat b_{t}(x) - \dot I_t^a \big)\,\middle|\, I_t^a = x\right] = 0.
    \end{align}
    Rearranging this equality, we obtain the equivalent condition
    \begin{align}
        \hat b_{t}(x) = \frac{\mathbb E\!\left[\dot I_t^a e^{h r(x_1^a)} \mid I_t^a = x\right ]}{\mathbb E\!\left[e^{h r(x_1^a)}\mid I_t^a = x\right ]} = b_{t,a+h}(x).
    \end{align}
    Therefore the unique fixed point of the optimization procedure induced by \cref{eq:ITM} is $b_{t,a+h}$. Now we turn to showing \cref{eq:inf_expansion}. Notice that the left-hand side of \cref{eq:inf_expansion} is equal to $b_{t, a+h}(x) - b_{t, a}(x)$ since the series contains all terms but the $0^{\text{th}}$ order term in the Taylor series expansion in $h$ for $b_{t, a+h}(x)$. Next, we rewrite \cref{eq:proof-ITM} with $\hat{b}_t = b_{t,a+h}$
    \begin{align}
        \mathbb E\!\left[\big(b_{t,a+h}(x) - b_{t,a}(x)\big) + 
        \big(e^{h r(x_1^a)} - 1\big)\, \big(b_{t,a+h}(x) - \dot I_t^a \big)\,\middle|\, I_t^a = x\right] = 0.
    \end{align}
    Rearranging, we obtain the following
    \begin{align}
        b_{t,a+h}(x) - b_{t,a}(x) = \mathbb E\!\left[ 
        \big(e^{h r(x_1^a)} - 1\big)\, \big(\dot I_t^a - b_{t,a+h}(x)\big)\,\middle|\, I_t^a = x\right],
    \end{align}
    which is the right-hand side of \cref{eq:inf_expansion}. 
\end{proof}

\variance*
\hyperref[prop:variance]{\emph{(Back to Proposition \ref*{prop:variance} in the main text.)}}
\begin{proof}
\label{proof:variance}
The first variation (Gateaux derivative) of the loss $\mathcal L^{\mathrm{c-ITM}}_{a\to a+h}$ is
\begin{equation}
    \delta \mathcal L^{\mathrm{c-ITM}}_{a\to a+h}(\hat{b}) = 2\int_0^1 \mathbb{E}\,\big[c_t(I_t^a)\,\big(\hat{b}_{t}(I_t^a) - b_{t,a}(I_t^a)\big) + 
  \big(e^{h r(x_1^a)} - c_t(I_t^a)\big)\, \big(\hat{b}_{t}(I_t^a) - \dot I_t^a \big)\big]\,dt.
\end{equation}
The following Monte Carlo estimator of the gradient is used in $\mathcal L^{\mathrm{c-ITM}}_{a\to a+h}$:
\begin{equation}
    \xi_c := 2\Big (c_t(I_t^a)\,\big(\hat{b}_{t}(I_t^a) - b_{t,a}(I_t^a)\big) + 
  \big(e^{h r(x_1^a)} - c_t(I_t^a)\big)\, \big(\hat{b}_{t}(I_t^a) - \dot I_t^a \big)\Big ),
\end{equation}
where $t\sim \textrm{Unif}[0,1]$. We will use the law of total variance, which states the following
\begin{equation}
    \textrm{Var}(\xi_c) = \mathbb E [\textrm{Var}(\xi_c | I_t^a)] + \textrm{Var}(\mathbb E [\xi_c | I_t^a]).
\end{equation}
Notice that the conditional expectation of our estimator,
\begin{equation}
    \mathbb E [\xi_c | I_t^a] = \mathbb E \left [e^{h r(x_1^a)} \big(\hat{b}_{t}(I_t^a) - \dot I_t^a \big) | I_t^a \right],
\end{equation}
is independent of $c$ and therefore the same for any c-ITM variant. On the other hand, we have that
\begin{equation}
    \textrm{Var} (\xi_c | I_t^a) = \textrm{Var} \left(e^{h r(x_1^a)} \big(\hat{b}_{t}(I_t^a) - \dot I_t^a \big) + c_t(I_t^a) \dot I_t^a | I_t^a \right).
\end{equation}
Writing $e^{h r(x_1^a)} = 1 + \mathcal{O}(h)$, we see that
\begin{equation}
    \mathbb E \left [\textrm{Var} (\xi_c | I_t^a) \right] = \mathbb E \left [\textrm{Var} \left((1- c_t(I_t^a)) \dot I_t^a | I_t^a \right)\right ] + \mathcal{O}(h).
\end{equation}
Recall that $\mathcal L^{\mathrm{ITM}}_{a\to a+h}$ corresponds to taking $c_t(x) = 1$ and $\mathcal L^{\mathrm{WFM}}_{a\to a+h}$ to taking $c_t(x) = 0$. When $c_t(x) = 1$, we have $\mathbb E \left [\textrm{Var} (\xi_c | I_t^a) \right] = \mathcal{O}(h)$. When $c_t(x) = 0$, we have $\mathbb E \left [\textrm{Var} (\xi_c | I_t^a) \right] = \mathbb E [\textrm{Var} (\dot I_t^a | I_t^a ) ] + \mathcal{O}(h)$. Since $\mathbb E [\textrm{Var} (\dot I_t^a | I_t^a ) ] > 0$, this completes the proof. Note that when $\hat{b}$ takes a parametric form, a similar proof holds.
\end{proof}

\section{Control Variates and Joint Optimization}
\label{app:control_variate}

In this Appendix, we provide additional details on the control variates introduced in Section~\ref{sec:itm}. In order to learn the optimal control variate $c_t^*(x)$, one may parameterize $\hat c_t(x)$ as a small additional head or a standalone network and train it jointly with the velocity field to minimize the Monte Carlo variance of the ITM estimator. Here we prove that such a procedure is valid and yields the correct velocity field. We augment the c-ITM loss in \cref{eq:c-itm} to include $\hat c$ as an input. We first write the loss pointwise. Using the tower property, we have
\begin{align*}
    \mathcal L^{\mathrm{c-ITM}}_{a\to a+h}(\hat b, \hat c)
    &:= \int_0^1 \E\left [e^{-hr(x_1^a)}\left\|\hat c_t(I_t^a)\left(\hat{b}_{t}(I_t^a) - b_{t,a}(I_t^a)\right) + 
  \left(e^{h r(x_1^a)} - \hat c_t(I_t^a)\right) \left(\hat{b}_{t}(I_t^a) - \dot I_t^a \right)\right\|^2\right ]dt \\
  &= \int_0^1 \E\left [\E\left [e^{-hr(x_1^a)}\left\|\hat c_t(x)\left(\hat{b}_{t}(x) - b_{t,a}(x)\right) + 
  \left(e^{h r(x_1^a)} - \hat c_t(x)\right) \left(\hat{b}_{t}(x) - \dot I_t^a \right)\right\|^2   \mid I_t^a = x\right ]\right ]dt \\
  &= \int_0^1 \E\left [ J_t(\hat b_t(I_t^a), \hat c_t(I_t^a), I_t^a)\right]dt,
\end{align*}
where we define the pointwise loss at $I_t^a = x$ as
\begin{equation}
\label{eq:J_def}
    J_t(\bar b, \bar c, x) \colonequals \E\left [e^{-hr(x_1^a)}\left\|\bar c\left(\bar b - b_{t,a}(x)\right) + 
  \left(e^{h r(x_1^a)} - \bar c \right) \left(\bar b - \dot I_t^a \right)\right\|^2   \mid I_t^a = x\right ].
\end{equation}
We use $\bar b\in \R^d, \bar c \in \R$ to denote that these are vectors rather than the time-dependent functions $\hat b_t : \R^d \to \R^d,  \hat c_t : \R^d \to \R$. Since the distribution of $I_t^a$ is fixed at $\rho_{t,a}$, independent of $\hat b, \hat c$, it is clear that to minimize the c-ITM objective over $\hat{b}, \hat c$, we want to minimize it pointwise:
\begin{equation}
    \hat{b}_t(x), \hat{c}_t(x) = \argmin_{\bar b, \bar c \in \R^d} J_t(\bar b, \bar c, x).
\end{equation}
Therefore we investigate the pointwise minimization problem and show that we can optimize over $\bar b$ and $\bar c$ separately.

\begin{proposition} \label{prop:bias_variance}
(Weighted Bias-Variance Decomposition). Let $w = e^{hr(x_1^a)}$. The pointwise loss $J_t(\bar{b}, \bar{c}, x)$ decomposes into two independent terms:
\begin{equation}
    J_t(\bar{b}, \bar{c}, x) = \|\bar{b} - b_{t, a+h}(x)\|^2 \E[w \mid I_t^a=x] + \E\left[ w \left\| \frac{\bar{c} b_{t,a}(x) + (w-\bar{c})\dot{I}_t^a}{w} - b_{t, a+h}(x) \right\|^2 \mid I_t^a=x \right].
\end{equation}
Minimizing with respect to $\bar{b}$ minimizes the bias (first term), while minimizing with respect to $\bar{c}$ minimizes the weighted variance of the regression target (second term). Therefore the minimizer $\bar b^*$ is the target drift $b_{t, a+h}(x)$. Furthermore, the optimal control variate $\bar{c}^*$ is given by
\begin{equation}
    \bar{c}^* = \frac{\E\left[ \langle \dot{I}_t^a - b_{t,a}(x), \dot{I}_t^a - b_{t,a+h}(x) \rangle \mid I_t^a=x \right]}{\E\left[ w^{-1} \|\dot{I}_t^a - b_{t,a}(x)\|^2 \mid I_t^a=x \right]}.
\end{equation}
\end{proposition}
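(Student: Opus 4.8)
The plan is to recognize the pointwise loss $J_t$ as a weighted least-squares objective and apply a weighted bias–variance identity, after which both claims fall out. First I would simplify the bracket inside the norm in \cref{eq:J_def}: writing $w := e^{hr(x_1^a)}$ and expanding, the two terms telescope into
\begin{align*}
\bar c\big(\bar b - b_{t,a}(x)\big) + (w-\bar c)\big(\bar b - \dot I_t^a\big) = w\bar b - \big(\bar c\,b_{t,a}(x) + (w-\bar c)\dot I_t^a\big),
\end{align*}
so factoring $w$ out of the squared norm and cancelling against the $e^{-hr(x_1^a)}=w^{-1}$ prefactor gives $J_t(\bar b,\bar c,x) = \E\big[w\,\|\bar b - Y_{\bar c}\|^2 \,\big|\, I_t^a = x\big]$, where $Y_{\bar c} := \big(\bar c\,b_{t,a}(x) + (w-\bar c)\dot I_t^a\big)/w$ is a $\bar c$-dependent random regression target and $w>0$ is the weight. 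This exhibits $\bar b$ as the target of a $w$-weighted $L^2$ projection of $Y_{\bar c}$ onto a constant.

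Next I would complete the square around the $w$-weighted conditional mean $\bar b^\star_{\bar c} := \E[wY_{\bar c}\mid I_t^a=x]/\E[w\mid I_t^a=x]$; the cross term vanishes by definition of $\bar b^\star_{\bar c}$, yielding $J_t = \E[w\mid I_t^a=x]\,\|\bar b - \bar b^\star_{\bar c}\|^2 + \E[w\|Y_{\bar c}-\bar b^\star_{\bar c}\|^2\mid I_t^a=x]$. The key observation is that $\bar b^\star_{\bar c}$ is in fact independent of $\bar c$: since $b_{t,a}(x) = \E[\dot I_t^a\mid I_t^a=x]$, the $\bar c$-terms in $\E[wY_{\bar c}\mid I_t^a=x] = \bar c\,b_{t,a}(x) + \E[w\dot I_t^a\mid I_t^a=x] - \bar c\,\E[\dot I_t^a\mid I_t^a=x]$ cancel, so $\E[wY_{\bar c}\mid I_t^a=x] = \E[w\dot I_t^a\mid I_t^a=x]$ and hence $\bar b^\star_{\bar c} = b_{t,a+h}(x)$ by the Esscher identity \cref{eq:btah}. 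Substituting this constant back in gives exactly the stated two-term decomposition: the first term depends only on $\bar b$ and is minimized uniquely at $\bar b^* = b_{t,a+h}(x)$ (since $\E[w\mid I_t^a=x]>0$), while the second depends only on $\bar c$. Because $I_t^a\sim\rho_{t,a}$ regardless of the choice of $\hat b,\hat c$, this pointwise statement transfers verbatim to the integrated c-ITM loss.

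For the optimal control variate I would set $A := \dot I_t^a - b_{t,a}(x)$ and $B := \dot I_t^a - b_{t,a+h}(x)$; a line of algebra gives $Y_{\bar c} - b_{t,a+h}(x) = B - (\bar c/w)A$, so the second term equals $g(\bar c) = \E[w\|B\|^2\mid I_t^a=x] - 2\bar c\,\E[\langle A,B\rangle\mid I_t^a=x] + \bar c^2\,\E[w^{-1}\|A\|^2\mid I_t^a=x]$, a convex scalar quadratic. Setting $g'(\bar c)=0$ gives $\bar c^* = \E[\langle A,B\rangle\mid I_t^a=x]/\E[w^{-1}\|A\|^2\mid I_t^a=x]$, which is the claimed expression.

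The main obstacle is conceptual rather than computational: seeing that the optimal $\bar b$ is $\bar c$-free, which is what guarantees the control variate acts purely as a variance-reduction knob and cannot shift the fixed point. That step rests entirely on $\E[\dot I_t^a\mid I_t^a=x]=b_{t,a}(x)$; everything else is bookkeeping. The only side conditions to flag are the mild integrability/nondegeneracy requirements $0<\E[w\mid I_t^a=x]<\infty$ and $\E[w^{-1}\|A\|^2\mid I_t^a=x]>0$, needed for the two minimizers to be well-defined and unique.
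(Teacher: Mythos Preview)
Your proposal is correct and follows essentially the same route as the paper: rewrite the bracket as $w\bar b - wT$ with $T=Y_{\bar c}$, express $J_t$ as a $w$-weighted MSE, kill the cross term via $\E[\dot I_t^a\mid I_t^a=x]=b_{t,a}(x)$ together with the Esscher identity \cref{eq:btah}, and then minimize the resulting scalar quadratic in $\bar c$. The only cosmetic difference is that you phrase the vanishing cross term as ``the weighted conditional mean $\bar b^\star_{\bar c}$ is $\bar c$-free,'' whereas the paper adds and subtracts $b_{t,a+h}(x)$ directly; the underlying computation is identical.
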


\begin{proof}
    We start with the expression inside the squared norm of the objective in \cref{eq:J_def}. We group the terms multiplying $\bar{b}$:
    \begin{align}
        \bar{c}(\bar{b} - b_{t,a}(x)) + (w - \bar{c})(\bar{b} - \dot{I}_t^a) &= (\bar{c} + w - \bar{c})\bar{b} - (\bar{c} b_{t,a}(x) + (w - \bar{c})\dot{I}_t^a) \\
        &= w\bar{b} - (\bar{c} b_{t,a}(x) + w\dot{I}_t^a - \bar{c} \dot{I}_t^a).
    \end{align}
    To simplify the notation, we define the stochastic target $T$:
    \begin{equation}
        T := \frac{\bar{c} b_{t,a}(x) + w\dot{I}_t^a - \bar{c} \dot{I}_t^a}{w}.
    \end{equation}
    The pointwise loss can now be rewritten as a weighted mean squared error:
    \begin{equation}
        J_t(\bar{b}, \bar{c}, x) = \E\left[ w^{-1} \| w(\bar{b} - T) \|^2 \mid I_t^a=x \right] = \E\left[ w \| \bar{b} - T \|^2 \mid I_t^a=x \right].
    \end{equation}
    We add and subtract the true tilted velocity $b_{t, a+h}(x)$ inside the norm:
    \begin{align}
        J_t(\bar{b}, \bar{c}, x) &= \E\left[ w \| (\bar{b} - b_{t, a+h}(x)) + (b_{t, a+h}(x) - T) \|^2 \mid I_t^a=x \right] \\
        &= \|\bar{b} - b_{t, a+h}(x)\|^2 \E[w \mid I_t^a=x] + \E\left[ w \| T - b_{t, a+h}(x) \|^2 \mid I_t^a=x \right] \\
        &\quad + 2 \langle \bar{b} - b_{t, a+h}(x), \E[ w(b_{t, a+h}(x) - T) \mid I_t^a=x ] \rangle.
    \end{align}
    We now show that the cross-term in the last line vanishes. We compute $\E[wT \mid I_t^a=x]$ explicitly:
    \begin{align}
        \E[wT \mid I_t^a=x] &= \E\left[ \bar{c} b_{t,a}(x) + w\dot{I}_t^a - \bar{c} \dot{I}_t^a \mid I_t^a=x \right] \\
        &= \bar{c} b_{t,a}(x) + \E[w\dot{I}_t^a \mid I_t^a=x] - \bar{c} \underbrace{\E[\dot{I}_t^a \mid I_t^a=x]}_{b_{t,a}(x)} \\
        &= \E[w\dot{I}_t^a \mid I_t^a=x].
    \end{align}
    From \cref{eq:btah}, the true tilted drift satisfies $b_{t, a+h}(x)\E[w \mid I_t^a=x] = \E[w\dot{I}_t^a \mid I_t^a=x]$. Therefore:
    \begin{equation}
        \E[ w(b_{t, a+h}(x) - T) \mid I_t^a=x ] = b_{t, a+h}(x)\E[w \mid I_t^a=x] - \E[wT \mid I_t^a=x] = 0.
    \end{equation}
    Since the cross-term is zero, the loss separates into the two terms stated in the proposition. To find the optimal $\bar{c}^*$, we minimize the second term with respect to $\bar{c}$. The variance term is:
    \begin{align}
        V(\bar c) &:= \E\left[ w \left\| -\frac{\bar c}{w}(\dot{I}_t^a - b_{t,a}(x)) + \dot{I}_t^a - b_{t,a+h}(x) \right\|^2 \mid I_t^a=x \right] \\
        &= \E\left[ w \left( \frac{\bar{c}^2}{w^2}\|\dot{I}_t^a - b_{t,a}(x)\|^2 - 2\frac{\bar c}{w}\langle \dot{I}_t^a - b_{t,a}(x), \dot{I}_t^a - b_{t,a+h}(x) \rangle + \|\dot{I}_t^a - b_{t,a+h}(x)\|^2 \right) \mid I_t^a=x \right] \\
        &= \bar{c}^2 \E\left[ w^{-1}\|\dot{I}_t^a - b_{t,a}(x)\|^2 \mid I_t^a=x \right] - 2\bar{c} \E\left[ \langle \dot{I}_t^a - b_{t,a}(x), \dot{I}_t^a - b_{t,a+h}(x) \rangle \mid I_t^a=x \right] + \text{const}.
    \end{align}
    Minimizing the quadratic by solving $\frac{d}{d\bar c}V(\bar c) = 0$ yields the result.
\end{proof}

\begin{remark}

    \emph{(Interpretation of Variance Minimization).} The functional gradient of the c-ITM objective \cref{eq:c-itm} with respect to $\hat{b}$ is given by the estimator $\hat{g} = 2 w(\hat{b} - T)$, using the notation in our proof. The variance of this estimator depends on the probability measure against which it is evaluated. Our objective minimizes the variance of the regression target $T$ under the \emph{target measure} $\rho_{t, a+h}$ since
    \begin{equation}
        \mathbb{E}_{\rho_{t, a}}\left[ w\|\hat{b}- T\|^2 \right] \propto \mathbb{E}_{\rho_{t, a+h}}\left[ \|\hat{b} - T\|^2 \right].
    \end{equation}
\end{remark}

\begin{remark}
    For the stop-gradient objective $\mathcal L^{\mathrm{c-sg-ITM}}$, joint optimization is also possible, but the optimal control variate is different and is given by:
    \begin{equation}
        \bar{c}^*_{\mathrm{sg}} = \frac{\E\left[ w \langle \dot{I}_t^a - b_{t, a+h}(x), \dot{I}_t^a - b_{t,a}(x) \rangle \mid I_t^a=x \right]}{\E\left[ \|\dot{I}_t^a - b_{t,a}(x)\|^2 \mid I_t^a=x \right]}.
    \end{equation}
    In this case, the optimal control variate explicitly minimizes the variance of the gradient estimator $\hat g$ itself, rather than the variance of the target $T$, under the \emph{sampling measure} $\rho_{t,a}$. This follows from the law of total variance decomposition, $\text{Var}(\hat{g}) = \E[\text{Var}(\hat{g} \mid I_t^a)] + \text{Var}(\E[\hat{g} \mid I_t^a])$. Since the expected gradient $\E[\hat{g} \mid I_t^a]$ is the same for any control variate $c$, minimizing the conditional variance term pointwise minimizes the total variance.
\end{remark}

\section{Tilt Matching, Doob's $h$-Transform and Stochastic Optimal Control}
\label{app:doob_connection}

In this Appendix, we provide additional details on the discussion in Section~\ref{sec:doob_connection}, demonstrating that the drift learned by Tilt Matching $b_{t,a}$ is mathematically equivalent to the probability flow ODE of a Doob $h$-transformed SDE, provided the diffusion coefficient is chosen to match the conditional laws of the interpolant.

\paragraph{Setup.} Consider the base probability flow ODE \cref{eq:ode} defined by the velocity field $b_{t}(x)$:
\begin{equation}
    \dot{x}_t = b_{t}(x_t), \quad x_0 \sim \rho_0,
\end{equation}
with $\textrm{Law}(x_t) = \rho_t$. For any diffusion schedule $\sigma_t$, we define the following SDE as in \cite{song2020score, albergo2023stochastic}:
\begin{equation}
\label{eq:base_sde}
    dX_t = \left[ b_{t}(X_t) + \frac{\sigma_t^2}{2} \nabla \log p_{t}(X_t) \right] dt + \sigma_t dB_t.
\end{equation}
Then $\textrm{Law}(X_t) = \textrm{Law}(x_t) = \rho_t$. We specifically choose $\frac{\sigma_t^2}{2} = \frac{\dot \beta_t}{\beta_t}\alpha_t^2 -\dot \alpha_t \alpha_t$ so that the conditional endpoint laws of the SDE match those of the interpolant $I_t = \alpha_t x_0 + \beta_t x_1$:
\begin{equation}
\label{eq:cond_laws}
    \textrm{Law}(X_1 \mid X_t) = \textrm{Law}(I_1 \mid I_t).
\end{equation}
\paragraph{The tilted SDE via Doob's $h$-transform.}
We wish to sample from the tilted distribution $\rho_{1, a}(x) \propto \rho_{1}(x) e^{a r(x)}$. Let the value function of the SDE be defined as:
\begin{equation}
    V_{t,a}(x) := \log \mathbb{E}[e^{a r(X_1)}\mid X_t=x].
\end{equation}
Using Doob's $h$-transform, the SDE that targets the tilted distribution and has marginals $\rho_{t,a}(x) \propto \rho_{t}(x) e^{V_{t,a}(x)}$ is obtained by adding the drift correction $\sigma_t^2 \nabla V_{t,a}(x)$:
\begin{equation}
    \label{eq:doob_sde}
    d \tilde X_t = \left[ b_{t}(\tilde X_t) + \frac{\sigma_t^2}{2} \nabla \log \rho_{t}(\tilde X_t) + \sigma_t^2 \nabla V_{t,a}(\tilde X_t) \right] dt + \sigma_t dB_t.
\end{equation}

\paragraph{The tilted probability flow ODE.}
We convert the tilted SDE \cref{eq:doob_sde} back into its corresponding probability flow ODE. Recall that the probability flow ODE for a general SDE is given by subtracting half the diffusion-scaled score from the SDE's drift. Using the drift from \cref{eq:doob_sde} and the score $\nabla \log \rho_{t,a}(x) = \nabla \log \rho_{t}(x) + \nabla V_{t,a}(x)$:
\begin{align}
    \dot{\tilde x}_t &= \left[ b_{t}(\tilde x_t) + \frac{\sigma_t^2}{2} \nabla \log \rho_{t}(\tilde x_t) + \sigma_t^2 \nabla V_{t,a}(\tilde x_t) \right] - \frac{\sigma_t^2}{2} \left[ \nabla \log \rho_{t}(\tilde x_t) + \nabla V_{t,a}(\tilde x_t) \right] \\
    &= \underbrace{b_{t}(\tilde x_t) + \frac{\sigma_t^2}{2} \nabla V_{t,a}(\tilde x_t)}_{=: \tilde b_{t,a}(\tilde x_t)}, \label{eq:doob_ode}
\end{align}
which is the original ODE drift plus a correction coming from the gradient of the value function. 

\paragraph{Connection to tilt matching.}
We claim that $\tilde b_{t,a}(x) = b_{t,a}(x)$, where $b_{t,a}$ is the drift obtained via Tilt Matching. At $a=0$, both recover the base velocity: $b_t(x) = \tilde b_{t, a=0}(x) = b_{t,a=0}(x)$. To show they are identical for all $a \in [0, 1]$, it suffices to show that their derivatives with respect to $a$ are equal. First, we differentiate the Doob-derived drift $\tilde b_{t,a}$ defined in \eqref{eq:doob_ode}:
\begin{align}
    \frac{\partial}{\partial a} \tilde b_{t,a}(x) &= \frac{\sigma_t^2}{2} \nabla \left( \frac{\partial}{\partial a} V_{t,a}(x) \right) \\
    &= \frac{\sigma_t^2}{2} \nabla \left( \frac{\partial}{\partial a} \log \mathbb{E}[ \exp(a r(X_1)) \mid X_t = x ] \right) \\
    &= \frac{\sigma_t^2}{2} \nabla \left( \frac{\mathbb{E}[ r(X_1) \exp(a r(X_1)) \mid X_t = x ]}{\mathbb{E}[ \exp(a r(X_1)) \mid X_t = x ]} \right) \\
    &= \frac{\sigma_t^2}{2} \nabla \left( \frac{\mathbb{E}[ r(I_1) \exp(a r(I_1)) \mid I_t = x ]}{\mathbb{E}[ \exp(a r(I_1)) \mid I_t = x ]} \right) \\
    &= \frac{\sigma_t^2}{2} \nabla \mathbb{E}[ r(x_1^a)  \mid I_t^a = x ].   \label{eq:doob_grad}
\end{align}
Recall that $\rho_{1|t,a}(x_1|x_t) = \mathcal{N}(x_t; \beta_t x_1, \alpha_t^2I) \rho_{1,a}(x_1) / \rho_{t,a}(x_t)$. Thus the score function is
\begin{equation}
    \nabla_{x_t} \log \rho_{1|t,a}(x_1 |x_t) = \frac{\beta_t x_1-x_t}{\alpha_t^2} - \nabla_{x_t} \log \rho_{t,a}(x_t).
\end{equation}
Therefore the gradient of the conditional expectation from \cref{eq:doob_grad} is:
\begin{align}
    \nabla \mathbb{E}[ r(x_1^a)  \mid I_t^a = x ] &= \int r(x_1) \nabla_x \rho_{1|t,a}(x_1 | x)  dx_1 \\
    &= \int r(x_1) \rho_{1|t,a}(x_1 | x) \nabla_x \log \rho_{1|t,a}(x_1 | x) dx_1 \\
    &= \mathbb{E} \left [ r(x_1^a) \left ( 
    \frac{\beta_t x_1^a-x}{\alpha_t^2}- \nabla_x \log \rho_{t,a}(x) \right) \mid I_t^a=x \right].   \label{eq:score_exp}
\end{align}
By Tweedie's formula
\begin{align}
    \nabla_x \log \rho_{t,a}(x)
    &= \mathbb{E} \left [\frac{\beta_tI_1^a - x}{\alpha_t^2}\ \mid I_t^a = x\right ].
\end{align}
Substituting this into \cref{eq:score_exp}, we obtain the covariance relation:
\begin{align}
    \nabla \mathbb{E}[ r(x_1^a) \mid I_t^a = x ]
    &= \text{Cov}\left( r(x_1^a), \frac{\beta_t x_1^a - x}{\alpha_t^2} \Biggm| I_t^a = x \right) \\
    &= \frac{\beta_t}{\alpha_t^2} \text{Cov}( r(x_1^a), x_1^a \mid I_t^a = x ).
\end{align}
Substituting this back into \cref{eq:doob_grad}:
\begin{equation}
    \frac{\partial}{\partial a} \tilde b_{t,a}(x) = \frac{\sigma_t^2}{2} \frac{\beta_t}{\alpha_t^2} \text{Cov}( r(x_1^a), x_1^a \mid I_t^a = x ).
\end{equation}
Now we examine the time derivative of the interpolant. We can rewrite $\dot{I}_t^a = \dot{\alpha}_t x_0 + \dot{\beta}_t I_1^a$ by substituting $x_0 = (I_t^a - \beta_t x_1^a)/\alpha_t$:
\begin{equation}
    \dot{I}_t^a = \frac{\dot{\alpha}_t}{\alpha_t} I_t^a + \left( \dot{\beta}_t - \frac{\dot{\alpha}_t \beta_t}{\alpha_t} \right) x_1^a.
\end{equation}
The Tilt Matching covariance update is given by Proposition~\ref{prop:acm}:
\begin{align}
    \frac{\partial}{\partial a}b_{t,a}(x) &= \text{Cov}( r(x_1^a), \dot{I}_t^a \mid I_t^a = x ) \\
    &= \left( \dot{\beta}_t - \frac{\dot{\alpha}_t \beta_t}{\alpha_t} \right) \text{Cov}( r(x_1^a), x_1^a \mid I_t^a = x ).
\end{align}
Comparing the coefficients, equality holds if and only if:
\begin{equation}
    \frac{\sigma_t^2}{2} \frac{\beta_t}{\alpha_t^2} = \dot{\beta}_t - \frac{\dot{\alpha}_t \beta_t}{\alpha_t}.
\end{equation}
Multiplying through by $\alpha_t^2 / \beta_t$, we require $\frac{\sigma_t^2}{2} = \alpha_t^2 \frac{\dot{\beta}_t}{\beta_t} - \alpha_t \dot{\alpha}_t$,  which matches exactly the definition of $\sigma_t$ chosen in the setup. Thus $\tilde{b}_{t,a}(x) = b_{t,a}(x)$.

\section{Experiments}
\label{app:exp}

\subsection{Sampling Lennard-Jones Potentials}
The Lennard-Jones (LJ) potential is a widely used mathematical model that describes the potential energy between two neutral, non-bonding particles. This energy is calculated as a function of the distances between particles, capturing the balance between long-range attractive forces and short-range repulsive forces. It has the form
\begin{equation}
    E^{\text{LJ}}(x) = \frac{\epsilon}{2 \tau} \sum_{ij} \left ( \left ( \frac{r_m}{d_{ij}}\right)^6 - \left (\frac{r_m}{d_{ij}}\right )^{12} \right ),
\end{equation}
where $d_{ij} = \| x_i - x_j \|$ is the distance between particles $i$ and $j$, $\epsilon$ is the potential well depth, $r_m$ is the equilibrium distance at which the potential is minimized, and $\tau$ is the system temperature. We follow \cite{pmlr-v119-kohler20a, akhoundsadegh2025} in adding a harmonic potential to the energy:
\begin{equation}
    E^{\text{Total}}(x) = E^{\text{LJ}}(x) + \frac{1}{2} \sum_i \|x_i - \bar{x} \|^2,
\end{equation}
where $\bar{x}$ is the center of mass of the system. We use the same parameters $\epsilon = 2.0, r_m = 1$ and $\tau = 1$ as \cite{akhoundsadegh2025} for our experiments. For the LJ-13 and LJ-55 datasets we use samples provided by the codebase in \citep{akhoundsadegh2025} which uses the No-U-Turn-Sampler (NUTS) \citep{hoffman2011nouturnsampleradaptivelysetting}.

In our experiments we use an EGNN \citep{satorras2022enequivariantgraphneural}. For LJ-13 we use three layers and 32 hidden dimensions which is approximately 45,000 parameters. For LJ-55 we use five layers and 128 hidden dimensions for a parameter count of approximately 580,000.

To compute the Effective Sample Size (ESS) we evaluate likelihoods $p_1(x_1)$ under our model $\hat{b}_t$ by 
\begin{equation}
    \log p_1(x_1) = \log p_0(x_0)  - \int_0^1\nabla \cdot \hat{b}_t(x_t)dt
\end{equation}
to compute importance weights $w(x_1) = \frac{\rho_{1, a=0}(x_1)e^{r(x_1)}}{p_1(x_1)}$ and then compute the ESS as
\begin{equation}
\text{ESS} = \frac{(\sum_{i=1}^{N} w_i)^2}{N\sum_{i=1}^{N} w_i^2}.
\end{equation}
% For ETM we use the ESS to dynamically update the step size $h$ for transitions from $\rho_{1, a}$ to $\rho_{1, a+h}$. If the ESS drops below a given threshold, we decrease the step size to $h' = 0.5 h$ and attempt the transition from $\rho_{1, a}$ to $\rho_{1, a+h'}$. 
For ITM we use a fixed step size of $h = 0.001$. We use $800$ gradient steps per anneal update. We use a simple Euler integrator with $100$ steps in each case. We use the linear interpolant $I_t = (1-t)x_0 + tx_1$ for our experiments.

% \begin{center}
% \begin{figure}
%     \centering
% j\includegraphics[width=0.7\linewidth]{figs/val_finetuning_plot.pdf}
% \vspace{-2.5cm}
%     \caption{Histograms of interatomic distance and Energy on LJ55.}
%     \label{fig:hist_lj55}
% \end{figure}
% \end{center}
\subsection{Algorithm Pseudocode}

\begin{center}
\begin{minipage}{0.85\linewidth}
\begin{algorithm}[H]
\small
\caption{Explicit Tilt Matching (ETM)}
\label{alg:etm}
\DontPrintSemicolon
\SetKwInput{Input}{Input}
\SetKwInput{Output}{Output}
\Input{Pretrained drift $b_{t,0}(x)$; reward $r(x)$; annealing schedule $\{a_k\}_{k=0}^{K}$; interpolant coefficients $\alpha_t, \beta_t$; epochs $E$; batch size $B$}
\Output{Tilted drift $b_{t,1}(x)$}
\For{$k=0,\dots,K-1$}{
  \# Current model is $b_{t,a_k}$; goal is to compute $b_{t,a_{k+1}}$\;
  Initialize $\hat{b}_t \gets b_{t, a_k}$\;
  \For{\textup{epoch} $=1,\dots,E$}{
    \# Use $b_{t,a_k}$ to draw samples from $\rho_{1, a_k}$ (can be stored in a buffer)\;
    Draw $B$ samples $(x_0,x_1^{a_k}, t)$ with  $x_0 \sim \rho_0, x_1^{a_k} \sim \rho_{1,a_k}, t\sim \textrm{Unif}[0,1]$\;
    $I_t^{a_k} \leftarrow \alpha_t x_0 + \beta_t x_1^{a_k}$; \quad $\dot I_t^{a_k} \gets \dot \alpha_t x_0 + \dot \beta_t x_1^{a_k}$; \quad $h_k = a_{k+1} -a_k$\;
    $T_{t, a_k, h_k} \leftarrow b_{t,a_k}(I_t^{a_k}) + 
   h_k r(x_1^{a_k}) \big(\dot I_t^{a_k} -  b_{t, a_k}(I_t^{a_k})\big)$ \;
   $\mathcal L \leftarrow \frac{1}{B} \sum \| \hat b_{t}(I_t^{a_k}) - T_{t,a_k, h_k}\|^2$\;
    Update the parameters of $\hat b_t$ by gradient descent on $\mathcal{L}$
  }
  Set $b_{t,a_{k+1}} \leftarrow \hat b_t$\;
}
\Return $b_{t,1}$
\end{algorithm}
\end{minipage}
\end{center}

\begin{center}
\begin{minipage}{0.85\linewidth}
\begin{algorithm}[H]
\small
\caption{Implicit Tilt Matching (ITM)}
\label{alg:itm}
\DontPrintSemicolon
\SetKwInput{Input}{Input}
\SetKwInput{Output}{Output}
\Input{Pretrained drift $b_{t,0}(x)$; reward $r(x)$; annealing schedule $\{a_k\}_{k=0}^{K}$; interpolant coefficients $\alpha_t, \beta_t$; epochs $E$; batch size $B$}
\Output{Tilted drift $b_{t,1}(x)$}
\For{$k=0,\dots,K-1$}{
  \# Current model is $b_{t,a_k}$; goal is to compute $b_{t,a_{k+1}}$\;
  Initialize $\hat{b}_t \gets b_{t, a_k}$\;
  \For{\textup{epoch} $=1,\dots,E$}{
    \# Use $b_{t,a_k}$ to draw samples from $\rho_{1, a_k}$ (can be stored in a buffer)\;
    Draw $B$ samples $(x_0,x_1^{a_k}, t)$ with  $x_0 \sim \rho_0, x_1^{a_k} \sim \rho_{1,a_k}, t\sim \textrm{Unif}[0,1]$\;
    $I_t^{a_k} \leftarrow \alpha_t x_0 + \beta_t x_1^{a_k}$; \quad $\dot I_t^{a_k} \gets \dot \alpha_t x_0 + \dot \beta_t x_1^{a_k}$; \quad $h_k = a_{k+1} -a_k$\;
    $T_{t, a_k, h_k} \leftarrow b_{t,a_k}(I_t^{a_k}) + 
   \big(e^{h_k r(x_1^{a_k})} - 1\big)\, \big(\dot I_t^{a_k} - \textrm{stopgrad}(\hat b_{t})(I_t^{a_k}) \big)$ \;
   $\mathcal L \leftarrow \frac{1}{B} \sum \| \hat b_{t}(I_t^{a_k}) - T_{t,a_k, h_k}\|^2$\;
    Update the parameters of $\hat b_t$ by gradient descent on $\mathcal{L}$
  }
  Set $b_{t,a_{k+1}} \leftarrow \hat b_t$\;
}
\Return $b_{t,1}$
\end{algorithm}
\end{minipage}
\end{center}
% \newpage
% Since
% \begin{equation}
%     \E \left [(\dot I_t^a - b_{t,a}(x)) (r(x_1^a) -c(x))|I_t^a =x \right ] = \partial_a b_{t, a}(x),
% \end{equation}
% we have that
% \begin{equation}
%     \text{Var} \left [(\dot I_t^a - b_{t,a}(x)) (r(x_1^a) -c(x))|I_t^a =x \right ] = \E \left [(\dot I_t^a - b_{t,a}(x))^2 (r(x_1^a) -c(x))^2|I_t^a =x \right ] - (\partial_a b_{t, a}(x))^2.
% \end{equation}
% Fix $x$ and take the derivative with respect to $c(x)$ to obtain
% \begin{align}
%     \frac{d}{dc}\text{Var} = \E \left [2(\dot I_t^a - b_{t,a}(x))^2 (c(x)-r(x_1^a))|I_t^a =x \right ].
% \end{align}
% Setting this equal to $0$ we obtain
% \begin{align}
%    c(x)\E \left [(\dot I_t^a - b_{t,a}(x))^2 |I_t^a =x \right ] = \E \left [r(x_1^a)(\dot I_t^a - b_{t,a}(x))^2|I_t^a =x \right ].
% \end{align}
% So
% \begin{align}
%    c(x) 
%    &= \frac{\E \left [r(x_1^a)(\dot I_t^a - b_{t,a}(x))^2|I_t^a =x \right ]}{\E \left [(\dot I_t^a - b_{t,a}(x))^2 |I_t^a =x \right ]} \\
%    &= \frac{\E \left [r(x_1^a)|I_t^a =x \right ]\E \left [(\dot I_t^a - b_{t,a}(x))^2|I_t^a =x \right ] + \text{Cov} \left [r(x_1^a),(\dot I_t^a - b_{t,a}(x))^2|I_t^a =x \right ]}{\E \left [(\dot I_t^a - b_{t,a}(x))^2 |I_t^a =x \right ]} \\
%    &= \E \left [r(x_1^a)|I_t^a =x \right ] + \frac{\text{Cov} \left [r(x_1^a),(\dot I_t^a - b_{t,a}(x))^2|I_t^a =x \right ]}{\E \left [(\dot I_t^a - b_{t,a}(x))^2 |I_t^a =x \right ]}
% \end{align}

\clearpage

\section{Example Images from Fine-Tuning Experiments} \label{sec:example-images}
In this Appendix, we display \textbf{random uncurated} images from the base model and the model fine-tuned under the Tilt Matching objective. It can be seen that images generated from the fine-tuned model better adhere to the given text prompt, which aligns with the numerical results in the main text.
\begin{figure*}[h!]
  \centering
  % --- Tuning knobs (adjust to taste) ---
  \setlength{\tabcolsep}{2.2pt}        % horizontal padding between columns
  \renewcommand{\arraystretch}{0}      % remove extra vertical padding in rows
  \captionsetup{font=small}

  % --- spacing controls ---
  \newcommand{\pairgap}{\hspace{1.2pt}}   % tiny gap within (Base,Ours)
  \newcommand{\groupgap}{\hspace{6pt}}    % larger gap between pairs
  \newcommand{\imgw}{0.155}
  
  % --- Helper for Prompt Rows (New) ---
  % Usage: \promptrow{Your prompt text here}
  \newcommand{\promptrow}[1]{%
    \multicolumn{6}{c}{%
      % parbox allows text to wrap if it gets too long
      \parbox{0.96\textwidth}{\centering \vspace{3pt} \small \textit{Prompt: #1}}%
    } \\ \addlinespace[3pt]%
  }
  
  % --- Grid of images + Headers integrated ---
  \begin{tabular}{@{}c@{\pairgap}c@{\groupgap}c@{\pairgap}c@{\groupgap}c@{\pairgap}c@{}}
  
    % ---------------- Header Row ----------------
    {\small \textbf{Base}} & {\small \textbf{Tilt Matching}} &
    {\small \textbf{Base}} & {\small \textbf{Tilt Matching}} &
    {\small \textbf{Base}} & {\small \textbf{Tilt Matching}} \\
    \addlinespace[3pt]

    % ---------------- Row 1 ----------------
    \includegraphics[width=\imgw\textwidth]{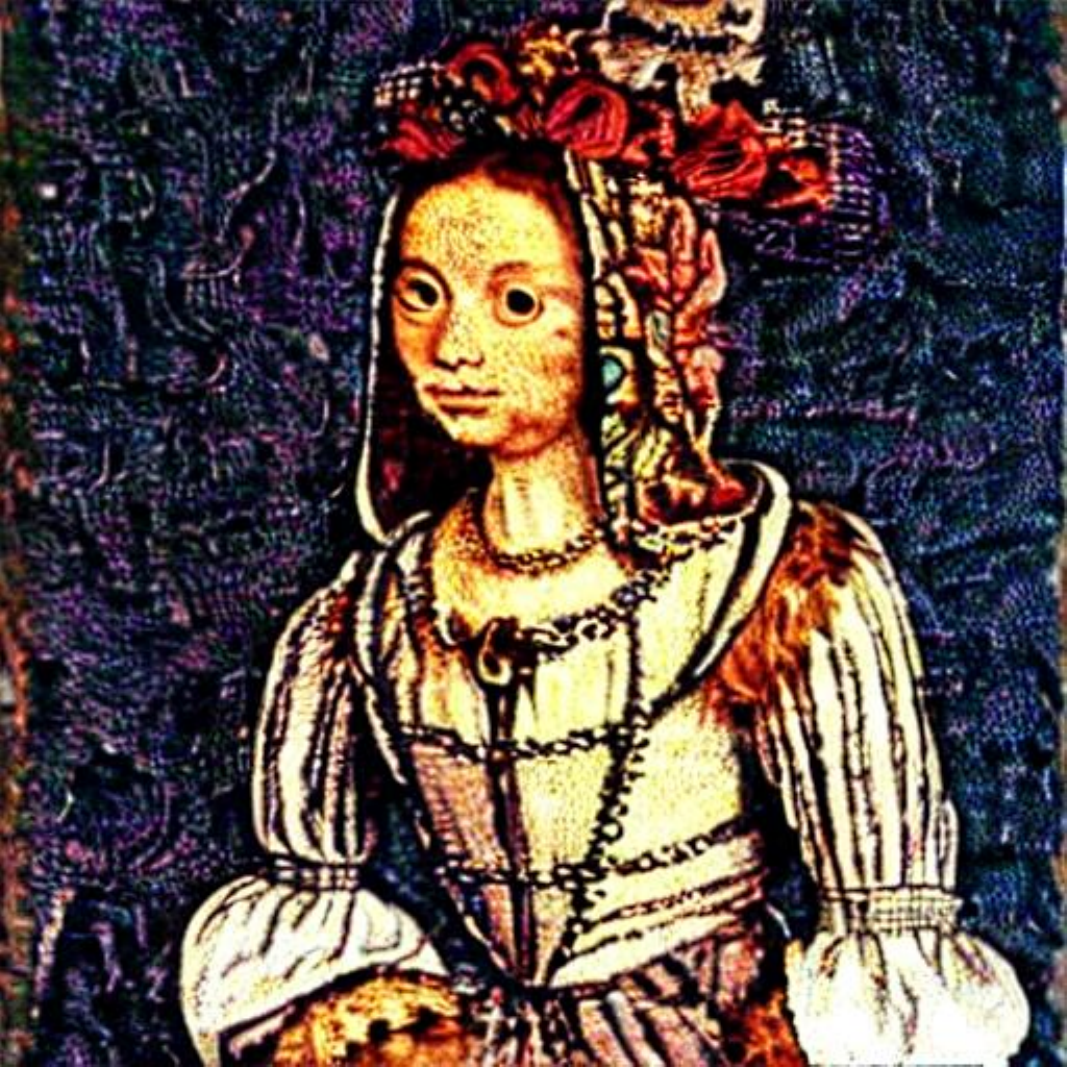} &
    \includegraphics[width=\imgw\textwidth]{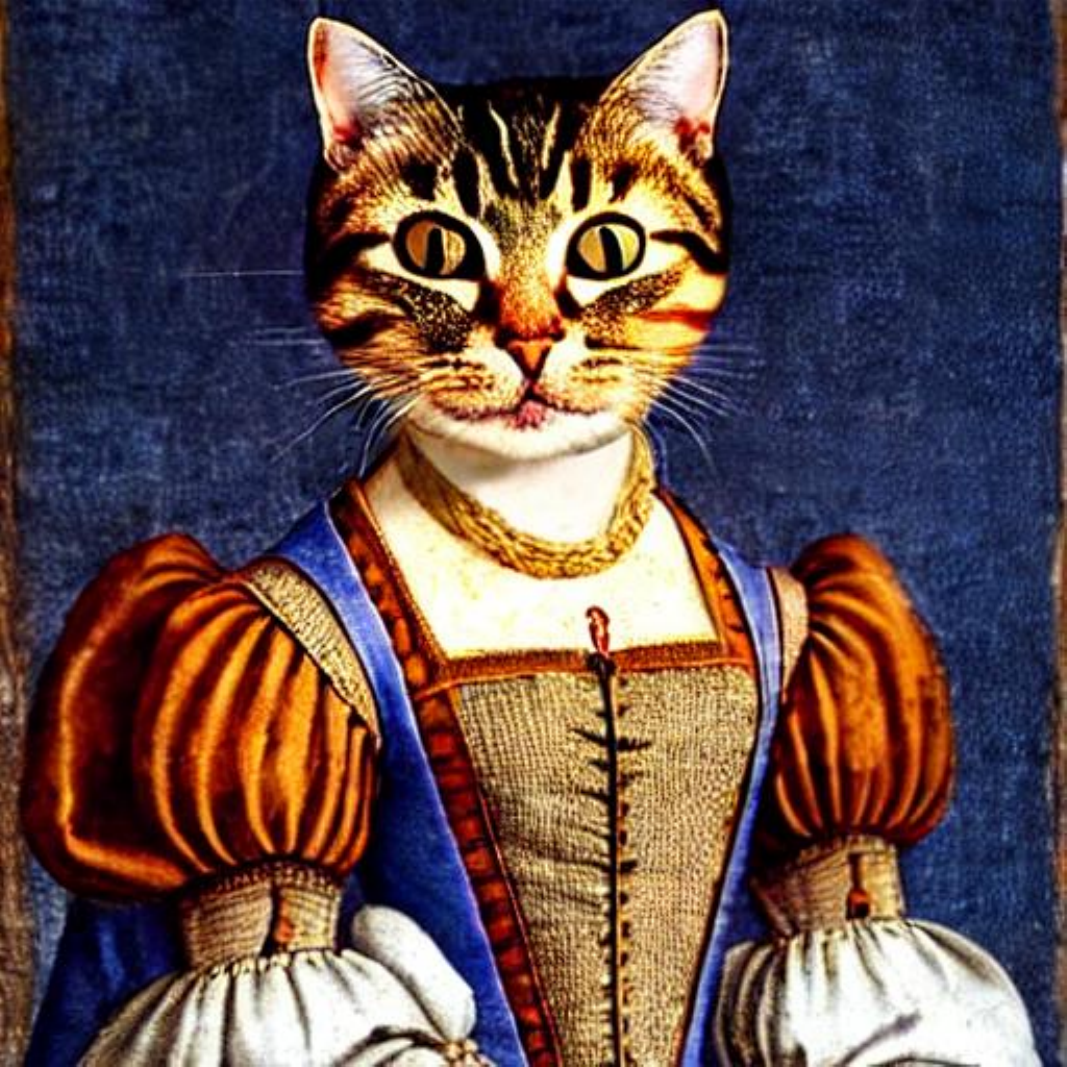} &
    \includegraphics[width=\imgw\textwidth]{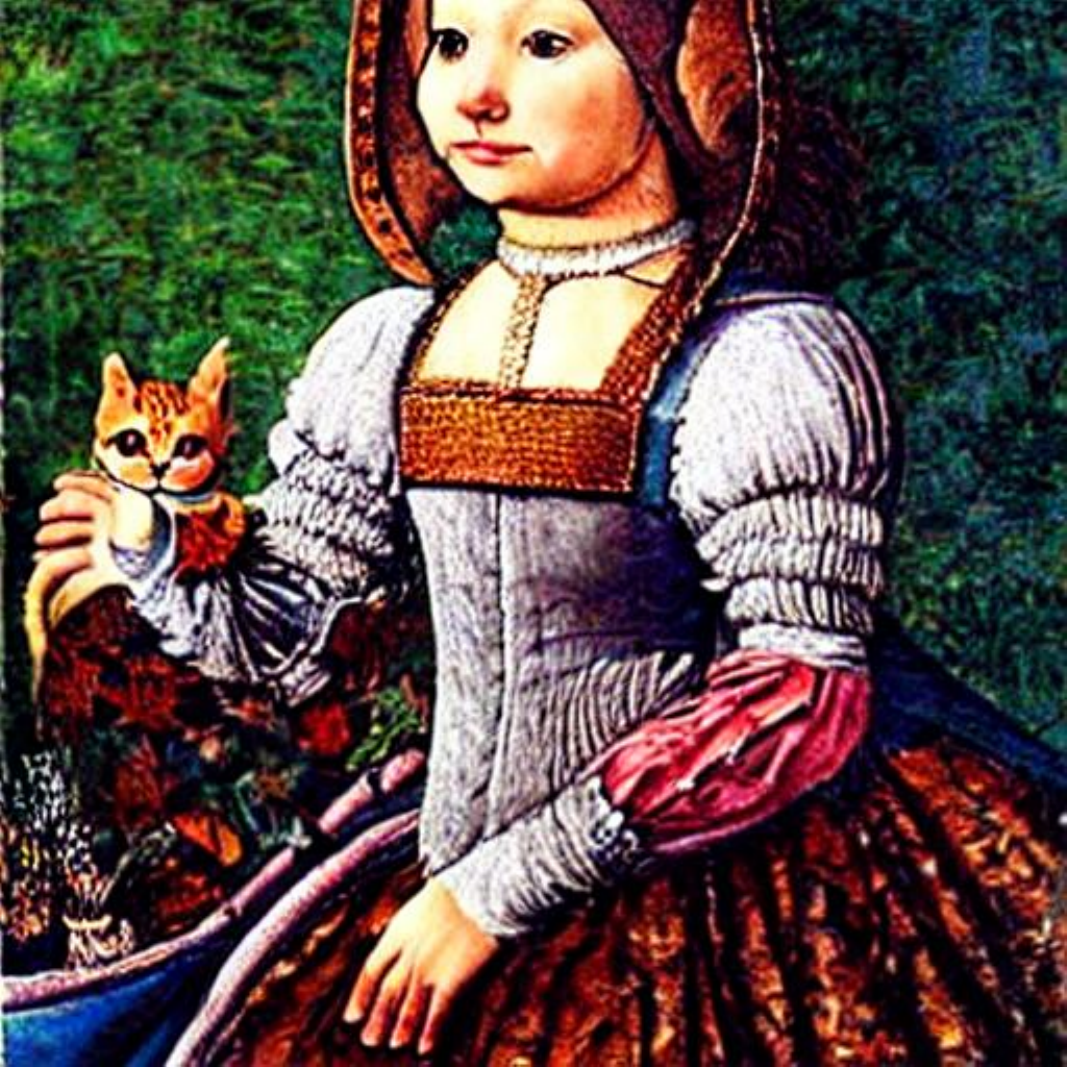} &
    \includegraphics[width=\imgw\textwidth]{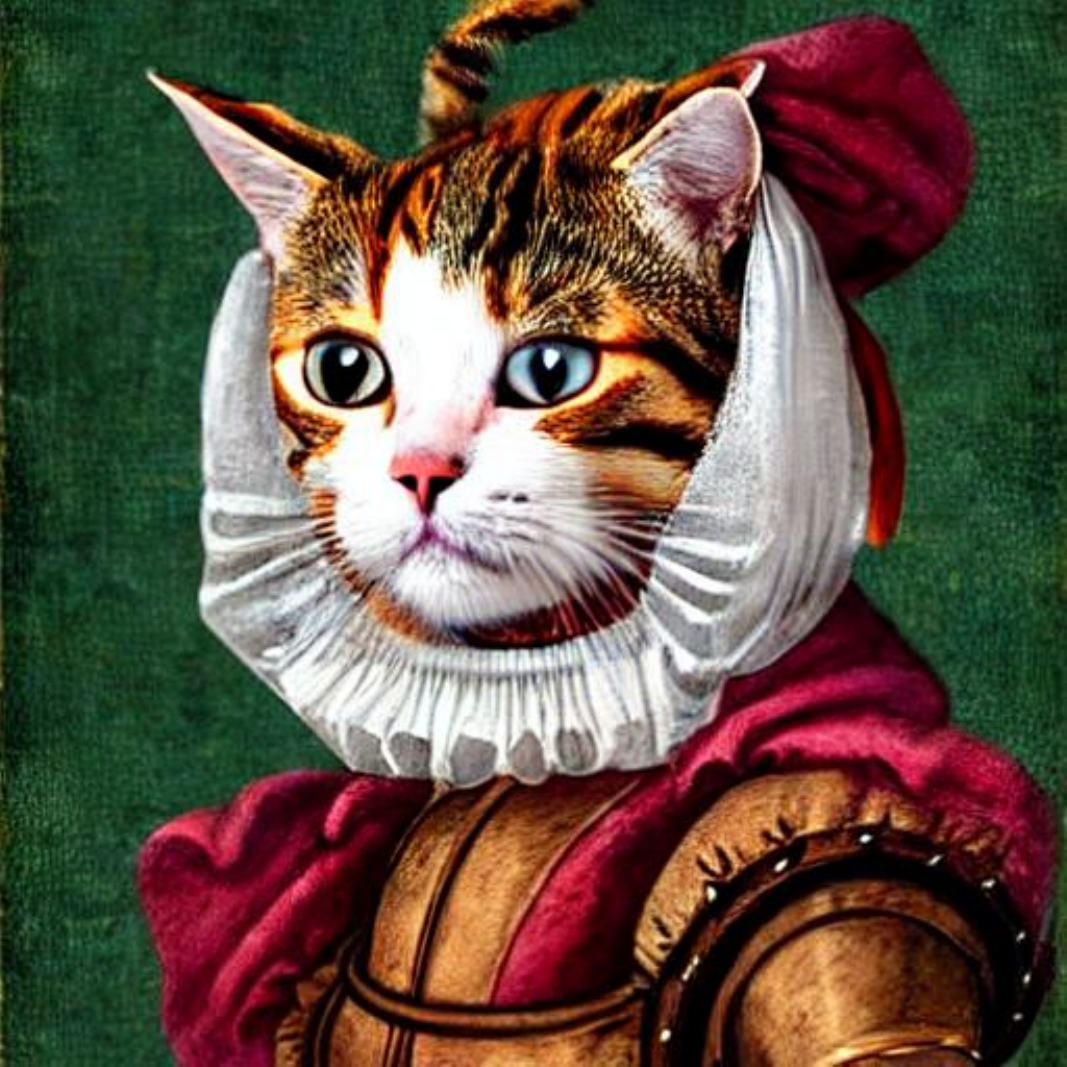} &
    \includegraphics[width=\imgw\textwidth]{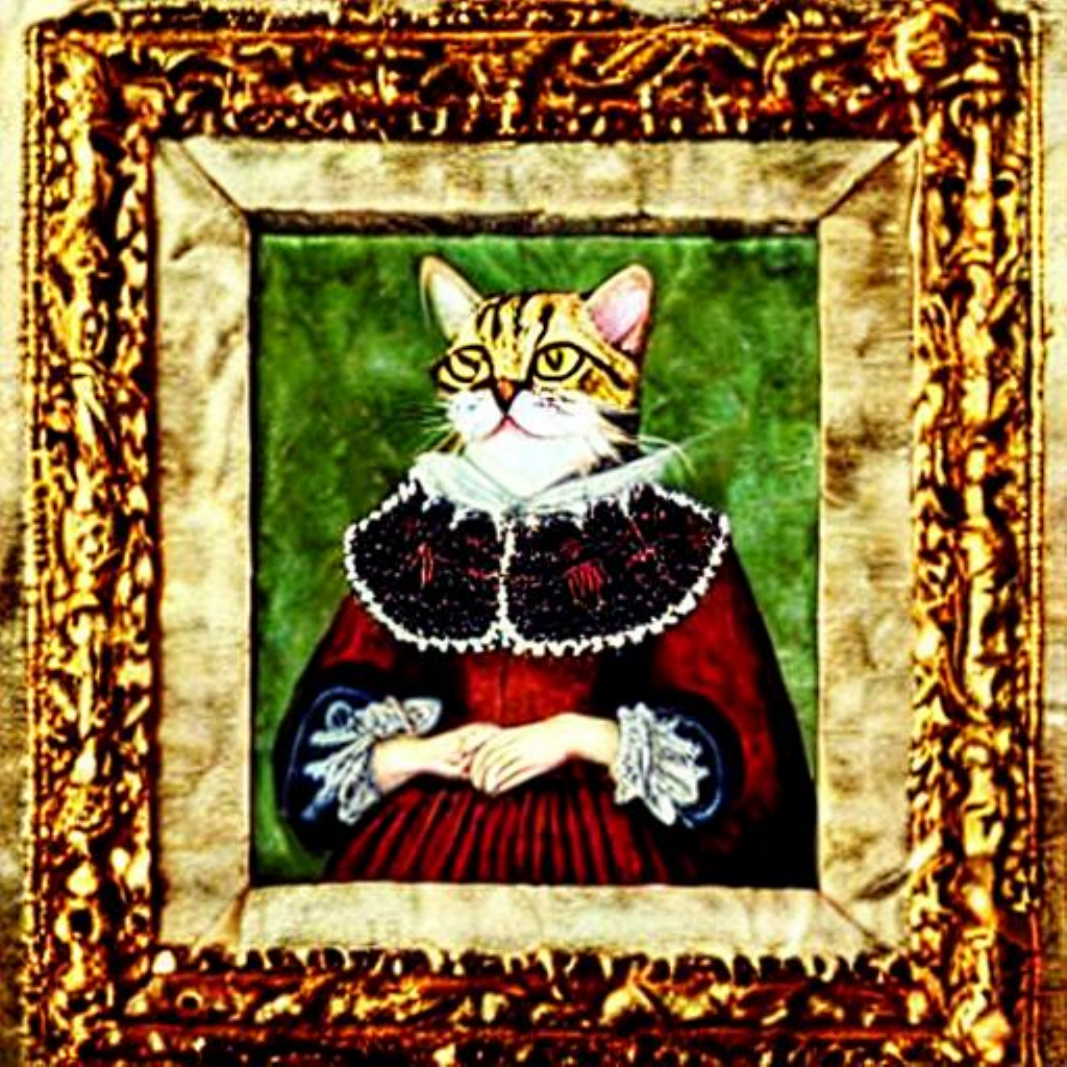} &
    \includegraphics[width=\imgw\textwidth]{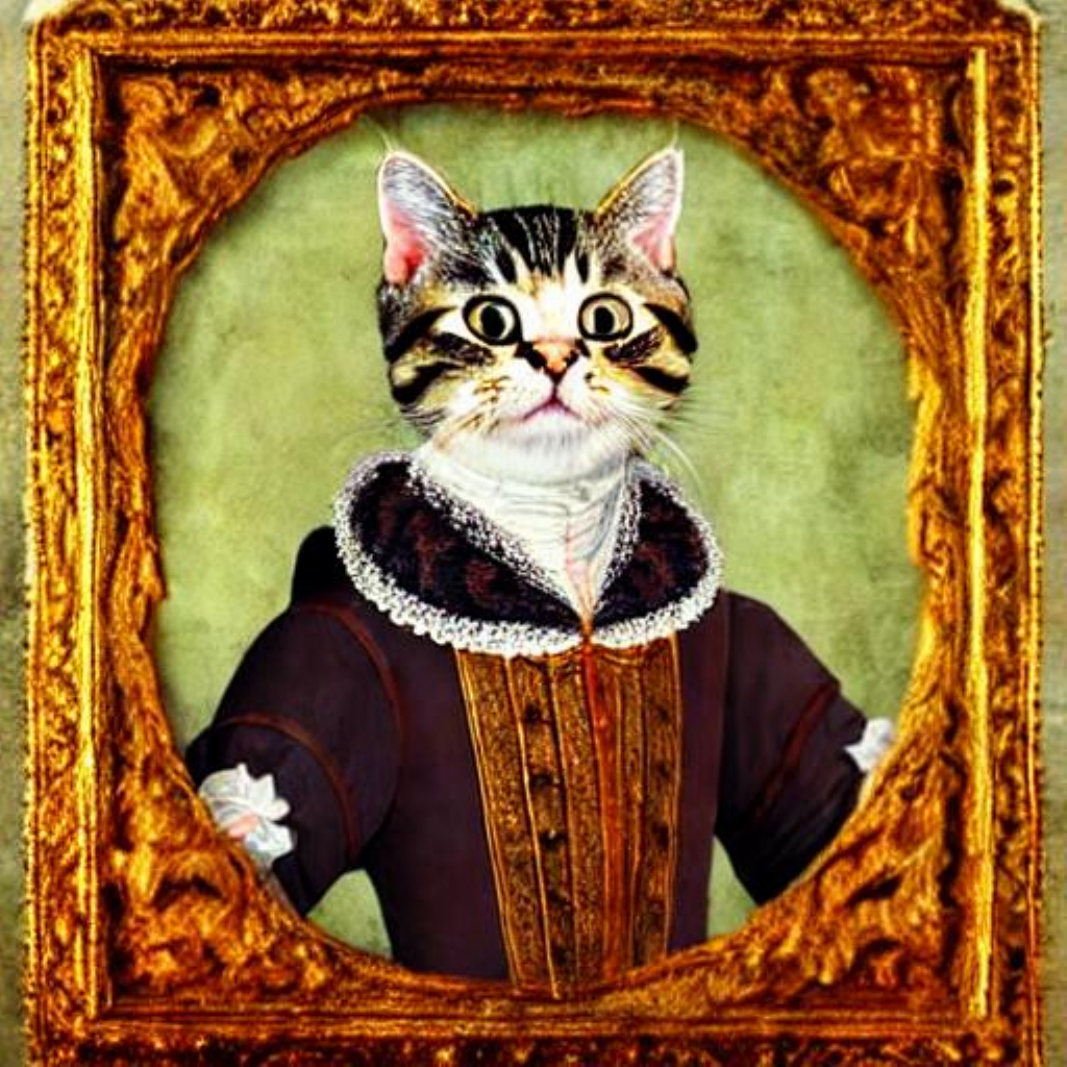} \\
    % Insert Prompt Row 1
    \promptrow{Portrait of cute cat in renaissance style clothing.}
    % ---------------- Row 2 ----------------
    \includegraphics[width=\imgw\textwidth]{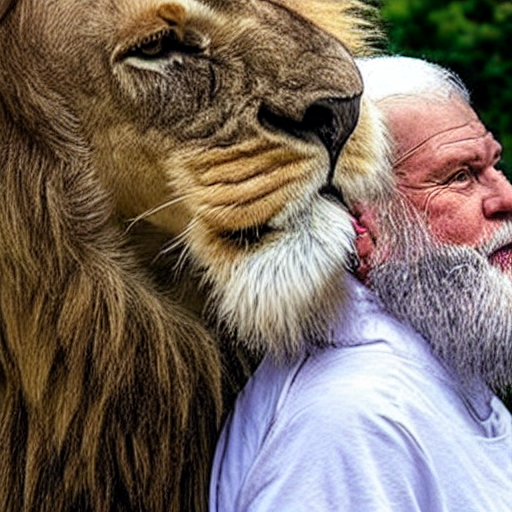} &
    \includegraphics[width=\imgw\textwidth]{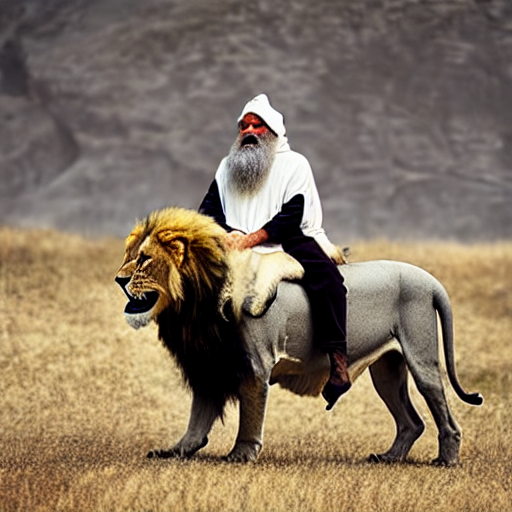} &
    \includegraphics[width=\imgw\textwidth]{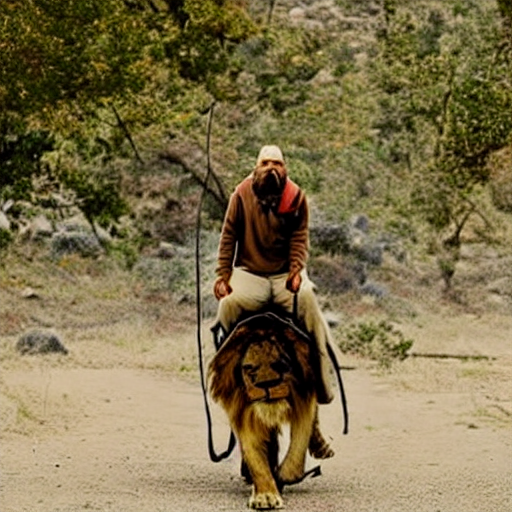} &
    \includegraphics[width=\imgw\textwidth]{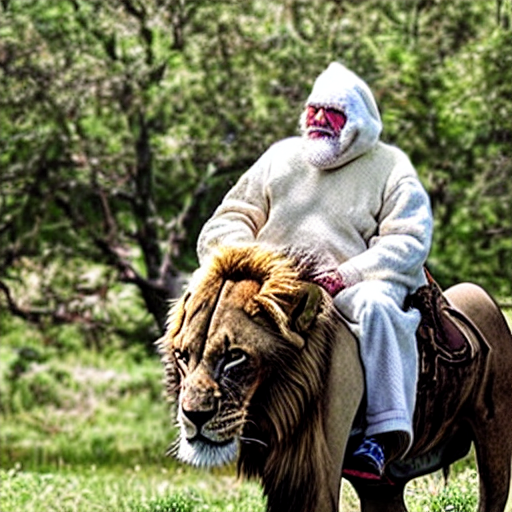} &
    \includegraphics[width=\imgw\textwidth]{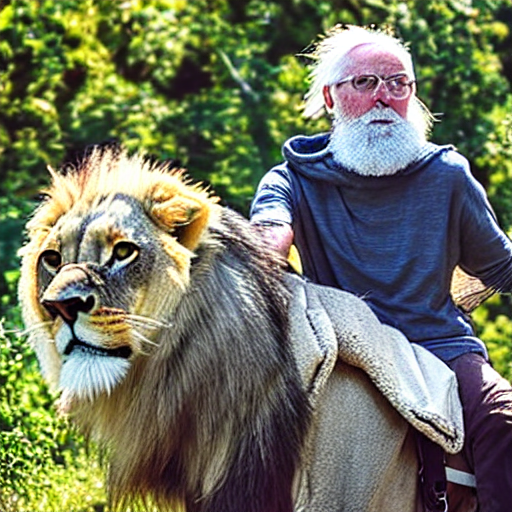} &
    \includegraphics[width=\imgw\textwidth]{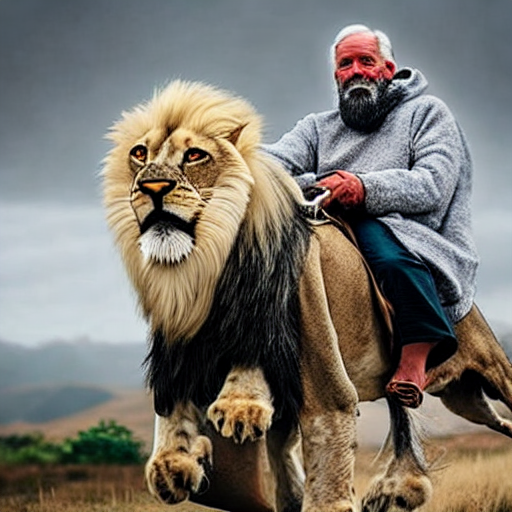} \\
    % Insert Prompt Row 2
    \promptrow{Old man (long white beard and a hood) riding on lions back.}

    % ---------------- Row 3 ----------------
    \includegraphics[width=\imgw\textwidth]{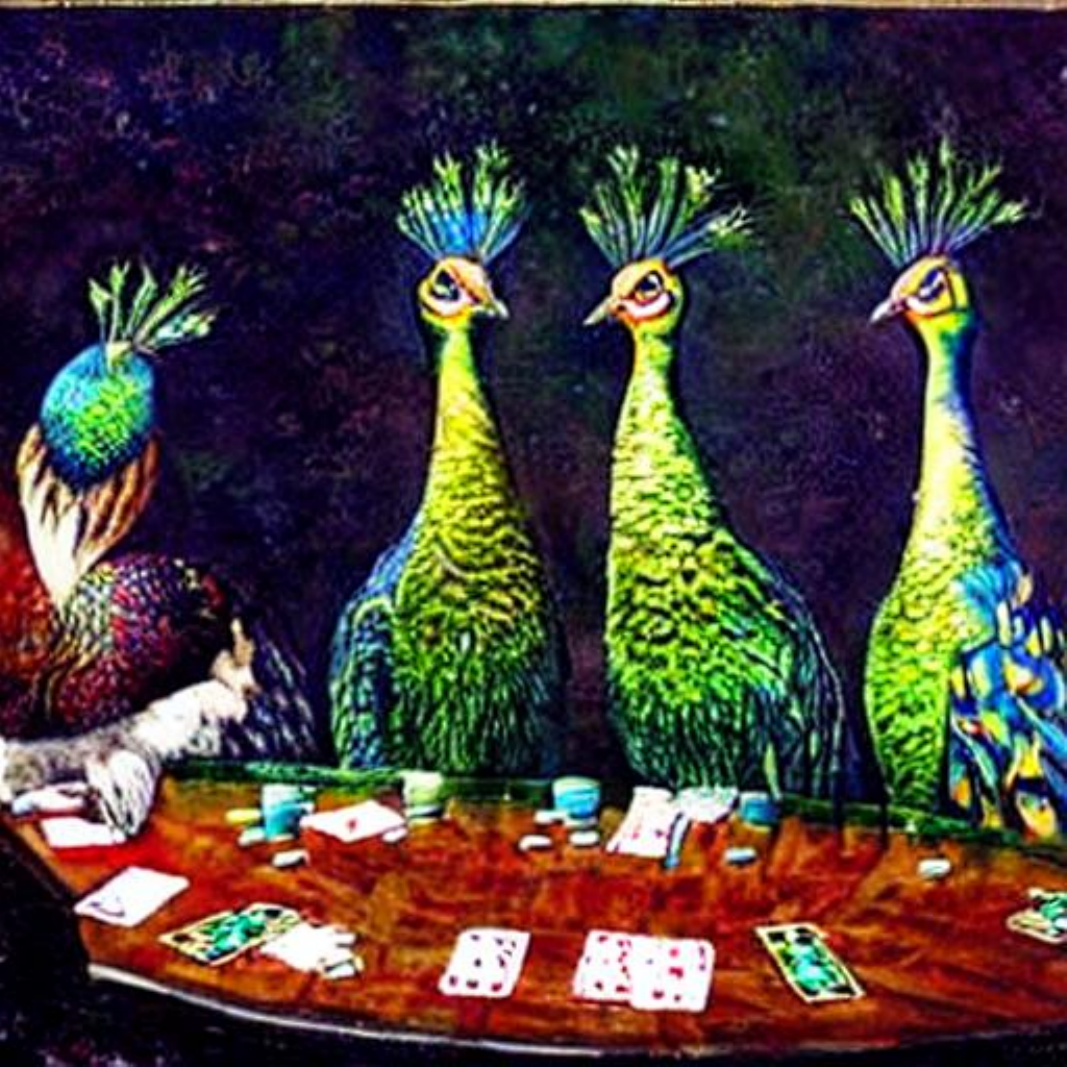} &
    \includegraphics[width=\imgw\textwidth]{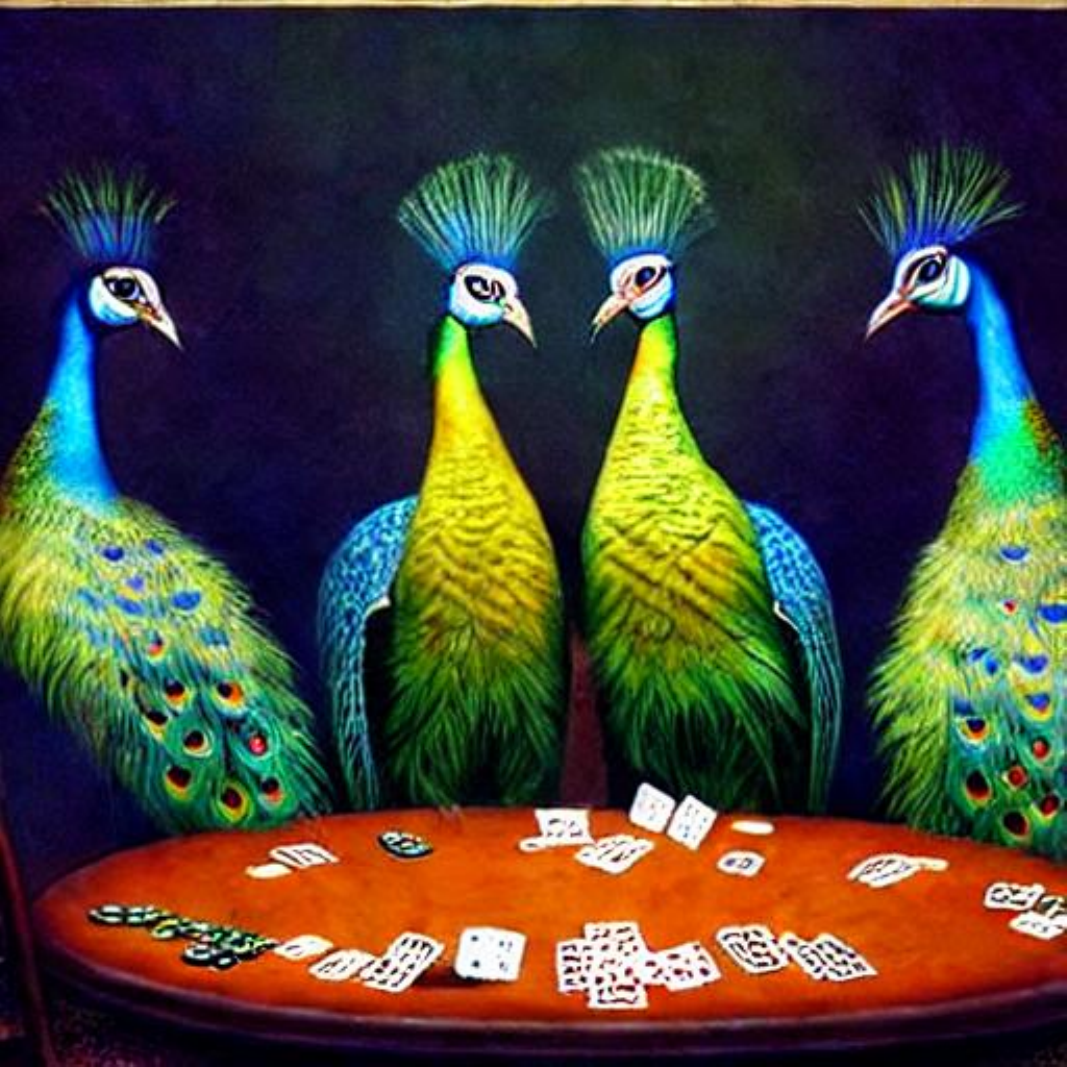} &
    \includegraphics[width=\imgw\textwidth]{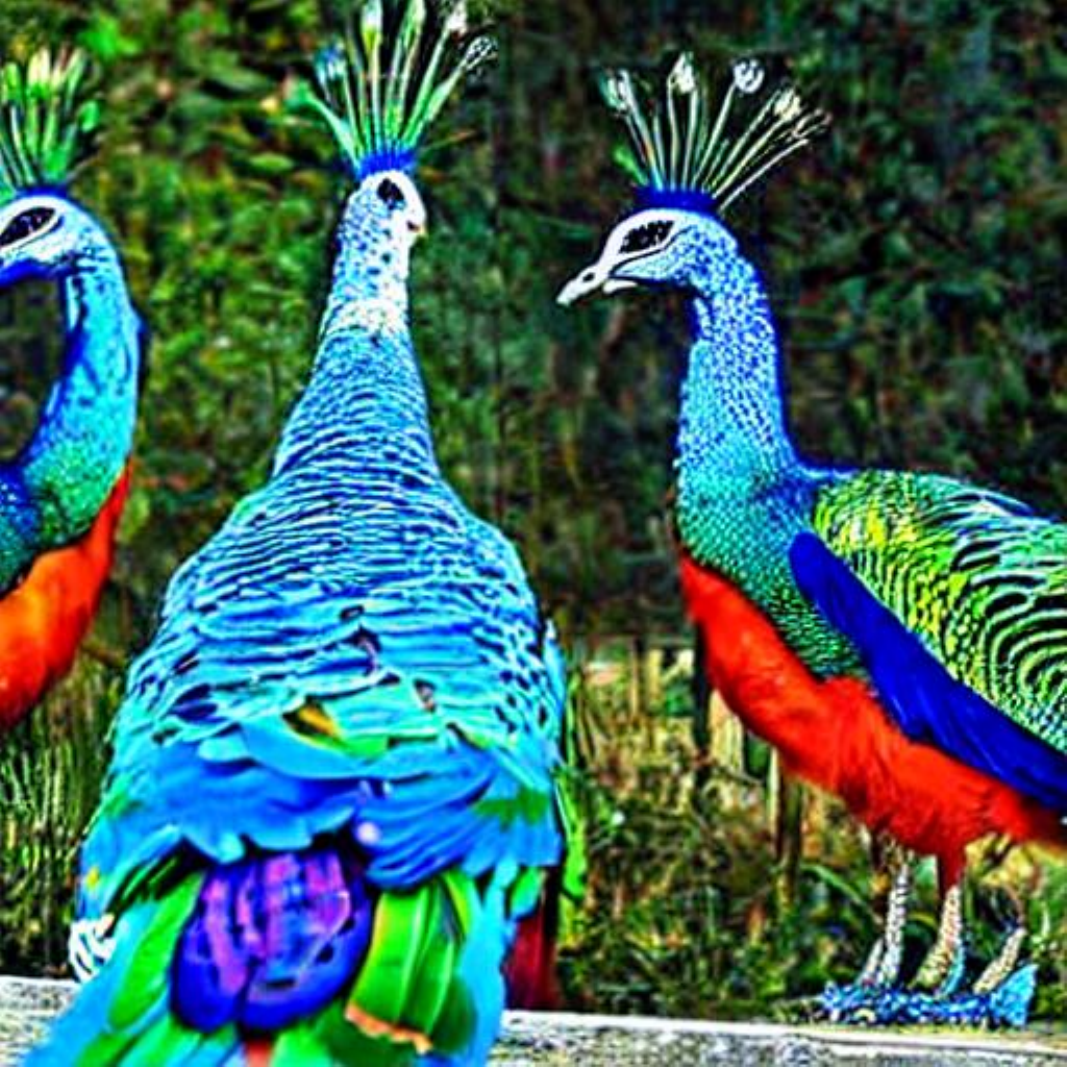} &
    \includegraphics[width=\imgw\textwidth]{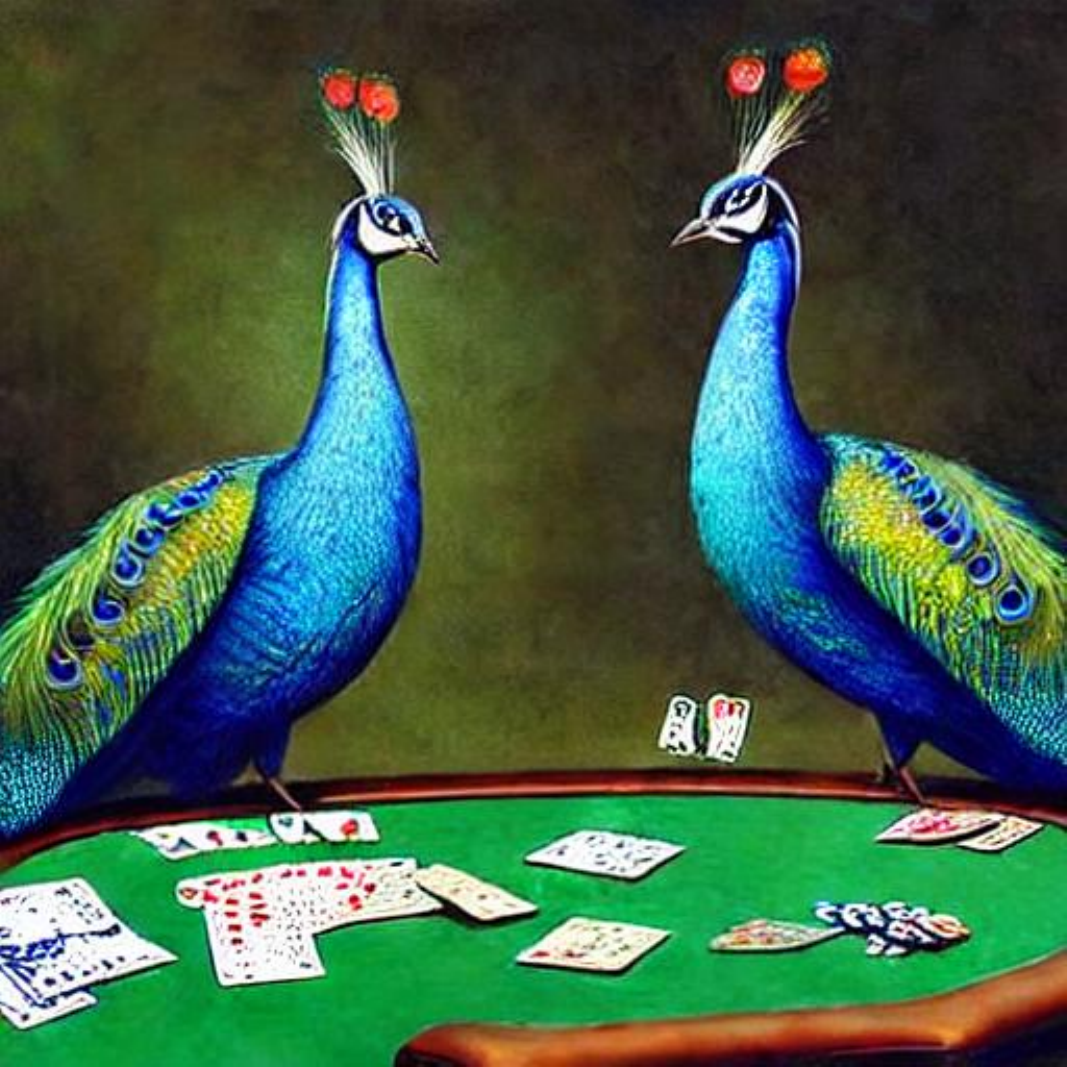} &
    \includegraphics[width=\imgw\textwidth]{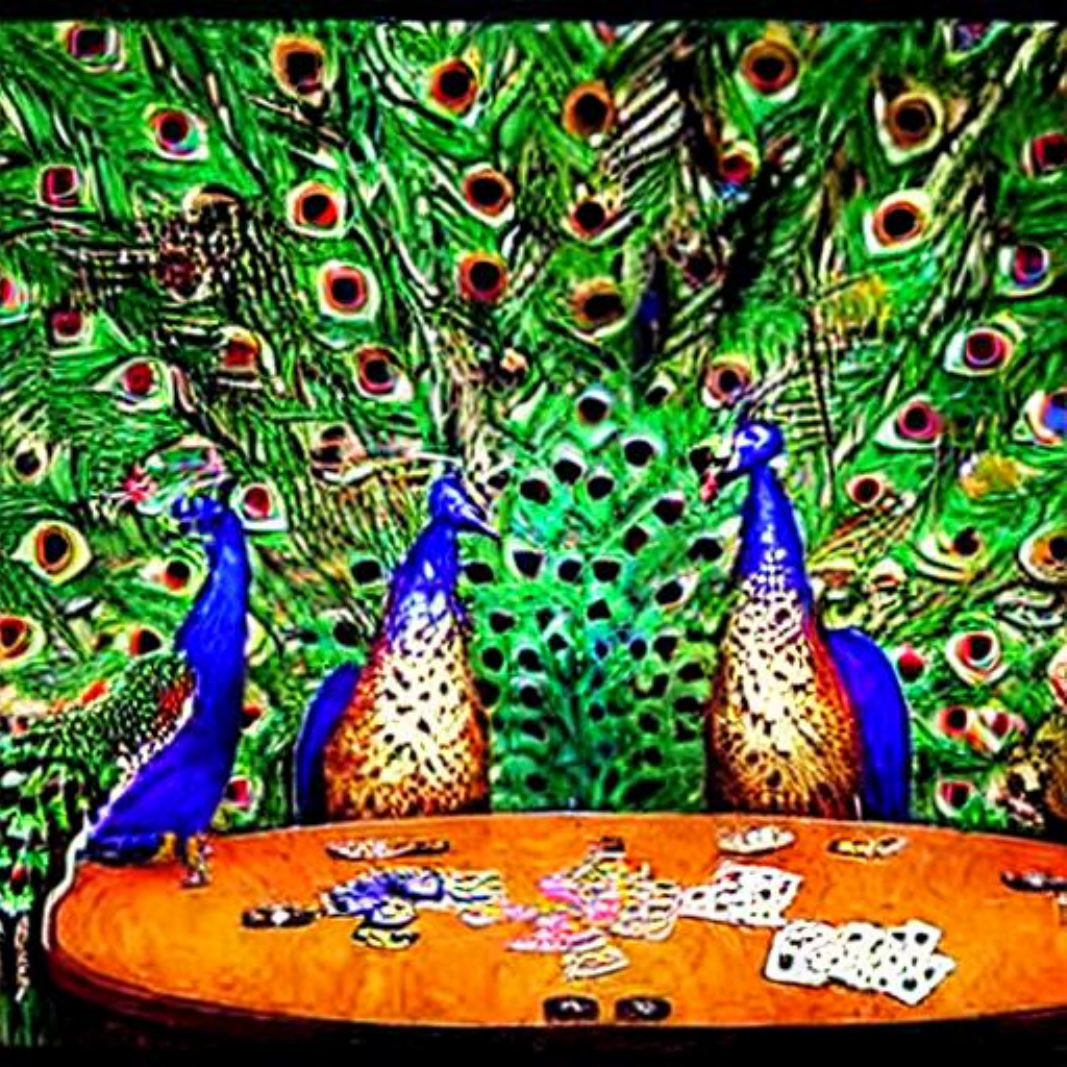} &
    \includegraphics[width=\imgw\textwidth]{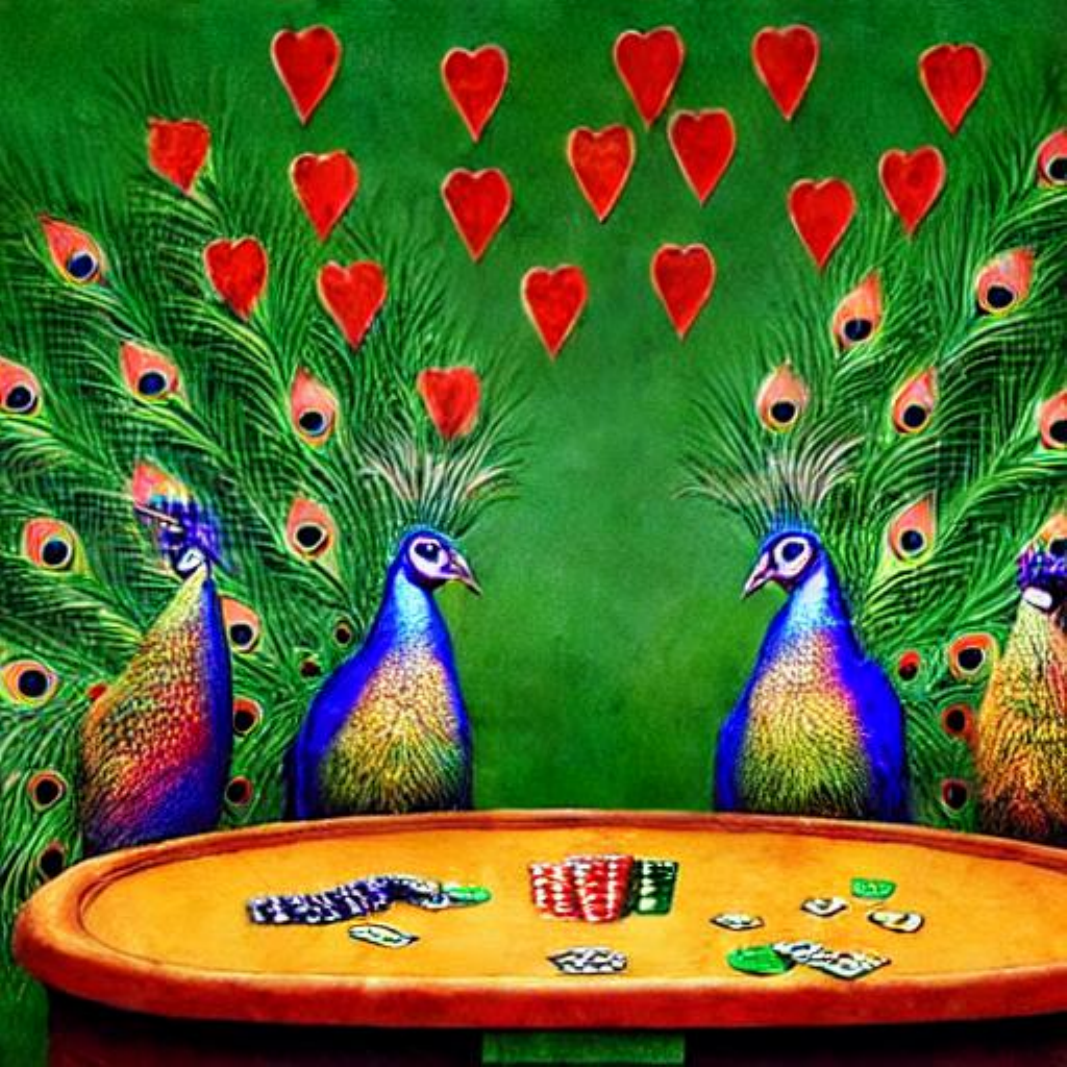} \\  
    % Insert Prompt Row 3
    \promptrow{Peacocks playing poker.}

    % ---------------- Row 4 ----------------
    \includegraphics[width=\imgw\textwidth]{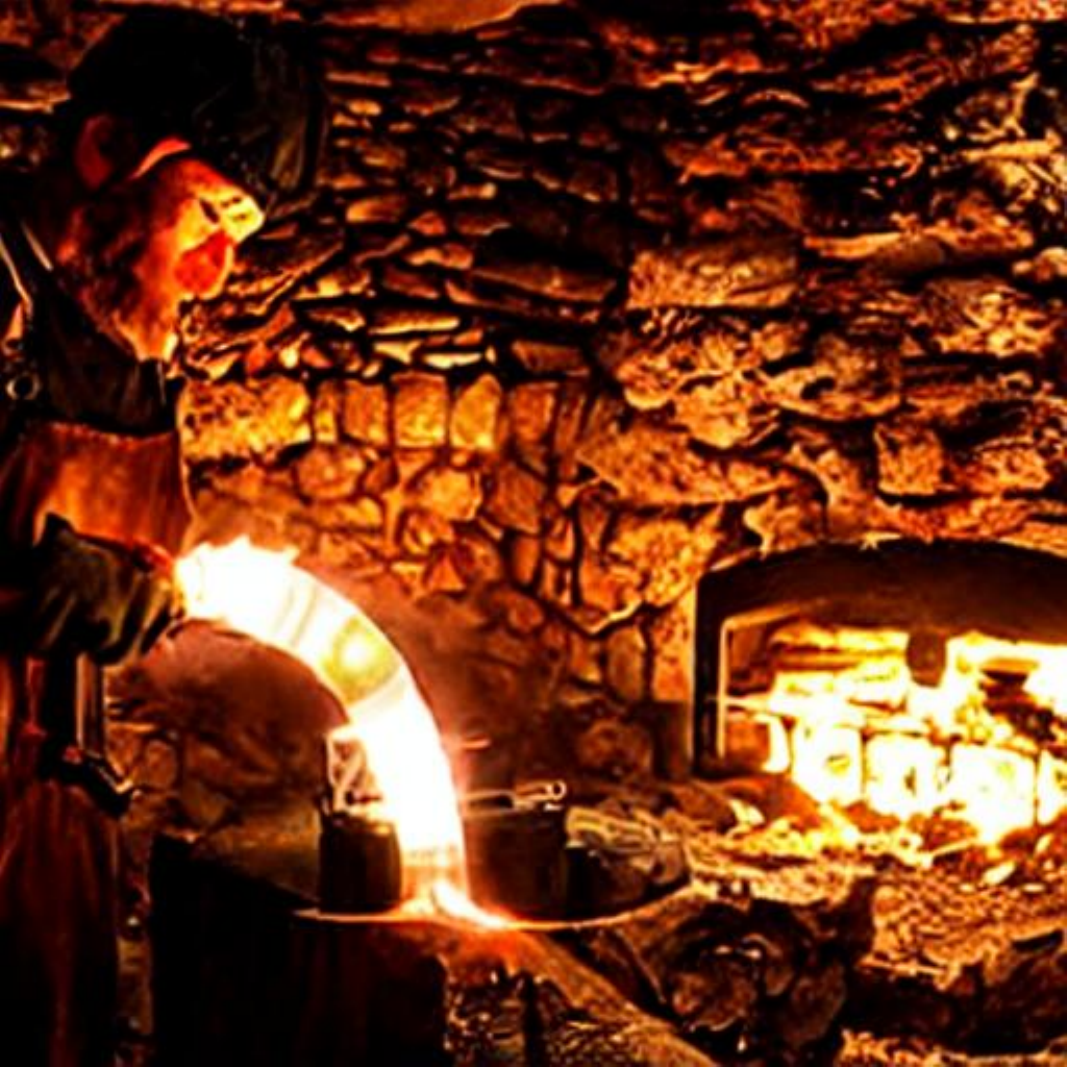} &
    \includegraphics[width=\imgw\textwidth]{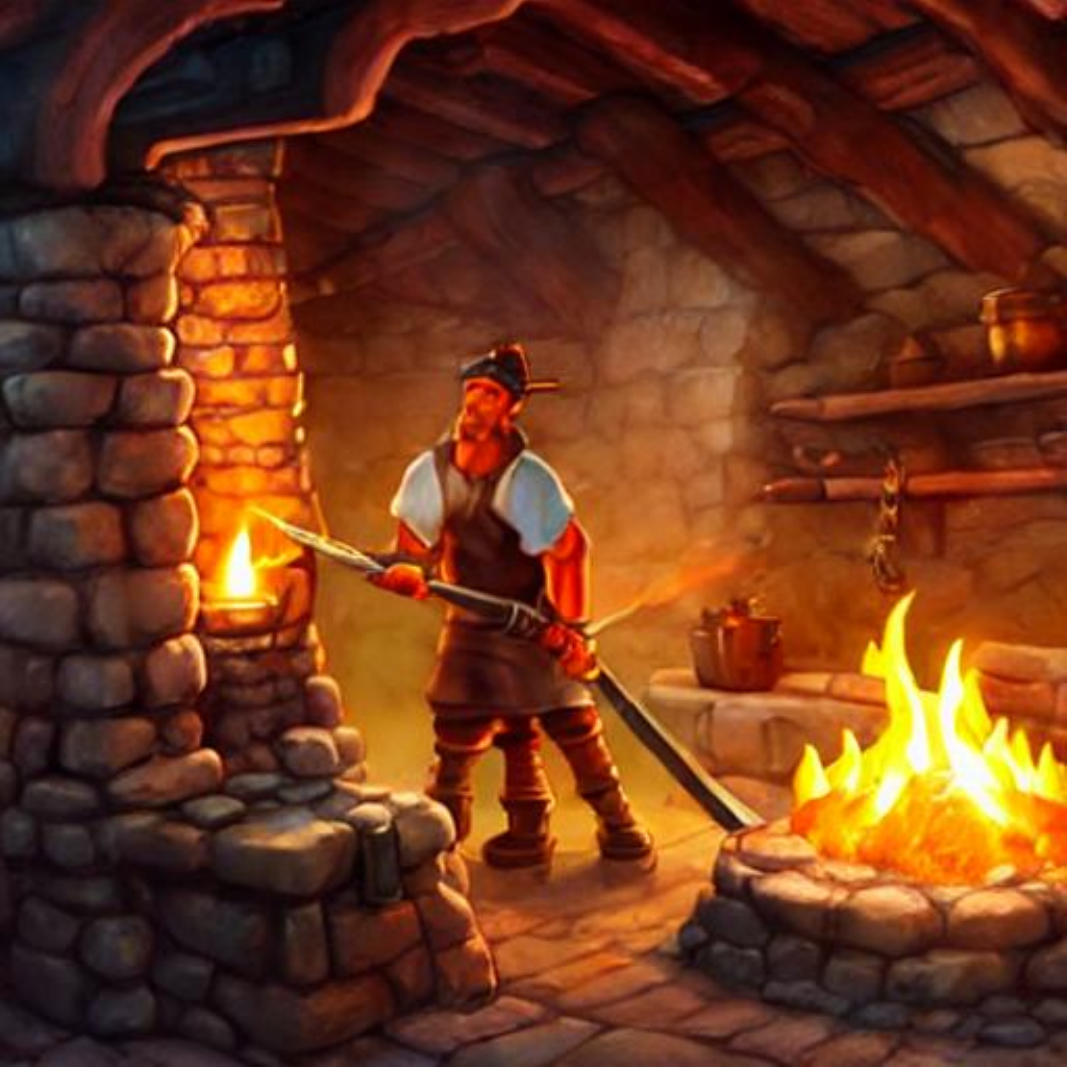} &
    \includegraphics[width=\imgw\textwidth]{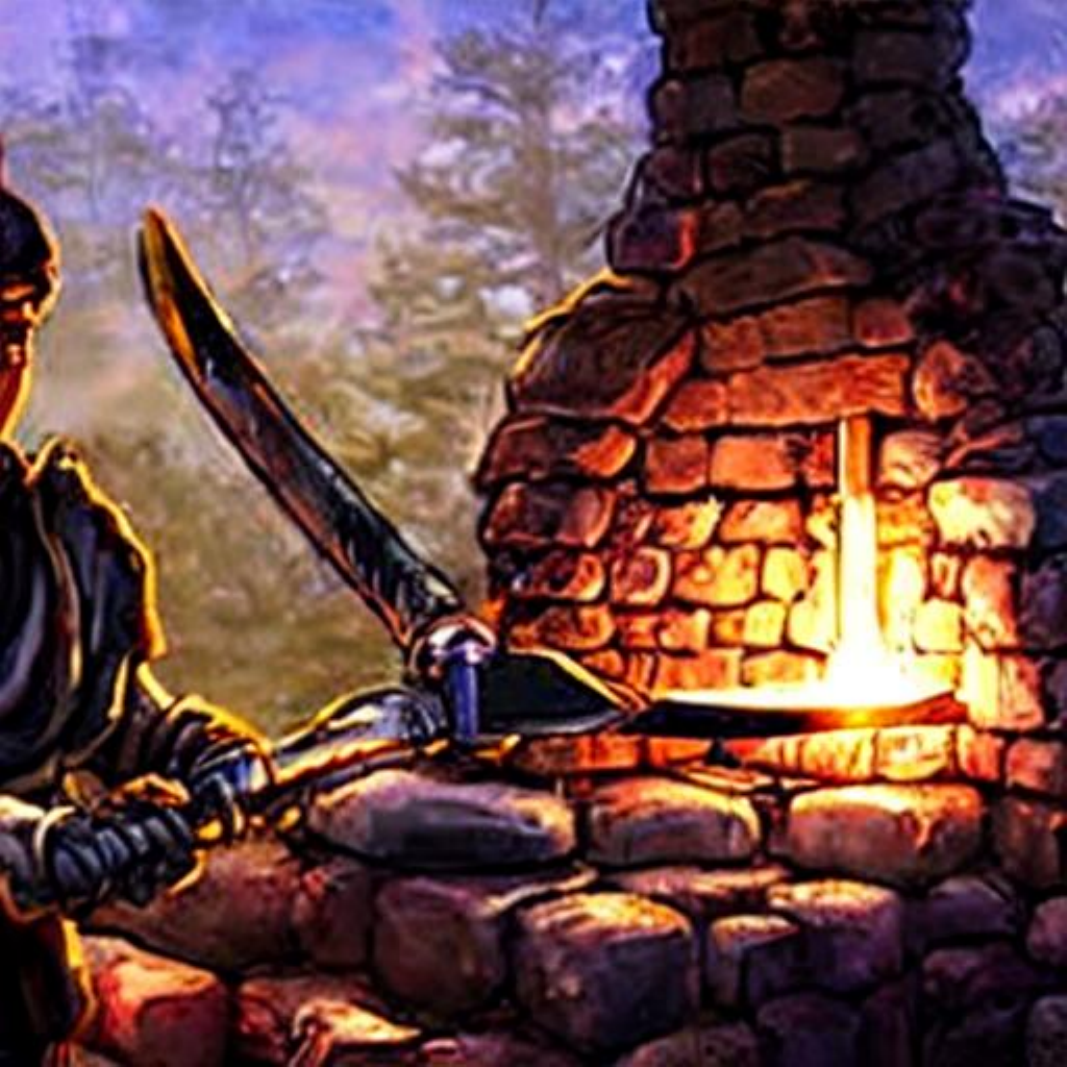} &
    \includegraphics[width=\imgw\textwidth]{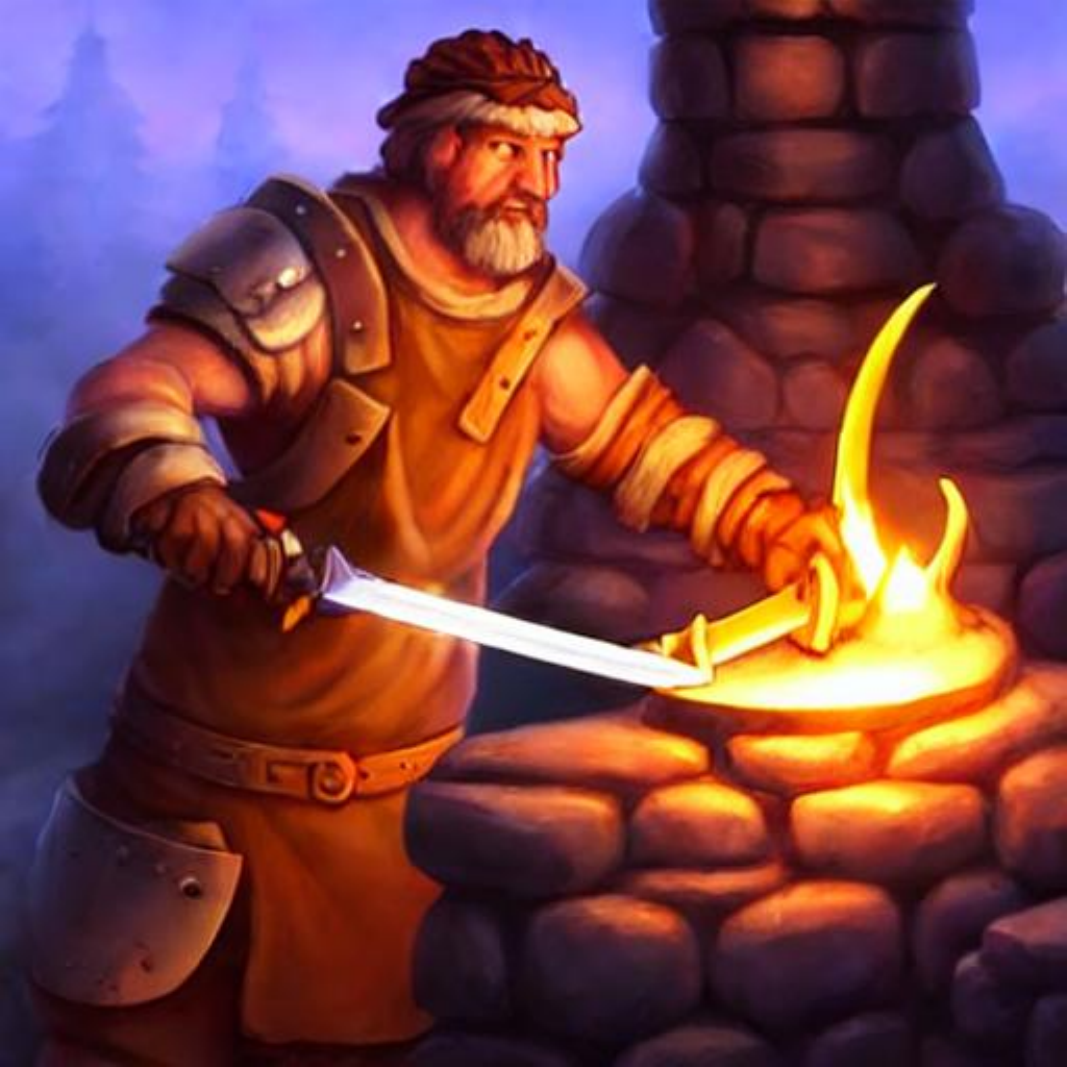} &
    \includegraphics[width=\imgw\textwidth]{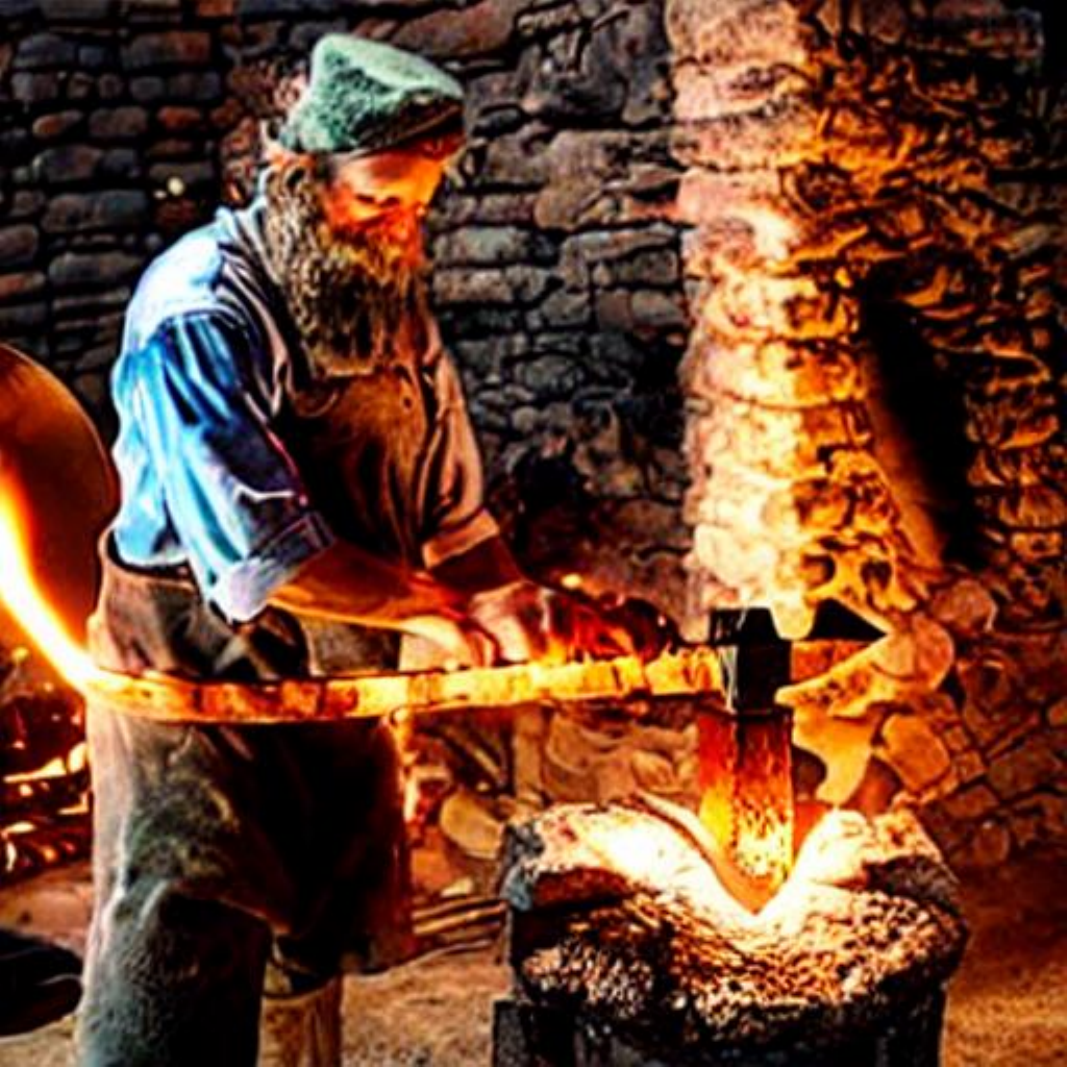} &
    \includegraphics[width=\imgw\textwidth]{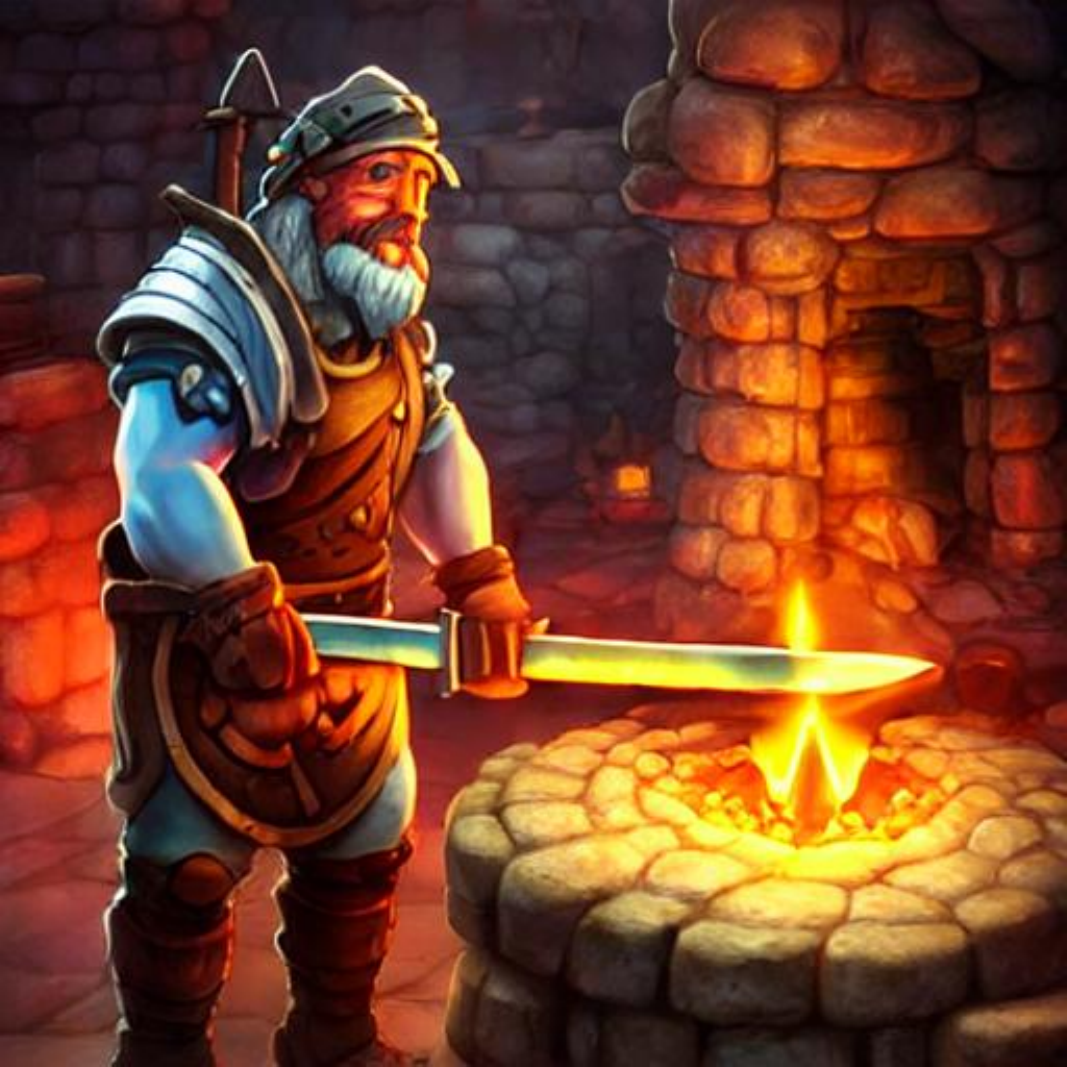} \\  
    % Insert Prompt Row 4
    \promptrow{A fantasy blacksmith forging a glowing sword in a stone forge.}

    % ---------------- Row 5 ----------------
    \includegraphics[width=\imgw\textwidth]{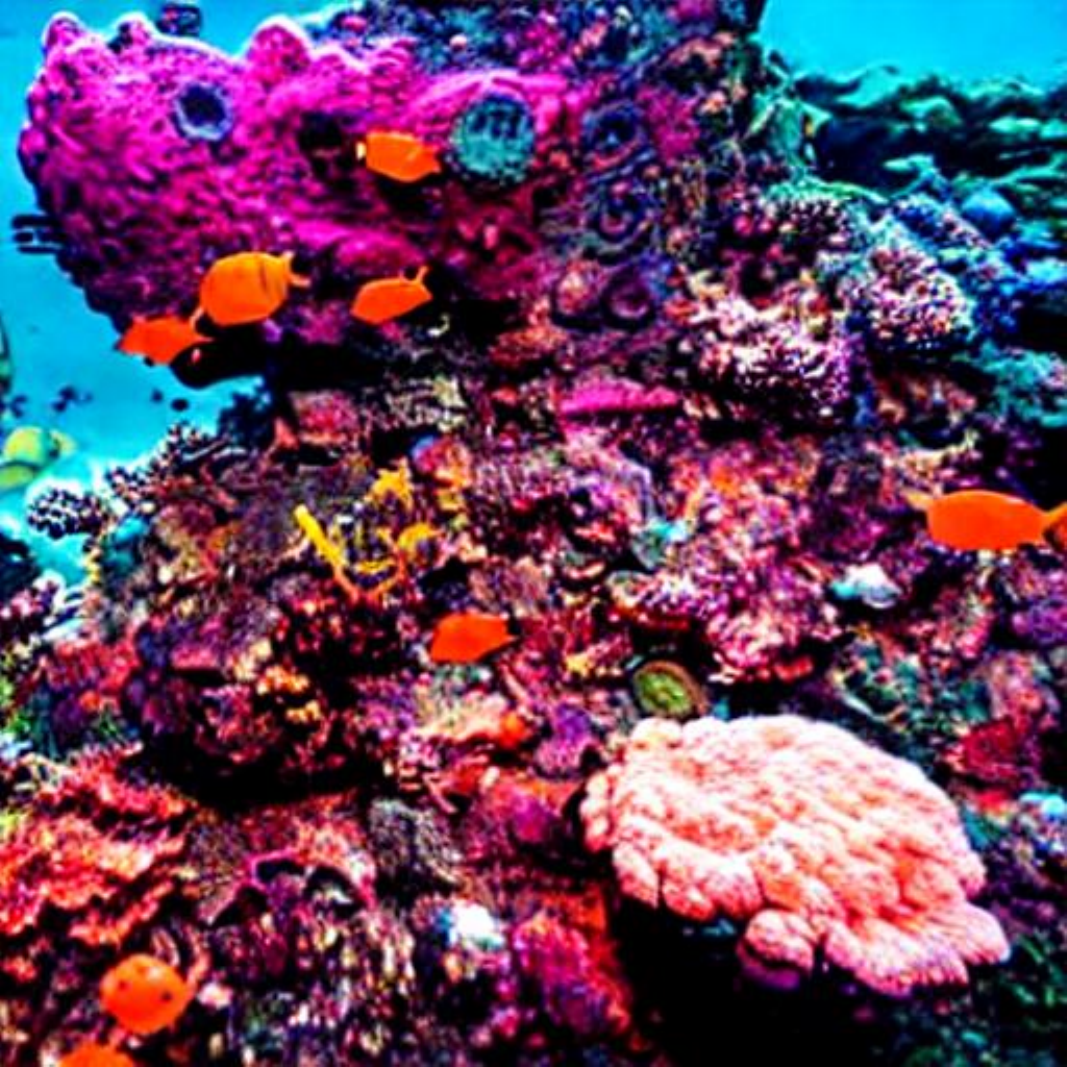} &
    \includegraphics[width=\imgw\textwidth]{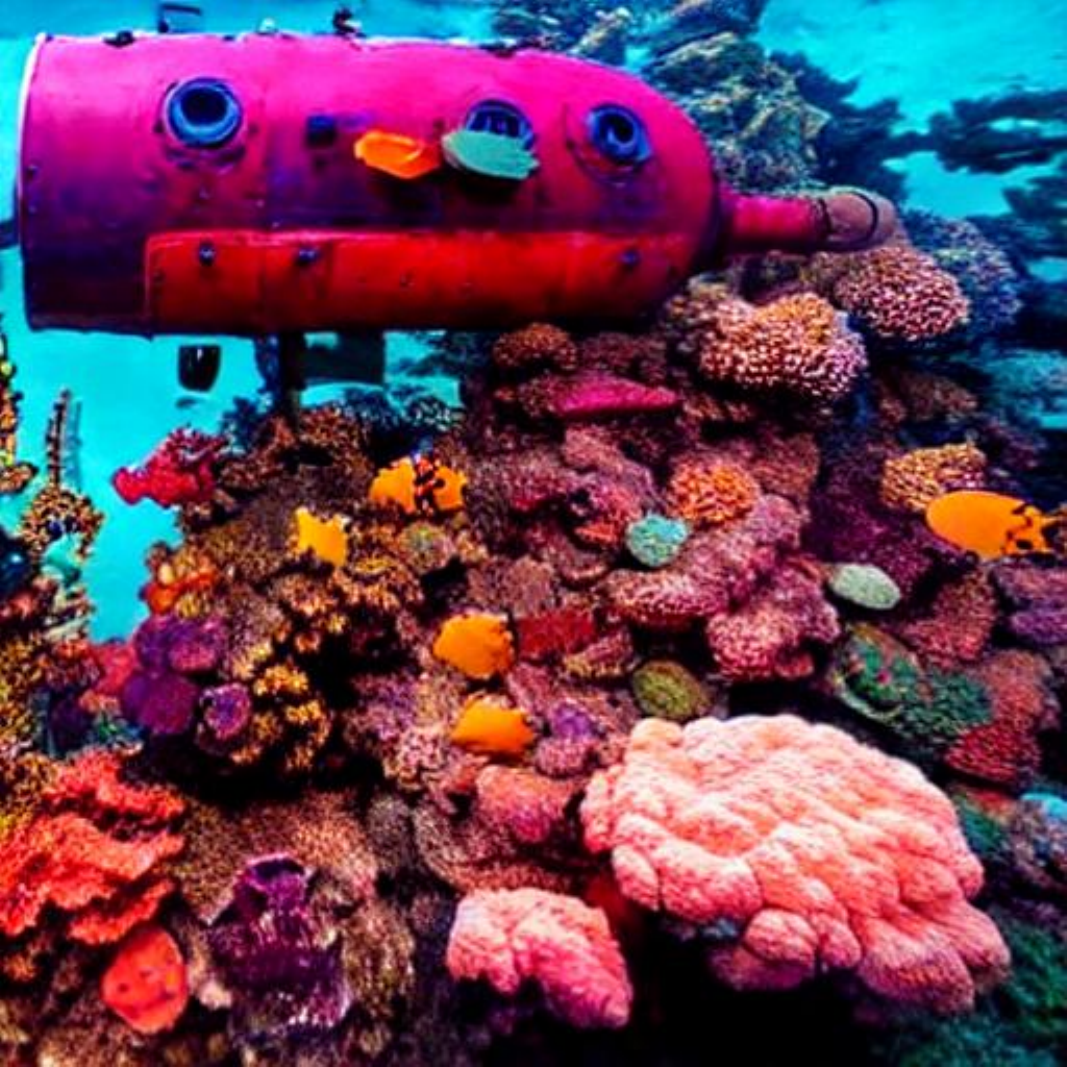} &
    \includegraphics[width=\imgw\textwidth]{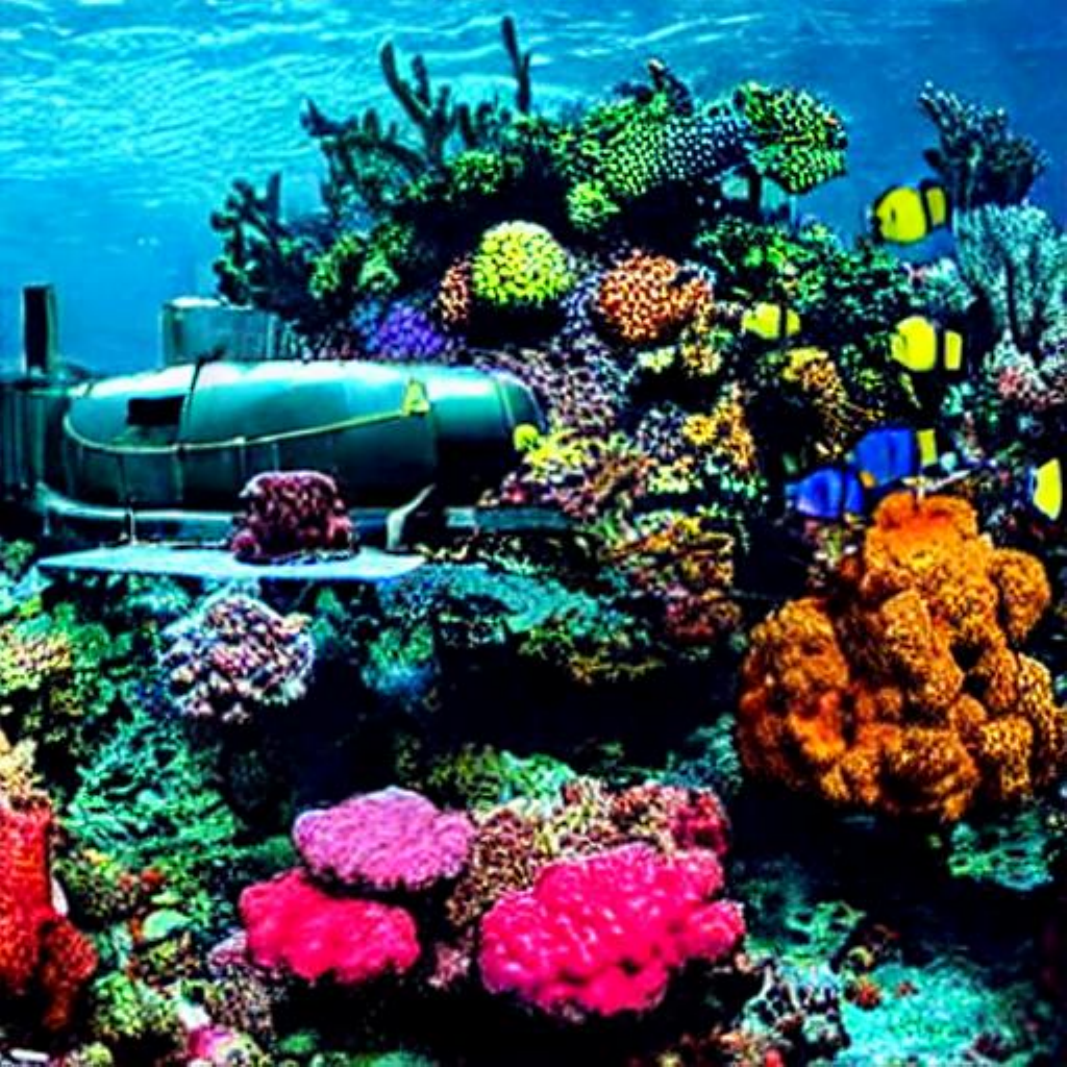} &
    \includegraphics[width=\imgw\textwidth]{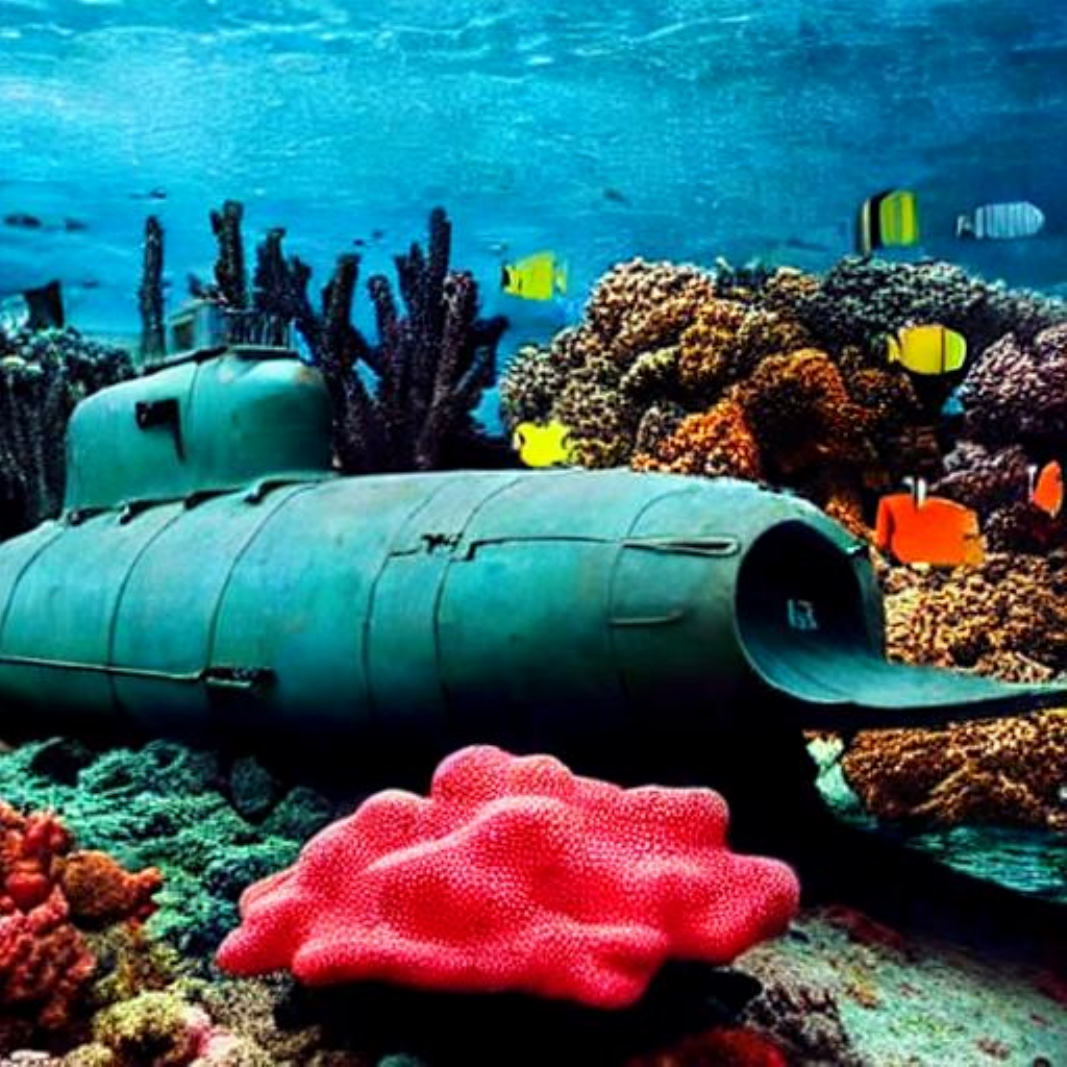} &
    \includegraphics[width=\imgw\textwidth]{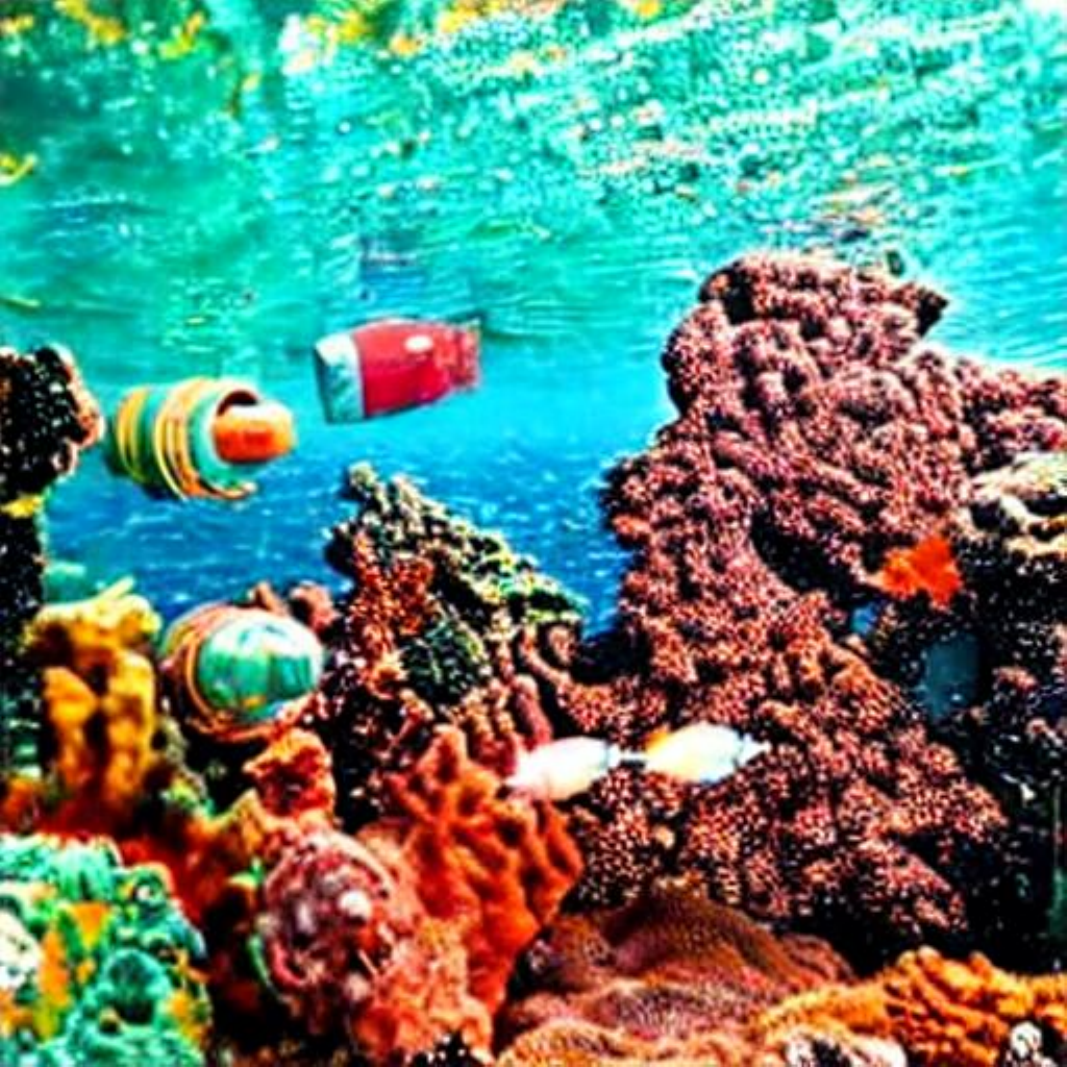} &
    \includegraphics[width=\imgw\textwidth]{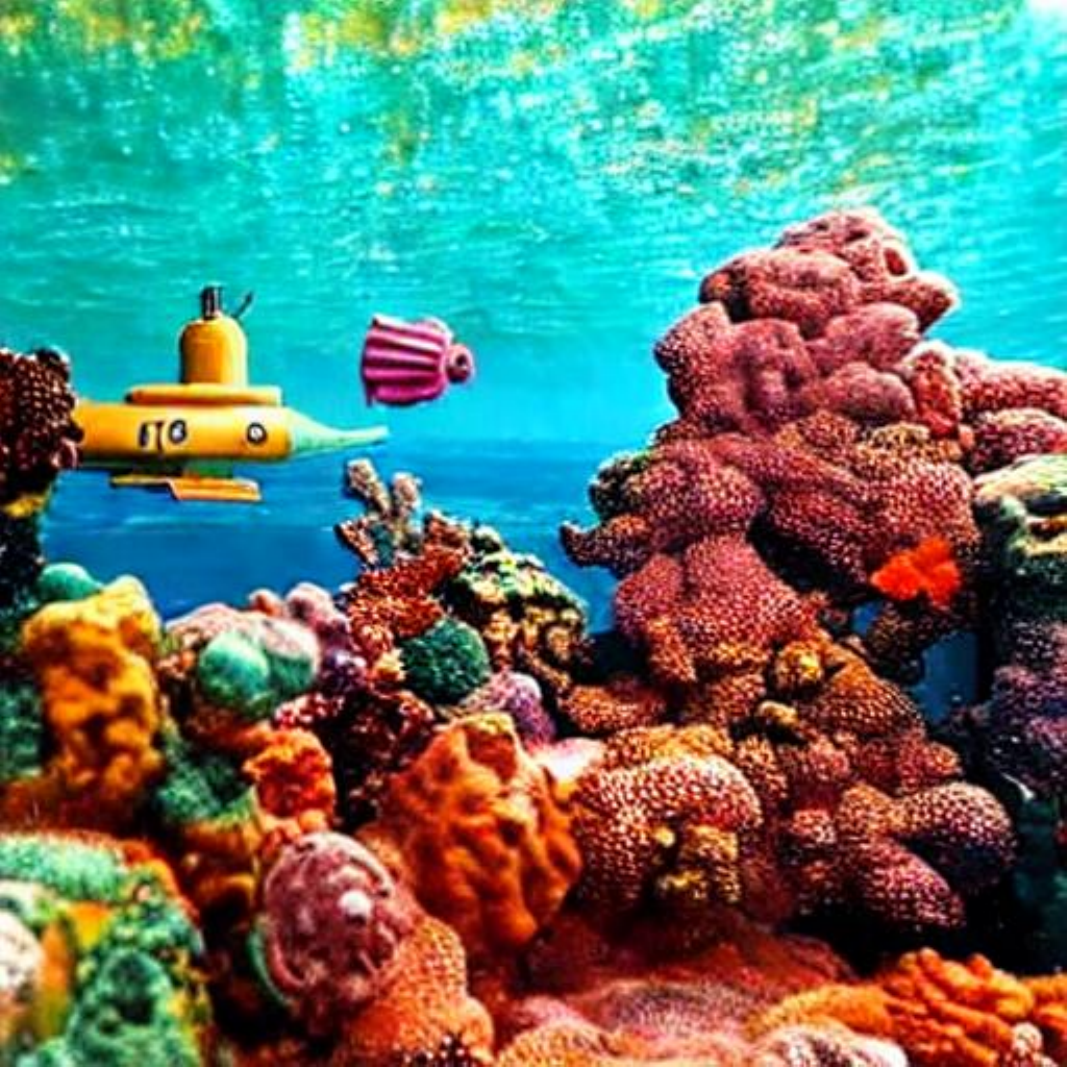} \\  
    % Insert Prompt Row 5
    \promptrow{A vintage submarine exploring a colorful coral reef.}

    % ---------------- Row 6 ----------------
    \includegraphics[width=\imgw\textwidth]{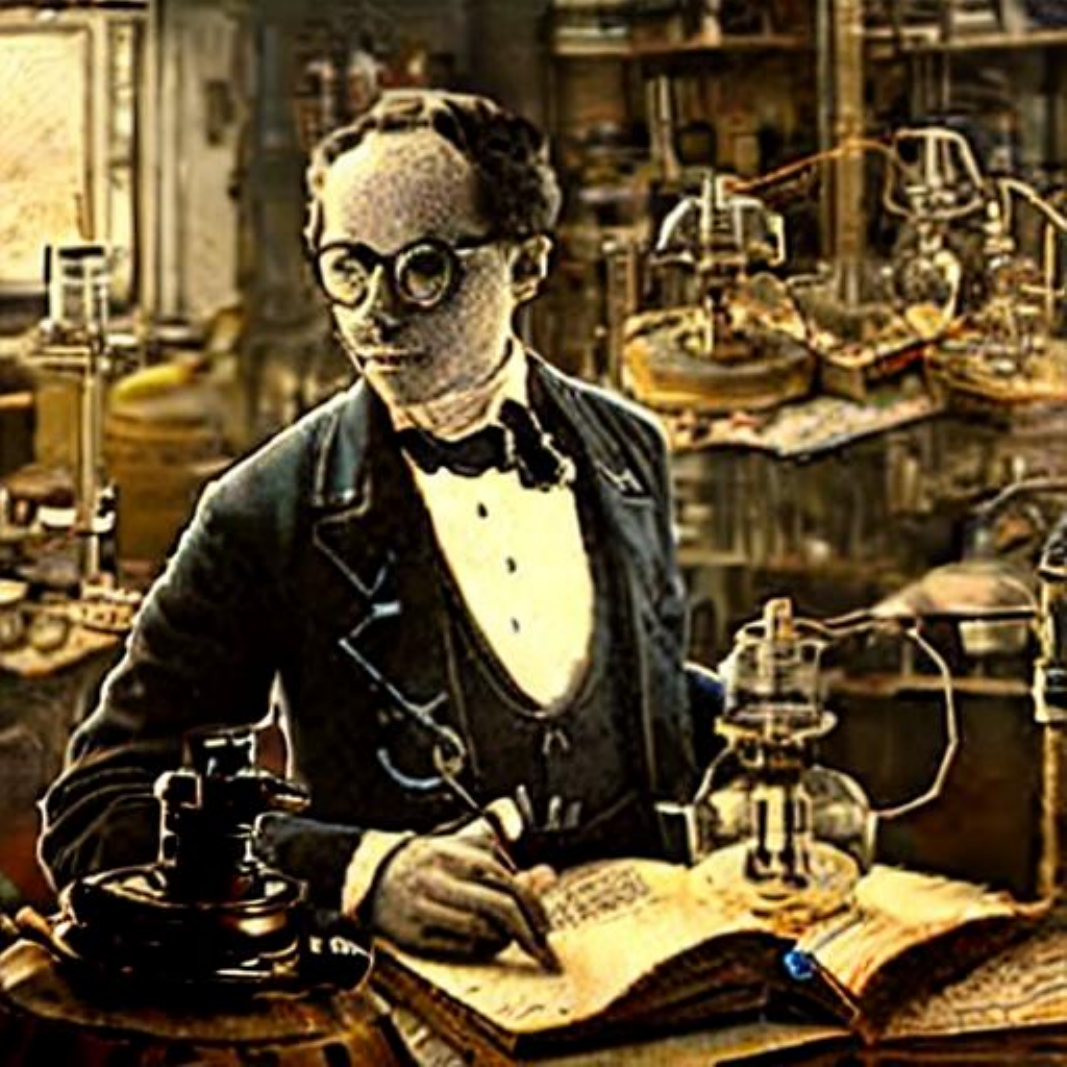} &
    \includegraphics[width=\imgw\textwidth]{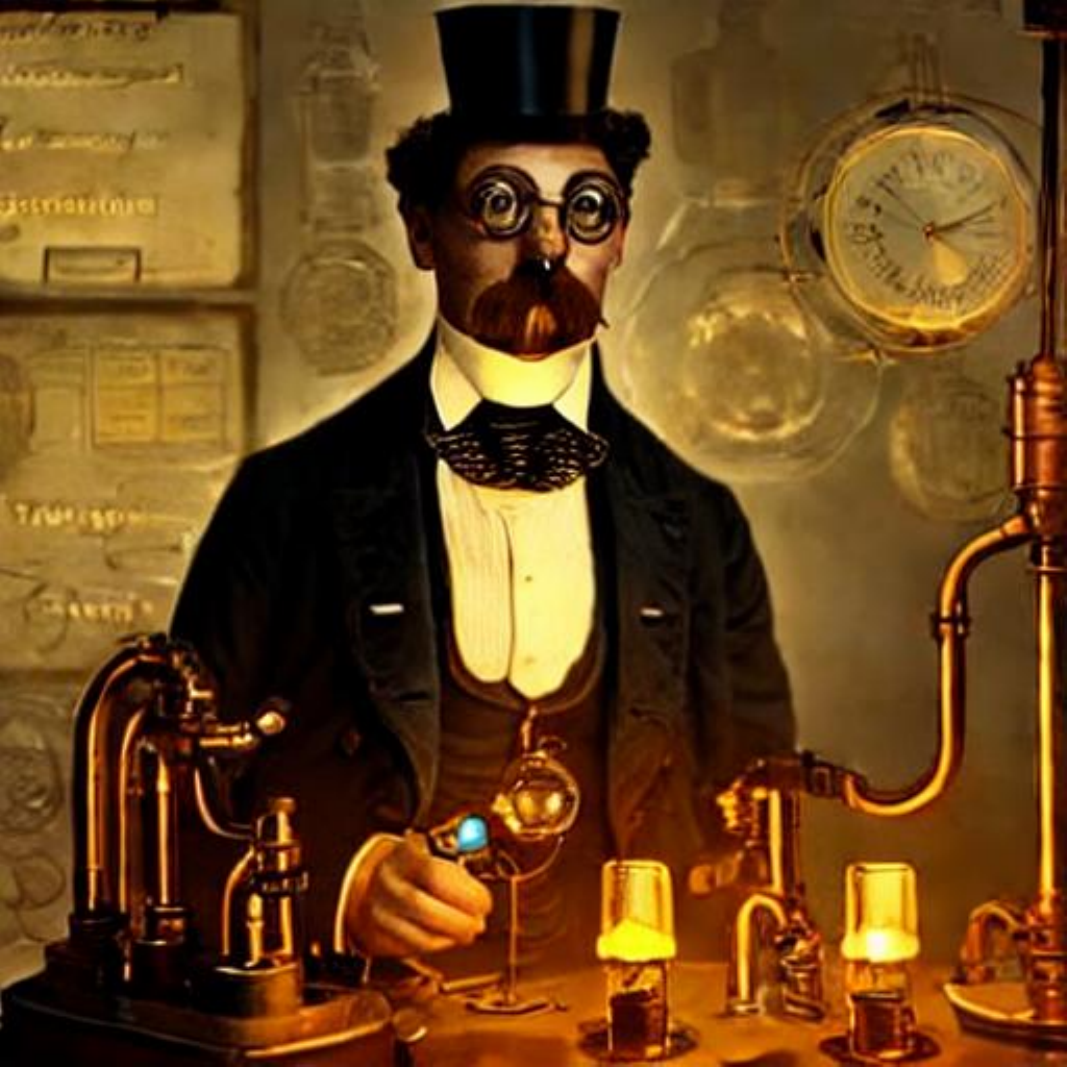} &
    \includegraphics[width=\imgw\textwidth]{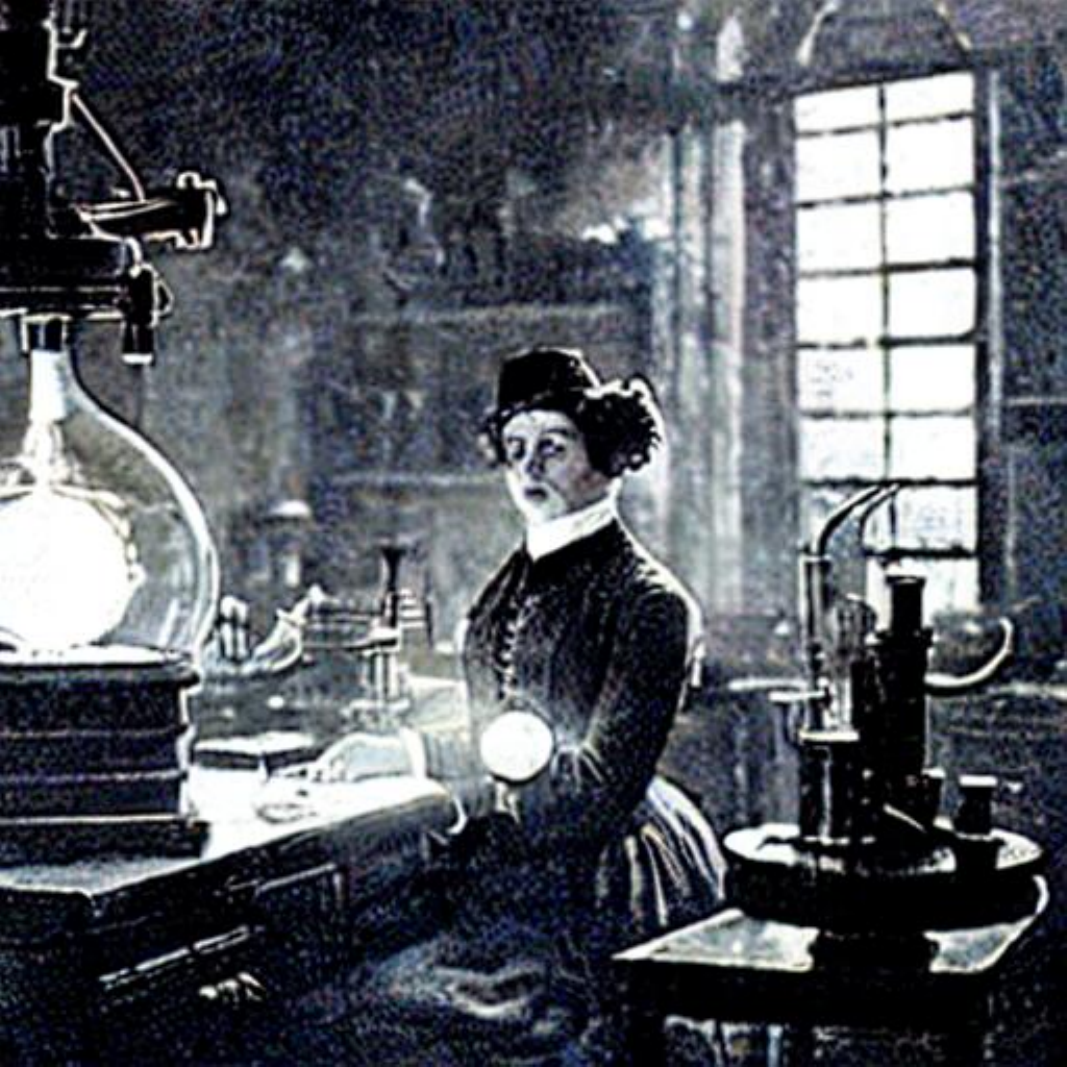} &
    \includegraphics[width=\imgw\textwidth]{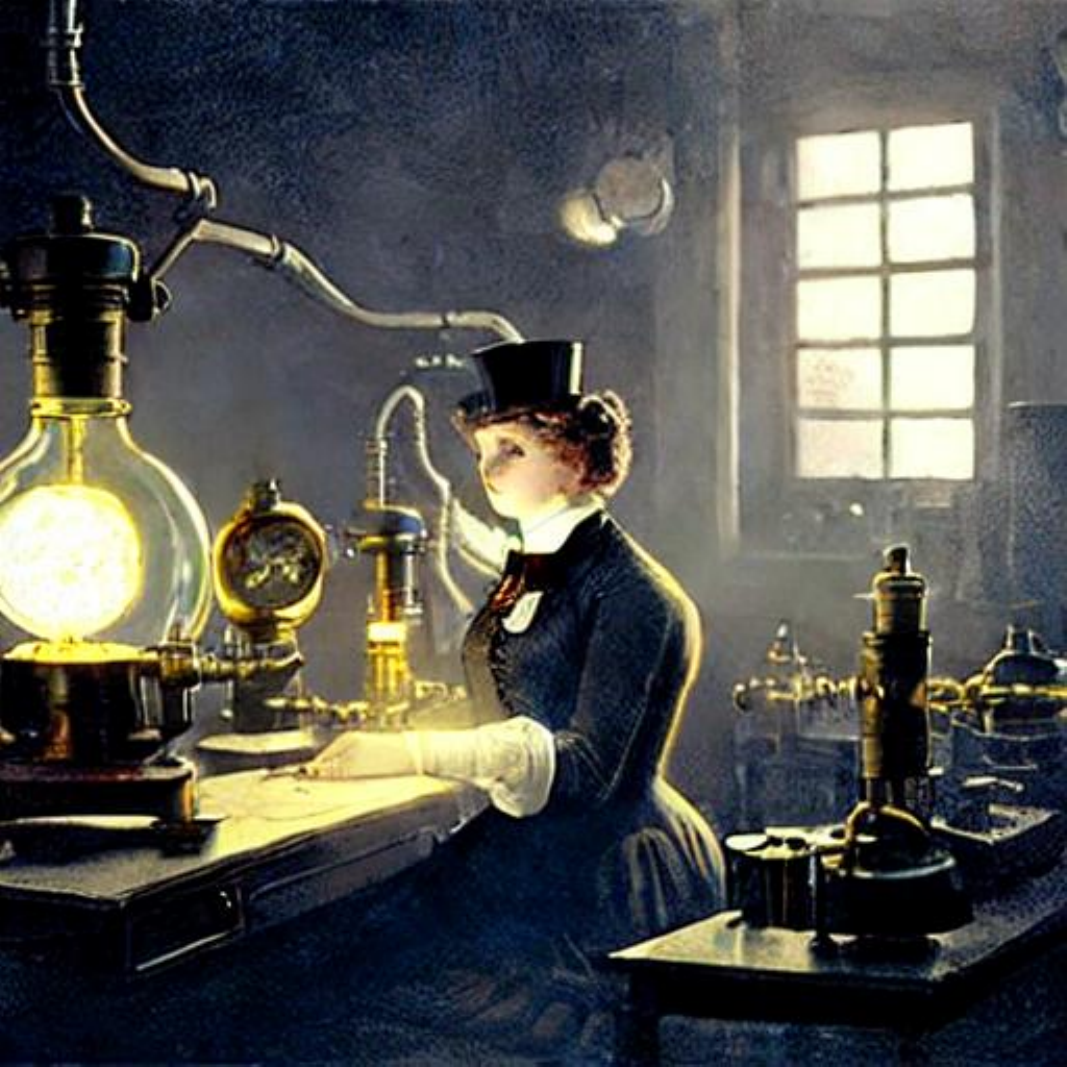} &
    \includegraphics[width=\imgw\textwidth]{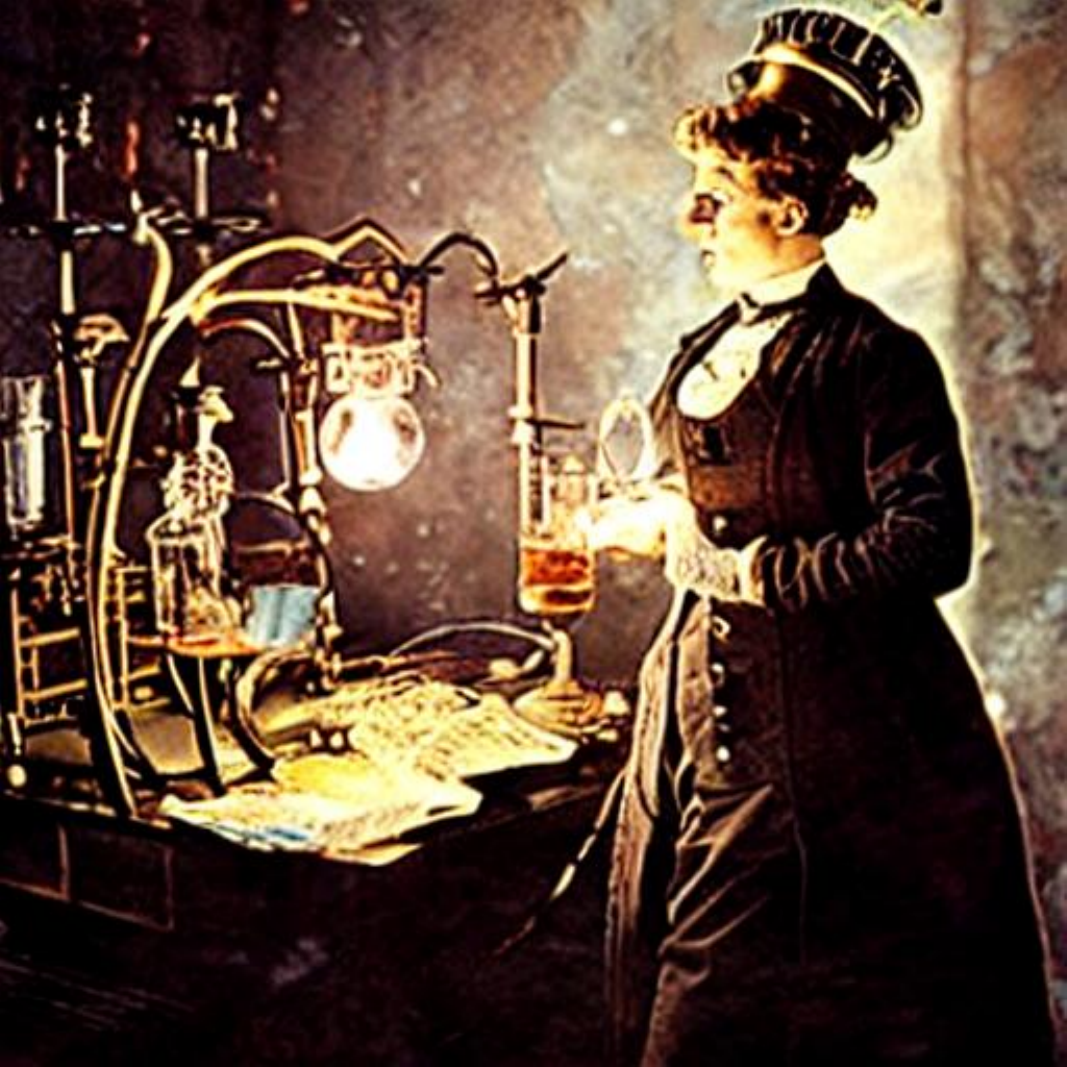} &
    \includegraphics[width=\imgw\textwidth]{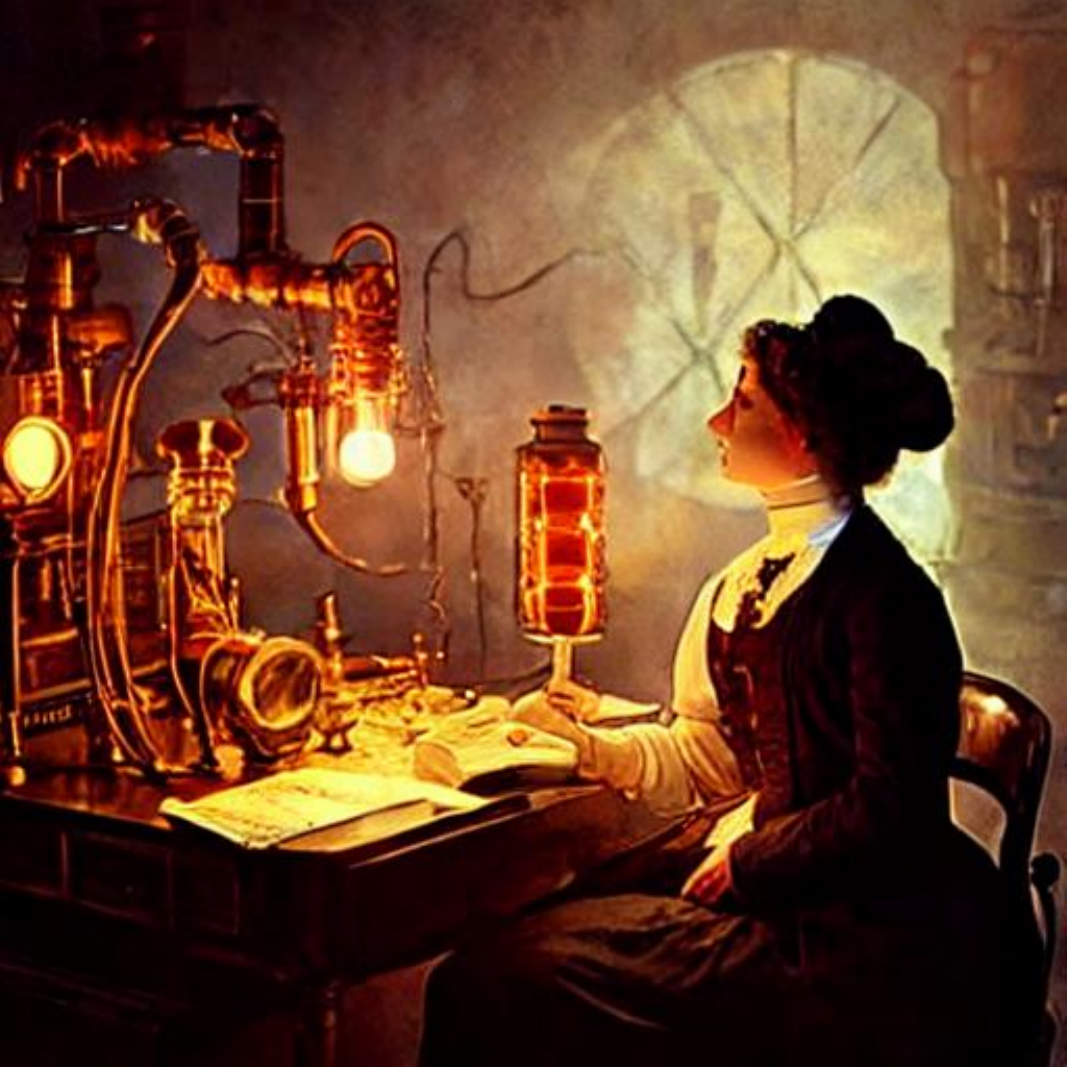} \\  
    % Insert Prompt Row 6
    \promptrow{A Victorian-era scientist working in a glowing steampunk laboratory.}
    
  \end{tabular}
    \vspace{-0.2cm}
  \caption{
    Uncurated paired samples coming from the Base model vs our Tilt Matching method.
  }
\end{figure*}

\clearpage

\section*{LLM Usage}
In preparing this paper, we used large language models (LLMs) as assistive tools. Specifically, LLMs were used for (i) editing and polishing the text for clarity and readability, and (ii) formatting matplotlib code. The authors take full responsibility for the content of this paper.

\end{document}